\newtheorem{proposition}{Proposition}
\newtheorem{corollary}{Corollary}
\newtheorem{lemma}{Lemma}
\newtheorem{remark}{Remark}
\newtheorem{definition}{Definition}
\newtheorem{example}{Example}
\newtheorem{conjecture}{Conjecture}
\let\svtodo\todo
\renewcommand\todo[1]{\svtodo[inline]{#1}}
\crefname{theorem}{theorem}{theorems}
\crefname{corollary}{corollary}{corollary}
\Crefname{corollary}{corollary}{corollary}
\newcommand{\ipg}{\textbf{ipg}} 
\newcommand{\upath}{\rho}
\newcommand{\induceP}{\rho_p}
\renewcommand{\citet}{\citep} 
\title[understanding solution space of causal discovery with unobserved confounding]{Addressing pitfalls in implicit unobserved confounding synthesis using explicit block hierarchical ancestral sampling
}
\author{Xudong Sun}
\affiliation{\department{Institute of AI for Health}\institution{Helmholtz Munich}\country{Germany}}
\author{Alex Markham}
\affiliation{\department{Department of Mathematical Sciences}\institution{University of Copenhagen}\country{Denmark}}
\author{Pratik Misra}
\affiliation{\department{Department of Mathematics}\institution{Technical University of Munich}\country{Germany}}
\author{Carsten Marr}
\affiliation{\department{Institute of AI for Health}\institution{Helmholtz Munich}\country{Germany}}
\begin{document}
\sloppy
\begin{abstract}
	Unbiased data synthesis is crucial for evaluating causal discovery algorithms in the presence of unobserved confounding, given the scarcity of real-world datasets.
A common approach, {\em implicit parameterization}, encodes unobserved confounding by modifying the off-diagonal entries of the idiosyncratic covariance matrix while preserving positive definiteness.
Within this approach, we identify that state-of-the-art protocols have two distinct issues that hinder unbiased sampling from the complete space of causal models:
first, we give a detailed analysis of use of diagonally dominant constructions restricts the spectrum of partial correlation matrices; 
and second, the restriction of possible graphical structures when sampling bidirected edges, unnecessarily ruling out valid causal models.
To address these limitations, we propose an improved explicit modeling approach for unobserved confounding, leveraging block-hierarchical ancestral generation of ground truth causal graphs. Algorithms for converting the ground truth DAG into ancestral graph is provided so that the output of causal discovery algorithms could be compared with.
We draw connections between implicit and explicit parameterization, prove that our approach fully covers the space of causal models, including those generated by the implicit parameterization, thus enabling more robust evaluation of methods for causal discovery and inference.


\end{abstract}

\maketitle

\section{Introduction}\label{sec:introduction}

Real-world datasets~\citep{sachs2002bayesian,chevalley2022causalbench} are often only partially observed and high-dimensional. Their scarcity and selection bias prevent a high fidelity evaluation of causal discovery algorithms which calls for inclusive data synthesis.

Recent studies have increasingly scrutinized methodological issues using synthetic data derived from causal directed acyclic graphs (DAGs) without confounding.
For example, \citet{rios2021benchpress} provide a comprehensive collection of learning algorithms, real and synthetic datasets, as well as evaluation metrics.
\citet{reisach2021beware} identify a common bias in synthetic data--one that learning algorithms can exploit---and propose solutions to mitigate inflated performance evaluations.
Additionally, \citet{gamella2022characterization} present synthetic and semi-synthetic data generation tools that enhance the robustness of empirical evaluations for learning algorithms.

However, there has been considerably less attention to methodological challenges related to data generation when unobserved (latent) confounding is present.
Existing protocols typically consider 
confounders to be root variables or source nodes in the graph \citep[see][]{tashiro2014parcelingam,bhattacharya2021differentiable,wangdrton23}. Except for the major pitfalls we will unfold, these protocols also suffer from issues such as overly homogeneous graph generation (e.g., a uniform edge occurrence distribution) and fixed weight ranges.

In this paper, we take an initial step on best practices towards bridging this gap by analyzing common pitfalls in unobserved confounding data synthesis in implicit parameterization space, draw connections between implicit and explicit confounder formulation, and proposing a more general data synthesis protocol\footnote{\url{https://github.com/marrlab/causalspyne}} that incorporates heterogeneous graph and data distributions.

Our major contributions are as follows:

\begin{itemize}
  \item We analyze the solution space of causal discovery with unobserved confounding~\cite{drton2010parametrization} and identify a widespread limitation in the implicit unobserved confounding synthesis protocol based on diagonally dominant construction of the idiosyncratic covariance matrix.
	      This construction restricts the spectrum of the partial correlation matrix and fails to cover the entire space of possible distributions.

    \item Similarly, we point out a common restriction on the possible graphical structures that excludes valid causal models.

	\item To address these limitations, we propose an improved explicit confounder synthesis protocol with a block-hierarchical data generation scheme enabling the generation of heterogeneous graphs.
	      By selecting hidden (unobserved) confounders according to their topological order, we compute the ancestral graphs---rather than using directly sampled ancestral graphs---as the ground truth for evaluating causal discovery algorithms.

	\item We draw attention to the fact that unobserved confounders need not be restricted to root vertices.
	      In particular, when hidden confounders are children of observables. 

	\item Additionally, we establish the connection between the implicit and explicit confounder synthesis schemes, and we discuss the parameter constraint one could introduce in different ground truth DAGs in the explicit scheme.
        \item We offered detailed proof for alternative formulation of partial correlation coefficients using orthogonal projections~\citep{antti1983connection}.
        \item We elaborated the connection between ancestral graph and inducing path graph terminologies.
\end{itemize}

\subsection{Paper structure}
In~\Cref{sec:notations}, we introduce the common notations used throughout this paper.
We present the basics of DAG encoding of CI (conditional independence) relationships in~\Cref{sec:dag_ci} and discuss the challenges posed by unobserved confounding in~\Cref{sec:dag_problem}.
This discussion naturally leads to concepts such as inducing path graphs, ancestral graphs and ancestral ADMGs (Acyclic Directed Mixed Graphs) in~\Cref{sec:ancestral_graph}.
Readers who are not familiar with these concepts are encouraged to review this section carefully.

In~\Cref{sec:sem}, we introduce the Structural Equation Model (SEM), presented with an implicit parameterization of unobserved confounding in~\Cref{sec:dgp_implicit_sem}.
We then discuss the state-of-the-art data generation protocol associated with this implicit formulation and address its inherent issues in~\Cref{sec:dgp_old_admg_protocol}.
The connection between explicit and implicit formulations is discussed in~\Cref{subsec:explicit_reformulate_implicit}.

We further examine the spectral limitations imposed by the diagonally dominant construction in~\Cref{sec:spectral_restrict_diag_dom}.

Finally, in~\Cref{sec:new_dgp}, we introduce our new block-hierarchical data synthesis and evaluation protocol, 
and conclude the paper in~\Cref{sec:conclusion}.

\section{Structural Equation Model with unobserved confounding}\label{sec:sem}
\subsection{Implicit Parameterization of Unobserved Confounding and the Joint \((W, \Omega)\) Space}
\label{sec:dgp_implicit_sem}

Implicit formulations of unobserved confounders are widely used in the literature~\citep{drton2010parametrization, bhattacharya2021differentiable, wangdrton23}; the main advantage of such approaches is that the number of parameters depends only on the number of observed variables~\citep{sullivant2023algebraic} and not on the number of unobserved variables.
In this section, we present a detailed structural equation model (SEM) formulation of implicit unobserved confounding.
We then provide a formal definition of implicit parameterization in~\Cref{def:implicit_unobserved_confounding} and introduce other essential basic concepts.

First, let \(L\) be a lower triangular matrix with zeros on the diagonal (thus it represents a given topological order), and let \(P\) be a permutation matrix.
Then the adjacency matrix \(W\) is constructed as
\begin{equation}
	W = P L P^T \label{eq:wplp}
\end{equation}
following \citet{mckay2003acyclic}.
In this formulation, the entry \(W_{(u,v)}\) corresponds to the edge from \(v\) to \(u\) (i.e., an edge into \(u\)), since the lower triangular structure of \(L\) ensures that if \(W_{(u,v)} \not=0\) then there is an arrow \(v \rightarrow u\).

The random vector $Y$ is defined by
\begin{equation}
	Y = W Y + \epsilon_Y, \label{eq:yeqwy}
\end{equation}
so that $Y$ can equivalently be expressed in terms of the idiosyncratic noise vector $\epsilon_Y$:
\begin{equation}
	Y = (I - W)^{-1} \epsilon_Y. \label{eq:yeqnoise}
\end{equation}
The covariance matrix of the idiosyncratic noise is given by
\begin{equation}
	\Omega := \mathbb{E}(\epsilon_Y \epsilon_Y^T), \label{eq:Omega_cov_idio}
\end{equation}
where a diagonal $\Omega$ corresponds to mutually independent idiosyncratic variables (i.e., the case of full observation or causal sufficiency). In contrast, nonzero symmetric off-diagonal entries of $\Omega$ encode unobserved confounding. Consequently, the covariance matrix of $Y$ is
\begin{equation}
  \Sigma := \mathbb{E}(Y Y^T) = {(I - W)}^{-1}\Omega {(I - W)}^{-T}, \label{eq:corr_y}
\end{equation}
which is a congruent transformation of $\Omega$~\citep{sullivant2023algebraic}.

\begin{remark}[Source Vertex]
	In \Cref{eq:wplp}, the lower triangular matrix $L$ guarantees the acyclicity of the graph by ensuring that its first row is entirely zero. Since permutation does not alter the fact that there is at least one zero row in $W$, this row corresponds to the root (or source) vertex (vertices) of the graph.
\end{remark}

\begin{definition}[Bidirected PD Cone~\citep{sullivant2023algebraic}]\label{def:pd_cone_bidirect}
	Define the bidirected positive definite (PD) cone as
	\begin{equation}
		\mathcal{C}_B = \{\Omega \mid \Omega \succ 0, \ \exists\, i \neq j \text{such that } \Omega_{i,j} = \Omega_{j,i} \neq 0\}, \label{eq:pd_cone_bidirect}
	\end{equation}
	i.e., the set of covariance matrices that have symmetric nonzero off-diagonal entries.
\end{definition}

\begin{definition}[Implicit Parameterization of Unobserved Confounding]\label{def:implicit_unobserved_confounding}
	By setting $\Omega \in \mathcal{C}_B$ in \Cref{eq:Omega_cov_idio} (see also \Cref{eq:pd_cone_bidirect}), we implicitly introduce confounding between variables $i$ and $j$ whenever $\Omega_{i,j} = \Omega_{j,i} \neq 0$.
\end{definition}

\begin{definition}[General Joint $(W, \Omega)$ Space]\label{def:joint_Omega_W_space}
	Define the joint space of the edge weight matrix $W$ and the idiosyncratic covariance $\Omega$ as
	\begin{equation}
		\mathcal{S}_B = \{ (W, \Omega) \mid \Omega \in \mathcal{C}_B, \text{and } W \text{represents a DAG} \}.
	\end{equation}
\end{definition}

\begin{definition}[Ancestral Restricted Joint $(W, \Omega)$ Space]\label{def:joint_Omega_W_space_admg}
	When sampling ancestral acyclic directed mixed graphs (ADMGs), it is often desirable to avoid almost-directed-cycles that can cause ambiguity in ancestral relationships. In particular, if variables $i$ and $j$ are confounded via an unobserved variable (i.e., $\Omega_{i,j} = \Omega_{j,i} \neq 0$), then for all $k$ we enforce
	$$[W^k]_{i,j} = [W^k]_{j,i} = 0,$$
	which guarantees that there is no ancestral relationship between $i$ and $j$. Thus, the corresponding restricted joint space is given by
	\begin{multline}
		\mathcal{A}_B = \Big\{ (W, \Omega) \,\Big|\,
		\Omega \in \mathcal{C}_B \text{ and } \forall\, i,j:\; \Omega_{i,j} = \Omega_{j,i} \neq 0\\
		\implies [W^{-1}]_{i,j} = [W^{-1}]_{j,i} = 0 \Big\}. \label{eq:pd_cone_bidirect_W_admg}
	\end{multline}
\end{definition}

\begin{proposition}\label{prop:admg_space_smaller_than_implicit_parameterization}
	It holds that $\mathcal{S}_B \setminus \mathcal{A}_B \neq \emptyset$, i.e., the ancestral restricted joint space $\mathcal{A}_B$ does not cover the entire joint space $\mathcal{S}_B$.
\end{proposition}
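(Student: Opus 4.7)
The plan is to prove the proposition by exhibiting a small explicit counterexample, which suffices since the statement is a non-emptiness claim. I would work with only $n = 2$ observed variables, set up a single directed edge between them, and simultaneously place a confounding relation on the same pair. This is precisely the almost-directed-cycle configuration that the ancestral restriction in $\mathcal{A}_B$ is designed to forbid, but which the more permissive $\mathcal{S}_B$ does not exclude, so the hard work of the proof is essentially choosing the minimal witness rather than producing any nontrivial estimate.

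Concretely, I would take $W$ with the single nonzero entry $W_{21} = a$ for some $a \neq 0$ and all other entries zero, so that $W$ encodes the DAG $1 \to 2$. For the idiosyncratic covariance I would take
\begin{equation}
\Omega = \begin{pmatrix} 1 & c \\ c & 1 \end{pmatrix}, \qquad 0 < |c| < 1,
\end{equation}
which is positive definite and has $\Omega_{12} = \Omega_{21} = c \neq 0$, so $\Omega \in \mathcal{C}_B$. Thus $(W, \Omega) \in \mathcal{S}_B$. To see that $(W, \Omega) \notin \mathcal{A}_B$, I would observe that $\Omega_{12} \neq 0$ triggers the ancestral restriction at the pair $(1,2)$, but $W_{21} = a \neq 0$ witnesses a directed path of length one from $1$ to $2$; equivalently, since $W$ is nilpotent, $(I-W)^{-1} = I + W$ has $[(I-W)^{-1}]_{21} = a \neq 0$. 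So the restriction fails and $(W, \Omega) \in \mathcal{S}_B \setminus \mathcal{A}_B$, proving the proposition.

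The main obstacle is not technical but interpretive: in \Cref{def:joint_Omega_W_space_admg} the prose version of the restriction is phrased through the entries of $W^k$ for all $k$, while the displayed formula uses $W^{-1}$; these are compatible if one reads the latter as shorthand for the transitive-closure matrix $(I - W)^{-1} = \sum_{k \geq 0} W^k$, and the witness above violates both readings simultaneously, so the conclusion is robust to the exact convention. Beyond this, the argument is immediate once the right configuration is singled out, and the example can be embedded in any larger DAG by padding $W$ with zero rows and columns and extending $\Omega$ block-diagonally with an identity block on the new coordinates, which shows that the gap $\mathcal{S}_B \setminus \mathcal{A}_B$ is in fact substantial rather than a degenerate low-dimensional artefact.
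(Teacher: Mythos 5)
Your proposal is correct and takes essentially the same approach as the paper: the paper's own proof simply observes that $\mathcal{S}_B$ contains configurations where some pair $(i,j)$ has both $\Omega_{i,j}\neq 0$ and a directed connection $[W^k]_{i,j}\neq 0$, and that these violate the ancestral restriction. Your version is strictly more complete, since you exhibit an explicit two-variable witness (the paper only asserts that such configurations "may" occur), and your remark about the mismatch between the $W^k$ phrasing and the $[W^{-1}]$ notation in the definition of $\mathcal{A}_B$ correctly identifies and resolves a genuine notational slip in the paper.
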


\begin{proof}
	Note that the general joint parameterization in $\mathcal{S}_B$ may include cases where for some $i$ and $j$ we have $\Omega_{i,j} \neq 0$ and also $[W^k]_{i,j} \neq 0$ (or $[W^k]_{j,i} \neq 0$) for some $k$. Such configurations do not satisfy the ancestral restriction imposed in $\mathcal{A}_B$, hence $\mathcal{S}_B \setminus \mathcal{A}_B \neq \emptyset$.
\end{proof}

\begin{remark}
	For elements in $\mathcal{S}_B \setminus \mathcal{A}_B$, a nonzero off-diagonal entry $\Omega_{i,j} \neq 0$ may correspond to a directed edge between $i$ and $j$ in the underlying ADMG rather than a purely bidirected edge.
\end{remark}
\begin{definition}[trek]
	A simple trek in DAG $D$ between vertex $u,v$ with top $k$: $trek(u, v;k|D)$ is an unordered pair of directed path ($\rho(k, u), \rho(k,v)$), such that $\rho(k,u)$ sinks at $u$, $\rho(k,v)$ sinks at $v$, $\rho(k,u)\cap \rho(k,v)=k$ and $k$ is the common source of both path which is called top, which can be a pair of confounded vertices $t_1,t_2$ or a single vertex $t_1=t_2$.
\end{definition}

\begin{definition}[trek monomial]\label{def:trek_monomial}
	\begin{equation}
		m_{trek(u,v;t_1,t_2|D)}=\Omega_{t_1,t_2}\prod_{s_1, s_2\in~trek(u,v;t_1,t_2|D)} W_{s_1,s_2}
	\end{equation}
\end{definition}
\begin{restatable}{proposition}{LabelRestatableTrekRule}[trek rule]\label{prop:trek_rule}
	Let $T(u,v|D)$ represent the set of treks between $u,v$.
	\begin{align}
		\Sigma       & :={(I-W)}^{-T}\Omega{(I-W)}^{-1}                      \\
		\Sigma_{u,v} & = \sum_{trek(u,v;k|D) \in T(u,v|D)} m_{trek(u,v;k|D)}
	\end{align}
\end{restatable}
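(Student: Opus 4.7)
The plan is to expand $(I-W)^{-1}$ as a Neumann series and match entries combinatorially with trek monomials. Since $W$ encodes a DAG via the factorisation $W = PLP^T$ with $L$ strictly lower triangular, $W$ is nilpotent, so $(I-W)^{-1} = \sum_{k \ge 0} W^k$ is a finite polynomial in $W$; the series terminates at the length of the longest directed path in $D$.

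First I would establish the path-counting interpretation of $W^k$. Under the paper's convention that $W_{u,v}\neq 0$ encodes an edge $v\to u$, the entry $[W^k]_{u,a} = \sum_{v_1,\dots,v_{k-1}} W_{u,v_1} W_{v_1,v_2}\cdots W_{v_{k-1},a}$ runs over all directed paths $a\to v_{k-1}\to\cdots\to v_1\to u$ of length $k$ in $D$, weighted by the product of edge weights. Summing over $k$ gives
\begin{equation*}
[(I-W)^{-1}]_{u,a} \;=\; \sum_{\upath\,:\,a\to u}\; \prod_{(s,t)\in \upath} W_{t,s},
\end{equation*}
i.e.\ the path-polynomial from $a$ to $u$, with the empty path contributing $1$ when $a=u$.

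Next I would expand $\Sigma_{u,v}$ by direct matrix multiplication using $\Sigma=(I-W)^{-1}\Omega(I-W)^{-T}$ from \Cref{eq:corr_y} (transposing to match the form written in the statement changes nothing substantive):
\begin{equation*}
\Sigma_{u,v} \;=\; \sum_{t_1,t_2}\; [(I-W)^{-1}]_{u,t_1}\; \Omega_{t_1,t_2}\; [(I-W)^{-1}]_{v,t_2}.
\end{equation*}
Substituting the path interpretation factors each term as a weighted directed path from $t_1$ to $u$, times $\Omega_{t_1,t_2}$, times a weighted directed path from $t_2$ to $v$. When $t_1=t_2=k$ this yields a trek with a single-vertex top and the factor $\Omega_{k,k}$; when $t_1\neq t_2$ with $\Omega_{t_1,t_2}\neq 0$ it yields the confounded-top case. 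Each such product is exactly the trek monomial $m_{trek(u,v;t_1,t_2|D)}$ of \Cref{def:trek_monomial}.

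The main subtlety—and the step that will require most care—is reconciling this expansion with the word \emph{simple} in the paper's trek definition, which restricts $\rho(k,u)\cap\rho(k,v)$ to $\{k\}$. The Neumann expansion naturally enumerates \emph{all} ordered pairs of directed paths with prescribed sources, including pairs that share intermediate vertices beyond the top. I would therefore interpret $T(u,v|D)$ as the set of all treks with the convention that the top is the uppermost common source, so that each ordered pair of paths is assigned to a unique trek; alternatively, if \emph{simple} is meant literally, I would verify by induction on path length that non-simple contributions re-index into simple treks with a different top, absorbing the shared prefix into the trek weight. Either way, the terms on the two sides of the claimed identity pair up without loss or duplication, completing the proof.
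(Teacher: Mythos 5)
Your proposal follows essentially the same route as the paper's proof: expand $(I-W)^{-1}$ as a Neumann series (finite by nilpotency of $W$), expand $\Sigma_{u,v}$ entrywise, and identify each term $[W^{r_1}]_{u,t_1}\,\Omega_{t_1,t_2}\,[(W^T)^{r_2}]_{t_2,v}$ with a trek monomial. The only place you go beyond the paper is in flagging the tension between the expansion (which enumerates \emph{all} ordered pairs of directed paths into $u$ and $v$) and the word \emph{simple} in the trek definition; the paper's proof silently identifies the two, so your re-indexing remark is a point of extra care rather than a deviation.
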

\begin{proof}
	See~\Cref{proof:trek_rule}.
\end{proof}
\subsection{Problems in the State-of-the-Art Unobserved Confounding Synthesis Protocol}\label{sec:dgp_old_admg_protocol}

From the perspective of acyclic directed mixed graphs (ADMGs), the current state-of-the-art unobserved confounding synthesis approach generates (ancestral) ADMGs first and then simulates data based on the generated graphs.
We outline here some key aspects of the protocol and point out several limitations.

\paragraph{Ancestral ADMG Generation}

To generate an ancestral ADMG, the protocol first uniformly samples directed edges from a lower triangular matrix with a specified probability.
Then, bidirected edges are added only among pairs of observable vertices that do not share an ancestral relationship \citep{wangdrton23}.
This constraint, which is critical to prevent the formation of almost-directed cycles \citep{zhang2008causal}, can result in too few bidirected edges—thus, eliminating many potentially valid configurations from the ancestral ADMG space.

However, this constraint neglects the possibility that a pair of observables, say $u$ and $v$, may have an ancestral relationship (e.g., $u\in \operatorname{an}(v)$) while still being confounded by another variable $\xi$.
In such cases, the ancestral graph does not feature a bidirected edge between $u$ and $v$, although confounding is present.
In general, ancestral graphs need not contain bidirected edges at all (see~\Cref{fig:dag2ancestra_vs_fci_intermediate}).
Although in some sampling realizations $u$ and $v$ could be confounded by $\xi$, the constraint imposed on the conditional distribution $P(\Omega \mid W)$---with $W$ already dictating the ancestral relation between $u$ and $v$---precludes many valid ancestral ADMGs.
This issue is stated formally in~\Cref{prop:admg_space_smaller_than_implicit_parameterization}.

\paragraph{Positive Semidefiniteness of $\Omega$}

To ensure that the covariance matrix $\Omega$ is positive semidefinite (p.s.d.), the protocol enforces a condition such as~\citep{wangdrton23}:
\[
	\Omega_{ii} = 1 + \sum_{j \neq i} \lvert \Omega_{i,j} \rvert,
\]
where $\Omega_{i,j} = \Omega_{j,i}\neq 0$ represent the off-diagonal entries that model the confounding between vertex $i$ and $j$.
As shown in~\Cref{sec:spectral_restrict_diag_dom}, such constructions considerably restrict the spectrum of $\Omega$ and do not cover the full p.s.d.\ cone described in~\Cref{def:pd_cone_bidirect} for observable covariances.

\paragraph{Uniform Edge Sampling}

A further issue involves the use of uniform edge sampling, which yields homogeneous graph structures.
Such a uniform sampling scheme is unlikely to generate heterogeneous graphs such as those studied in~\citep{tashiro2014parcelingam, wangdrton23}.
Additionally, it is common to restrict the edge weights to a fixed interval (e.g., $W_{i,j}\in[-0.5,0.5]$ as in~\citep{wangdrton23}).
As we demonstrate in~\Cref{thm:spectrum_par_corr}, such constraints contribute to a narrowing of the spectrum of the resulting partial correlation matrix.

In summary, these issues suggest that the current protocol for synthesizing unobserved confounding exhibits significant limitations, particularly in its restriction of feasible ADMG structures and in its treatment of the covariance matrix $\Omega$.
Future work should consider refining these aspects to better capture the diversity encountered in real-world networks.

\subsection{Explicit Reformulation of the Implicit Parameterization of Unobserved Confounders in SEM}\label{subsec:explicit_reformulate_implicit}
In constrast to the implicit parameterization of unobserved confounders, we can also model the unobserved confounding explicitly in a SEM (structral equation model) associated with a DAG. In this seciton, we draw connections of the two via providing an explicit reformulation of the unobserved confounding model.

Let the system of variables $Y^{'}$ be composed of an observable subset $Y_O=Y$, as in~\Cref{eq:root_confounder_data_gen}, whose indices form the set $J_O$, and an unobserved (or unmeasured) subset $Y_U=\xi$, indexed by the set $J_U$.

\begin{restatable}{lemma}{LabelRestatableExplicitPD}\label{prop:explicit_always_psd}
	Let $\xi \in \mathbb{R}^{|J_U|}$ be a random vector with mutually independent components, let $\epsilon_O \in \mathbb{R}^{|J_O|}$ be another random vector with mutually independent components, and assume that $\xi \perp \epsilon_O$.
	Further, let $\Lambda \in \mathbb{R}^{|J_O|\times |J_U|}$ be a matrix.
	Then, the matrix
	\begin{equation}\label{eq:explicit_always_psd}
		\Omega = \Lambda\,\mathbb{E}(\xi\xi^T)\,\Lambda^T + \mathbb{E}(\epsilon_O\,\epsilon_O^T)
	\end{equation}
	is positive definite.
\end{restatable}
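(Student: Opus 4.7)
The plan is to exploit the additive structure of $\Omega$ and invoke the elementary fact that the sum of a positive semidefinite matrix and a positive definite matrix is positive definite. Concretely, I would split the proof into analyzing the two summands of \Cref{eq:explicit_always_psd} separately and then combining them via the quadratic form test.

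First, I would observe that because the components of $\xi$ are mutually independent, the matrix $\mathbb{E}(\xi \xi^T)$ is diagonal with entries $\mathrm{Var}(\xi_k)\ge 0$, so it is positive semidefinite. Consequently, for any vector $v\in\mathbb{R}^{|J_O|}$, setting $w=\Lambda^T v$ gives
\begin{equation*}
    v^T\,\Lambda\,\mathbb{E}(\xi\xi^T)\,\Lambda^T v \;=\; w^T\,\mathbb{E}(\xi\xi^T)\,w \;=\; \sum_{k\in J_U}\mathrm{Var}(\xi_k)\,w_k^2 \;\ge\; 0,
\end{equation*}
which shows the first summand is p.s.d.\ regardless of the rank or dimensions of $\Lambda$.

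Second, by the same diagonal argument applied to the independent components of $\epsilon_O$, the matrix $\mathbb{E}(\epsilon_O\epsilon_O^T)$ is diagonal with strictly positive diagonal entries (under the standard nondegeneracy assumption $\mathrm{Var}((\epsilon_O)_i)>0$ for each $i\in J_O$, implicit in the SEM formulation), hence it is positive definite. Adding these two gives, for every nonzero $v$,
\begin{equation*}
    v^T \Omega v \;=\; v^T\,\Lambda\,\mathbb{E}(\xi\xi^T)\,\Lambda^T v \;+\; v^T\,\mathbb{E}(\epsilon_O\epsilon_O^T)\,v \;>\; 0,
\end{equation*}
since the first term is nonnegative and the second is strictly positive. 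This establishes positive definiteness of $\Omega$.

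The main (and rather mild) obstacle is simply making the nondegeneracy of the idiosyncratic noise explicit: the statement of the lemma only says the components of $\epsilon_O$ are mutually independent, which technically allows degenerate (zero-variance) components. I would either fold $\mathrm{Var}((\epsilon_O)_i)>0$ into the hypotheses or weaken the conclusion to positive semidefiniteness and note that under the usual SEM convention the stronger form holds. Note also that the mutual independence $\xi\perp\epsilon_O$ is not actually used to prove positive definiteness of this particular matrix; it is a modeling assumption that justifies why \Cref{eq:explicit_always_psd} is the covariance of $\Lambda\xi+\epsilon_O$ in the first place, and hence belongs to the motivating discussion rather than to the linear-algebraic core of the proof.
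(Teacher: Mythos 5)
Your proof is correct, and it is in fact more careful than the paper's own argument at the one place where care is needed. The paper uses the same additive decomposition but claims that $\Lambda\,\mathbb{E}(\xi\xi^T)\,\Lambda^T \succ 0$ by appealing to Sylvester's law of inertia, reasoning that a congruent transformation preserves definiteness. That step is only valid when the congruence matrix is square and invertible; here $\Lambda$ is a general $|J_O|\times|J_U|$ matrix, so when $|J_U|<|J_O|$ (the typical case of few confounders) the product $\Lambda\,\mathbb{E}(\xi\xi^T)\,\Lambda^T$ has rank at most $|J_U|$ and is only positive \emph{semi}definite. The paper then concludes via ``the sum of two positive definite matrices is positive definite,'' which rests on that faulty intermediate claim. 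Your version treats the first summand correctly as merely p.s.d.\ via the quadratic-form substitution $w=\Lambda^T v$, and derives strict positivity of $v^T\Omega v$ entirely from the second summand; this is the right argument and it holds for arbitrary rank and dimensions of $\Lambda$. Your two side remarks are also apt: the nondegeneracy $\operatorname{Var}((\epsilon_O)_i)>0$ is genuinely needed and is only implicit in the paper's phrase ``positive diagonal matrix,'' and the independence $\xi\perp\epsilon_O$ plays no role in the linear-algebraic claim itself --- it only justifies the absence of a cross term $\Lambda\,\mathbb{E}(\xi\epsilon_O^T)+\mathbb{E}(\epsilon_O\xi^T)\,\Lambda^T$ when \eqref{eq:explicit_always_psd} is interpreted as the covariance of $\Lambda\xi+\epsilon_O$.
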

\begin{proof}
See~\Cref{proof_prop:explicit_always_psd}.
\end{proof}

\begin{remark}
  Since identical diagonal matrices perturbs the eigenvalues of a matrix by the diagonal value itselfwithout changing the eigenvectors, due to continuity, the form
	\[
		\Lambda\,\mathbb{E}(\xi\xi^T)\,\Lambda^T+\mathbb{E}(\epsilon_O\,\epsilon_O^T)
	\]
	spans the space of all symmetric positive definite matrices of size $|J_O|\times |J_O|$.
\end{remark}

\begin{restatable}{proposition}{LabelRestatableExplicitDGP}[Explicit Reformulation of Implicit Parameterization]\label{prop:explicit_dgp}
  Suppose the observable variables $Y_O$ are generated according to an inter-observable causal structure defined by the adjacency matrix $W_o=W$ where $W$ in~\Cref{eq:corr_y}, while the effect of unobserved confounding is captured by the matrix $\Lambda$.
	In addition, assume there is an idiosyncratic error vector $\epsilon_O$, which is mutually independent of $\xi$ (see~\citep{shimizu2006linear, hoyer2008estimation}).
	Then the data-generating process is given by
	\begin{align}
		Y_O = \;               & W_oY_O + \Lambda\,\xi + \epsilon_O, \label{eq:oeqbolambdaxi}                                                                                             \\[1mm]
		\forall\, u \in J_U,\; & \exists\, v_1,\, v_2 \in J_O,\, v_1 \neq v_2 \quad\text{such that}\quad \Lambda_{v_1,u}\neq 0,\, \Lambda_{v_2,u}\neq 0, \label{eq:constraint_mat_Lambda}
	\end{align}
	where condition~\eqref{eq:constraint_mat_Lambda} requires every unobserved confounder $\xi_u$ to affect at least two observables.
	Then, equations~\eqref{eq:oeqbolambdaxi} and \eqref{eq:constraint_mat_Lambda} can be equivalently reformulated via the implicit formulation in \Cref{eq:wplp,eq:corr_y}, with the covariance matrix $\Omega$, defined in \Cref{eq:Omega_cov_idio}, having nonzero off-diagonal entries.
\end{restatable}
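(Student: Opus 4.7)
The plan is to perform a direct substitution that collapses the explicit SEM with latent confounders into the implicit SEM, and then to identify the resulting noise covariance with the $\Omega$ appearing in \Cref{eq:Omega_cov_idio,eq:corr_y}. Concretely, starting from \Cref{eq:oeqbolambdaxi}, I would define a composite idiosyncratic noise
\[
\epsilon_Y \;:=\; \Lambda\,\xi + \epsilon_O,
\]
so that \Cref{eq:oeqbolambdaxi} immediately rewrites as $Y_O = W_o Y_O + \epsilon_Y$, which is of the same algebraic form as \Cref{eq:yeqwy}. Setting $W := W_o$ therefore yields the reduced-form expression \(Y_O = (I - W)^{-1}\epsilon_Y\), matching \Cref{eq:yeqnoise}. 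This step is essentially bookkeeping, requiring only that $W_o$ encodes a DAG so $(I-W)$ is invertible.

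Next I would compute $\Omega = \mathbb{E}(\epsilon_Y\epsilon_Y^T)$ by expanding
\[
\mathbb{E}\bigl[(\Lambda\xi + \epsilon_O)(\Lambda\xi + \epsilon_O)^T\bigr]
= \Lambda\,\mathbb{E}(\xi\xi^T)\,\Lambda^T + \mathbb{E}(\epsilon_O\epsilon_O^T),
\]
where the two cross terms vanish because $\xi \perp \epsilon_O$ and both have zero mean. The resulting expression is precisely the matrix in \Cref{eq:explicit_always_psd}, so positive definiteness of $\Omega$ follows at once from \Cref{prop:explicit_always_psd}. Substituting back, the observable covariance $\Sigma = (I-W)^{-1}\Omega(I-W)^{-T}$ agrees with \Cref{eq:corr_y}, confirming that the observable law induced by the explicit formulation coincides with that of the implicit one.

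The remaining task is to verify that $\Omega$ genuinely lies in the bidirected PD cone $\mathcal{C}_B$ of \Cref{def:pd_cone_bidirect}, i.e., that at least one off-diagonal entry is nonzero. Writing $D_\xi := \mathbb{E}(\xi\xi^T)$ (a positive diagonal matrix by the mutual independence of the components of $\xi$) and $D_O := \mathbb{E}(\epsilon_O\epsilon_O^T)$ (diagonal as well), for any $v_1 \neq v_2$ one has
\[
\Omega_{v_1,v_2} \;=\; \sum_{u \in J_U} \Lambda_{v_1,u}\,(D_\xi)_{u,u}\,\Lambda_{v_2,u}.
\]
Condition~\eqref{eq:constraint_mat_Lambda} guarantees that for every latent index $u$ there is a pair $(v_1,v_2)$ with $\Lambda_{v_1,u}\Lambda_{v_2,u}\neq 0$, which contributes a strictly positive or negative summand to the corresponding $\Omega_{v_1,v_2}$. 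Thus $\Omega$ possesses symmetric nonzero off-diagonal entries as claimed, placing it in $\mathcal{C}_B$ and matching \Cref{def:implicit_unobserved_confounding}.

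The main obstacle, as I see it, is the last step: a priori, cancellation between multiple latents could drive some off-diagonal entry to zero even when individual summands are nonzero. I would handle this by observing that \eqref{eq:constraint_mat_Lambda} is a structural (support) condition: for generic weights the corresponding $\Omega_{v_1,v_2}$ cannot vanish since cancellation is a measure-zero algebraic condition on the entries of $\Lambda$ and $D_\xi$. If the proposition is interpreted as a statement about the reachable set of $(W,\Omega)$ pairs rather than about every instance, then it suffices to exhibit one choice of parameters realizing each bidirected support pattern, which is immediate by choosing $\Lambda$ with a single nonzero column per latent.
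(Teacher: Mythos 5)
Your proposal follows essentially the same route as the paper's proof: define the composite noise $\epsilon_Y = \Lambda\xi + \epsilon_O$, expand $\Omega = \Lambda\,\mathbb{E}(\xi\xi^T)\,\Lambda^T + \mathbb{E}(\epsilon_O\epsilon_O^T)$ using $\xi \perp \epsilon_O$, and invoke \Cref{prop:explicit_always_psd} for positive definiteness. The one place you go beyond the paper is the final step: the paper simply asserts that condition~\eqref{eq:constraint_mat_Lambda} yields nonzero off-diagonals (noting $\Omega_{v_1,v_2} = \sum_u \Lambda_{v_1,u}\Lambda_{v_2,u}$ up to the diagonal weights), whereas you correctly flag that cancellation across multiple latents could zero out an entry and patch this with a genericity (measure-zero) argument --- a real gap in the paper's own write-up that your version closes.
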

\begin{proof}
  See~\Cref{proof:explicit_dgp}.
\end{proof}
\begin{proposition}[Block Adjacency Matrix Reformulation for Causally Sufficient DAG]\label{prop:implicit_confound_dag_view}
	The implicit formulation of unobserved confounding (given by \Cref{eq:wplp,eq:corr_y} with a non-diagonal $\Omega$ as in \Cref{eq:Omega_cov_idio}) can be interpreted as representing a causally sufficient DAG.
	In this DAG, the unobserved confounder $\xi$ appears as a root node and the observed variables $Y_O$ are the leaf nodes, with the off-diagonal entries of $\Omega$ corresponding to the edges from $\xi$ to $Y_O$, the corresponding SEM is:
		\begin{equation}\label{eq:root_confounder_data_gen}
		Y' :=
		\begin{pmatrix}
			Y_O \\[1mm] \xi
		\end{pmatrix}
		=
		\begin{bmatrix}
			W_o & \Lambda \\[1mm] 0 & 0
		\end{bmatrix}
		\begin{pmatrix}
			Y_O \\[1mm] \xi
		\end{pmatrix}
		+
		\begin{pmatrix}
			\epsilon_O \\[1mm] \xi
		\end{pmatrix}
                = W'\, Y' + \epsilon'
	\end{equation}
	where $W'$ is the global adjacency matrix \begin{equation}
          W^{'}= \begin{bmatrix}
			W_o & \Lambda \\[1mm] 0 & 0
		\end{bmatrix}
	\end{equation}
\end{proposition}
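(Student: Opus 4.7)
The plan is to derive the block SEM in~\eqref{eq:root_confounder_data_gen} as a direct bookkeeping consequence of~\Cref{prop:explicit_dgp}, and then verify that the resulting augmented system satisfies the three structural properties claimed: (i) it is a SEM of the form $Y' = W' Y' + \epsilon'$, (ii) the driving noise $\epsilon'$ has a diagonal covariance (so the augmented system is causally sufficient), and (iii) the graph encoded by $W'$ is a DAG in which $\xi$ sits as a source layer and $Y_O$ as descendants, with the off-diagonal block $\Lambda$ giving the edges $\xi \to Y_O$.

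First I would invoke~\Cref{prop:explicit_dgp} to replace the implicit $\Omega$ (with nonzero off-diagonal entries) by an equivalent explicit model
\[
Y_O = W_o Y_O + \Lambda\,\xi + \epsilon_O, \qquad \xi \perp \epsilon_O,
\]
where $\xi$ and $\epsilon_O$ both have mutually independent components, and $\Lambda$ satisfies~\eqref{eq:constraint_mat_Lambda}. Stacking the two blocks as $Y' = (Y_O^\top, \xi^\top)^\top$ and $\epsilon' = (\epsilon_O^\top, \xi^\top)^\top$, I would multiply out the proposed block matrix
\[
W'\, Y' + \epsilon' = \begin{bmatrix} W_o & \Lambda \\ 0 & 0 \end{bmatrix}\begin{pmatrix} Y_O \\ \xi \end{pmatrix} + \begin{pmatrix} \epsilon_O \\ \xi \end{pmatrix} = \begin{pmatrix} W_o Y_O + \Lambda\xi + \epsilon_O \\ \xi \end{pmatrix},
\]
and observe that the top block reproduces the explicit SEM while the bottom block is the trivial identity $\xi = \xi$, yielding $Y' = W' Y' + \epsilon'$.

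Next I would verify causal sufficiency and acyclicity. Since $\xi$ and $\epsilon_O$ are jointly independent and each has independent components, $\mathbb{E}(\epsilon'\epsilon'^\top)$ is block diagonal with the two diagonal blocks $\mathbb{E}(\epsilon_O\epsilon_O^\top)$ and $\mathbb{E}(\xi\xi^\top)$, hence overall diagonal; this is exactly the causally sufficient regime described in~\Cref{sec:dgp_implicit_sem} where $\Omega$ is diagonal. For acyclicity, $W'$ is block upper triangular with a zero bottom row-block, so its graph is the disjoint union of the DAG encoded by $W_o$ and the directed edges $\xi \to Y_O$ given by the nonzero entries of $\Lambda$; no edge points into any coordinate of $\xi$, so every $\xi_u$ is a source and no new cycle can be created. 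Condition~\eqref{eq:constraint_mat_Lambda} guarantees that each $\xi_u$ has at least two children among the observables, which is what makes these root nodes genuine confounders rather than idiosyncratic noise that could be absorbed into $\epsilon_O$.

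The main obstacle is essentially conceptual rather than technical: one must argue that the implicit-to-explicit passage used in step one is not merely one permissible reformulation but the correct interpretation of the off-diagonal $\Omega$-entries as confounder edges. I would therefore close by noting that under the explicit parameterisation the trek rule (\Cref{prop:trek_rule}) applied to $W'$ on the augmented graph produces the same $\Sigma$ on the observed block as the implicit parameterisation does on $W$ with the non-diagonal $\Omega$; the off-diagonal entry $\Omega_{v_1,v_2} = \sum_{u} \Lambda_{v_1,u}\,\mathrm{Var}(\xi_u)\,\Lambda_{v_2,u}$ coincides with the sum over treks whose top is a single $\xi_u$, confirming that each nonzero off-diagonal of $\Omega$ is accounted for by a $\xi$-rooted trek in the augmented DAG.
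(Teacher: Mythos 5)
Your proposal is correct and follows essentially the same route as the paper, whose own proof is only a two-line sketch that rewrites~\eqref{eq:oeqbolambdaxi} in block form and appeals to condition~\eqref{eq:constraint_mat_Lambda}. Your version is in fact more complete: the explicit block multiplication, the verification that $\mathbb{E}(\epsilon'\epsilon'^{\top})$ is diagonal (causal sufficiency), the acyclicity argument from the zero bottom row-block of $W'$, and the trek-rule consistency check $\Omega_{v_1,v_2}=\sum_u \Lambda_{v_1,u}\,\mathrm{Var}(\xi_u)\,\Lambda_{v_2,u}$ are all sound and supply details the paper leaves implicit.
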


\begin{proof}
Rewrite equation~\eqref{eq:oeqbolambdaxi} by making the dimensions explicit (using brackets $[\cdot]$ to denote matrices/vectors where needed):
	Under condition~\eqref{eq:constraint_mat_Lambda}, every confounder $\xi$ acts on at least two different observables, and hence when reformulated together the variables can be represented in the block form.
\end{proof}

\begin{restatable}{corollary}{LabelRestatableCoroExplicitDAG}\label{coro:explicit_dag}
	The following statements are equivalent:
	\begin{itemize}
		\item The explicit structural equation model exhibits a zero block for $\xi$ in \Cref{eq:root_confounder_data_gen}.
		\item The unobserved confounders are source vertices (i.e., they have no parents).
		\item The explicit DAG corresponds to a bipartite graph with one partition comprising $\xi$ and the other comprising $Y_O$.
	\end{itemize}
\end{restatable}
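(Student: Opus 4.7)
The plan is to establish the three-way equivalence by a cyclic chain $(1)\Rightarrow(2)\Rightarrow(3)\Rightarrow(1)$. The conceptual thread is that each condition is merely a different packaging of the assertion ``no $\xi$-vertex has any incoming edge in the explicit DAG constructed in \Cref{prop:implicit_confound_dag_view}.'' This reduces the proof to reading off structural properties of the block adjacency matrix $W'$ in \Cref{eq:root_confounder_data_gen}.

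For $(1)\Rightarrow(2)$, I would read off the block adjacency matrix $W'$ directly. The rows indexed by $J_U$ (the bottom block) are $\begin{bmatrix} 0 & 0\end{bmatrix}$ by hypothesis. Following the convention introduced just below \Cref{eq:wplp}, the $(u,v)$ entry of an adjacency matrix encodes the arrow $v\to u$; therefore the vanishing of the $J_U$-rows of $W'$ says that no observable and no other unobserved variable is a parent of any $\xi_u$, so each $\xi_u$ is a source vertex.

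For $(2)\Rightarrow(3)$, if every $\xi_u$ is a source vertex, then in particular no $\xi_{u'}$ is a parent of another $\xi_u$, so the induced subgraph on $J_U$ has no edges. Combined with the confounding constraint \Cref{eq:constraint_mat_Lambda}, every edge incident to a $\xi$-vertex is an outgoing edge to some observable in $Y_O$, so the subgraph of edges incident to $\xi$-vertices is bipartite between $J_U$ and $J_O$. Conversely for $(3)\Rightarrow(1)$, if the confounders form one side of a bipartite arrangement with no arrows landing on $\xi$, then the rows of $W'$ indexed by $J_U$ must vanish, recovering the zero block $\begin{bmatrix} 0 & 0\end{bmatrix}$ of \Cref{eq:root_confounder_data_gen}.

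The main interpretive subtlety, and hence the point I would spell out most carefully before starting the implications, is what ``bipartite'' means in condition (3): read literally, bipartiteness of the whole DAG would forbid the intra-observable edges carried by $W_o$, which is clearly not the intent of \Cref{prop:implicit_confound_dag_view}. The intended reading, consistent with the implicit-parameterization narrative, is that the $\xi$-incident subgraph is bipartite between $J_U$ and $J_O$, with all arrows oriented from $J_U$ into $J_O$; any edges among $Y_O$ themselves are orthogonal to this statement. Fixing this convention up front removes what would otherwise look like the main obstacle, and each of the three implications then reduces to an elementary unpacking of the block structure of $W'$.
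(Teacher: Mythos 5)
Your proposal is correct and takes essentially the same route as the paper's own proof: both arguments simply read off the block structure of $W'$ in \Cref{eq:root_confounder_data_gen} and use the convention that a zero row block for $\xi$ means no incoming edges, hence source vertices, hence the bipartite picture. If anything, your explicit caveat about what ``bipartite'' must mean in the presence of the intra-observable edges in $W_o$ is more careful than the paper's proof, which asserts bipartiteness of the full DAG without addressing that point.
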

\begin{proof}
See~\Cref{proof:coro_explicit_dag}.
\end{proof}
\begin{remark}
	When the true DAG is star-shaped with only a single unobserved confounder $\xi$, the matrix $\Lambda$ reduces to a \emph{column} vector.
\end{remark}
We are also interested in if there are alternative adjacency matrices other than $W^{'}$ in~\Cref{eq:root_confounder_data_gen} that can result in the same observable covariance matrix $\mathbb{E}(Y_O{Y_O}^{T})$ via similarity transform to original adjacency matrix $W^{'}$, 
which correspond to congruent transformation of the covariance matrix $\Sigma^{'}=\mathbb{E}(Y^{'}{Y^{'}}^{T})$, 
where $Y^{'}$ is defined in~\Cref{eq:root_confounder_data_gen}, and we have~\Cref{coro:alg_eq_invariance_sigma_transform,thm:w_transform}.
\begin{restatable}{proposition}{LabelRestatablePropAlgEqInvarianceSigmaTransform}\label{coro:alg_eq_invariance_sigma_transform}
Let $\Sigma^{'}$ be an arbitrary covariance matrix with the following block structure:
  \begin{equation}
    \Sigma^{'}=\begin{bmatrix} 
      &\Sigma_{11} &\Sigma_{12}\\
      &\Sigma_{21} &\Sigma_{22}
    \end{bmatrix}\label{eq:sigma_block}
  \end{equation}

Let the congruent transformation matrix be 
\begin{equation}
Q=\begin{bmatrix}&Q_{11} &Q_{12}\\&Q_{21} &Q_{22}
\end{bmatrix}
\end{equation}

Any congruent transformation matrix $Q$ to $\Sigma^{'}$ that keeps the top-left block invariant, is a solution to the matrix equation 
\begin{align}
&{\Sigma}_{11}\nonumber\\=&  Q_{11}^T\Sigma_{11}Q_{11}+Q_{21}^T\Sigma_{21}Q_{11} + Q_{11}^T\Sigma_{12}Q_{21}+Q_{21}^T\Sigma_{22}Q_{21}
\end{align}
\end{restatable}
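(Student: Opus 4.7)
The plan is to reduce the statement to a direct block matrix computation of the congruent transformation $Q^{T}\Sigma' Q$ and then isolate the top-left block. Since the claim asserts only a necessary condition (any valid $Q$ \emph{satisfies} the equation), no uniqueness or existence argument is required; I just need to expand the product carefully and read off the $(1,1)$-block.

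First I would write down the block transpose
\[
Q^{T}=\begin{bmatrix} Q_{11}^{T} & Q_{21}^{T} \\ Q_{12}^{T} & Q_{22}^{T} \end{bmatrix},
\]
and compute the intermediate product $Q^{T}\Sigma'$ one block at a time, retaining only the first block-row since that is all that enters the top-left block of the final product. This yields a first block-row of the form $\bigl[Q_{11}^{T}\Sigma_{11}+Q_{21}^{T}\Sigma_{21},\; Q_{11}^{T}\Sigma_{12}+Q_{21}^{T}\Sigma_{22}\bigr]$.

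Next I would multiply this first block-row on the right by the first block-column of $Q$, namely $\bigl[Q_{11};\, Q_{21}\bigr]$, which produces exactly the four-term sum
\[
(Q^{T}\Sigma' Q)_{11}= Q_{11}^{T}\Sigma_{11}Q_{11}+Q_{21}^{T}\Sigma_{21}Q_{11}+Q_{11}^{T}\Sigma_{12}Q_{21}+Q_{21}^{T}\Sigma_{22}Q_{21}.
\]
Imposing the invariance hypothesis that the top-left block is preserved, i.e.\ $(Q^{T}\Sigma' Q)_{11}=\Sigma_{11}$, directly gives the claimed identity. I would briefly note that the hypothesis makes no requirement on the other three blocks, so the stated equation is the entire content of the invariance constraint on the $(1,1)$ block.

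There is no real obstacle here: the argument is a routine block multiplication, and the mild bookkeeping concern is just to track the block-transpose correctly (swapping off-diagonal blocks and transposing each entry) so that the four cross-terms appear with the right factors on the correct sides. I would therefore keep the proof short, essentially a one-line computation of $(Q^{T}\Sigma' Q)_{11}$ followed by the invariance equation, possibly remarking that the result characterizes exactly those $Q$ whose action fixes $\Sigma_{11}$ and thus sets up the follow-up discussion of $W$-transformations in the subsequent result.
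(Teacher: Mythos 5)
Your proposal is correct and follows essentially the same route as the paper: both expand $Q^{T}\Sigma' Q$ blockwise, read off the $(1,1)$ block as $Q_{11}^{T}\Sigma_{11}Q_{11}+Q_{21}^{T}\Sigma_{21}Q_{11}+Q_{11}^{T}\Sigma_{12}Q_{21}+Q_{21}^{T}\Sigma_{22}Q_{21}$, and impose the invariance condition to obtain the stated equation. No gaps.
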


\begin{restatable}{theorem}{LabelRestatableThmTransformW}\label{thm:w_transform}
Using notation from~\Cref{eq:root_confounder_data_gen},
let
\begin{equation}
Q=\begin{bmatrix}&I &\Sigma_{11}^{-1}\Sigma_{12}\\&0 &I\end{bmatrix}
\end{equation}

\begin{align}
  W^{''}_1&=
\begin{bmatrix}
W_o & \Lambda - W_o \Sigma_{11}^{-1} \Sigma_{12} \\
\Sigma_{21}\Sigma_{11}^{-1} W_o & \Sigma_{21}\Sigma_{11}^{-1} (\Lambda - W_o \Sigma_{11}^{-1} \Sigma_{12})
\end{bmatrix}
\end{align}
where 
\begin{align}
\Sigma_{11}&:=
\mathbb{E}(Y_O{Y_O}^T)\nonumber\\
&=\mathbb{E}{(I-W_0)}^{-1}(\epsilon_O+\Lambda \xi)(\epsilon_O^T+ \xi^T\Lambda^T) {(I-W_0)}^{-T}\\
             &={(I-W_0)}^{-1}(I+\Lambda\Lambda^T){(I-W_0)}^{-T}
\end{align}
and
\begin{align}
\Sigma_{12}:=\mathbb{E}(Y_O \xi^T)&=\mathbb{E}{(I-W_0)}^{-1}(\epsilon_O+\Lambda \xi)\xi^T\\
           &={(I-W_0)}^{-1}\Lambda
\end{align}
and
\begin{equation}
\Sigma_{22}:=\mathbb{E}(\xi\xi^T)
\end{equation}

The adjacency matrix $W_1^{''}$ with idiosyncratic variable 
\begin{align}
  \epsilon^{''}=\begin{pmatrix}
     & \epsilon_O\\
     & \Sigma_{11}^{-1}\Sigma_{21}\epsilon_O +\xi
  \end{pmatrix}
\end{align}
correspond to observed data in the first block of $Y^{''}$:
\begin{align}
  Y^{''}=\begin{pmatrix}
     & Y_O\\
     & \Sigma_{11}^{-1}\Sigma_{21}Y_O +\xi
  \end{pmatrix}
\end{align}
in terms of invariant $\Sigma_{11}$ in
\begin{align}
  \Sigma^{''} &:= \mathbb{E}(Y^{''}{Y^{''}}^T) \\
&=\begin{bmatrix}
      &\Sigma_{11} &\Sigma^{''}_{12}\\
      &\Sigma^{''}_{21} & \Sigma^{''}_{22}\end{bmatrix} 
\end{align}

The adjacency matrix $W_1^{''}$ results from the similarity transformation with matrix $Q^T$ (instead of $Q$) to the adjacency matrix $W^{'}$ of a DAG, where $W^{'}$ is the adjacency matrix in~\Cref{eq:root_confounder_data_gen}, 
$Q=\begin{bmatrix}&I &\Sigma_{11}^{-1}\Sigma_{12}\\&0 &I\end{bmatrix}$

Similarly, the adjacency matrix $W_2^{''}$ 
\begin{align}
W^{''}_2
=\begin{bmatrix}
W_o + \Lambda \Sigma_{21} \Sigma_{11}^{-1} & \Lambda \\
0 & 0
\end{bmatrix}\end{align}
with idiosyncratic variable
\begin{align}
  \epsilon^{''}=\begin{pmatrix}
     & \epsilon_O\\
     & \xi-\Sigma_{11}^{-1}\Sigma_{21}\epsilon_O 
  \end{pmatrix}
\end{align}
and observed data in the first block of $Y^{''}$:
\begin{align}
  Y^{''}=\begin{pmatrix}
     & Y_O\\
     & \xi -\Sigma_{21}\Sigma_{11}^{-T}Y_O
  \end{pmatrix}
\end{align}
correspond to the invariant $\Sigma_{11}$ in block diagonal matrix $\Sigma^{''}$ where \begin{equation}
  \Sigma^{''}=
\begin{bmatrix}
      &\Sigma_{11} &0\\
      &0 &[\Sigma / \Sigma_{11}] \end{bmatrix} =
      Q^{-T}\Sigma^{'} Q^{-1}\label{eq:sigma_schur_com_transform}
\end{equation}
and
  \begin{equation}
    \Sigma^{'}=\begin{bmatrix} 
      &\Sigma_{11} &\Sigma_{12}\\
      &\Sigma_{21} &\Sigma_{22}
    \end{bmatrix}\end{equation}

The adjacency matrix $W_2^{''}$ results from the similarity transformation with matrix $Q^{-T}$ (instead of $Q^T$) to the adjacency matrix $W^{'}$ of a DAG, where $W^{'}$ is the adjacency matrix in~\Cref{eq:root_confounder_data_gen}.
\end{restatable}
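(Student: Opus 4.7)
The plan is to establish the stated block forms of $\Sigma'$, $\Sigma''$, and the candidate adjacency matrices $W_1''$ and $W_2''$, together with their associated $Y''$ and $\epsilon''$, via a sequence of $2\times 2$ block-matrix computations starting from the SEM in~\Cref{eq:root_confounder_data_gen}.

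First I would derive the block decomposition of $\Sigma'$ asserted in the theorem. Since $\xi$ is a root node, $W'$ is block lower-triangular, so
\[
(I-W')^{-1} = \begin{bmatrix}(I-W_o)^{-1} & (I-W_o)^{-1}\Lambda \\ 0 & I\end{bmatrix}.
\]
Under the normalisations $\mathbb{E}(\xi\xi^T)=I$, $\mathbb{E}(\epsilon_O\epsilon_O^T)=I$, and $\xi\perp\epsilon_O$, a direct expansion of $\Sigma'=(I-W')^{-1}\Omega'(I-W')^{-T}$ reproduces the stated $\Sigma_{11} = (I-W_o)^{-1}(I+\Lambda\Lambda^T)(I-W_o)^{-T}$, $\Sigma_{12} = (I-W_o)^{-1}\Lambda$, and $\Sigma_{22}=I$.

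Second I would verify the Schur-complement block-diagonalisation in~\Cref{eq:sigma_schur_com_transform}. Writing $D := \Sigma_{21}\Sigma_{11}^{-1}$ and $C := \Sigma_{11}^{-1}\Sigma_{12}$ (so $D=C^T$ by symmetry of $\Sigma_{11}$), direct multiplication shows that the $(1,2)$ block of $\Sigma'Q^{-1}$ vanishes via $\Sigma_{12}-\Sigma_{11}C=0$ and the $(2,1)$ block of $Q^{-T}(\Sigma'Q^{-1})$ vanishes via $\Sigma_{21}-D\Sigma_{11}=0$, while the bottom-right collapses to $[\Sigma'/\Sigma_{11}]=\Sigma_{22}-\Sigma_{21}\Sigma_{11}^{-1}\Sigma_{12}$. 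This is the concrete specialisation of~\Cref{coro:alg_eq_invariance_sigma_transform} to this $Q$ and it pins down the congruence that underlies $W_2''$. For each candidate I would then check that the listed triple $(W_j'',Y'',\epsilon'')$ solves the SEM $Y'' = W_j''Y''+\epsilon''$ and therefore defines an alternative DGP whose observable block is invariant. For $W_2''$ the zero bottom block forces $Y''_{[2]}=\epsilon''_{[2]}$; taking $Y''=Q^{-T}Y'$ yields $Y''_{[2]}=\xi-DY_O$, and the top row collapses via $(W_o+\Lambda D)Y_O + \Lambda(\xi-DY_O) = W_oY_O+\Lambda\xi$, so $Y''_{[1]}=Y_O$ and $\Sigma_{11}$ is preserved by construction. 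For $W_1''$ the analogous verification uses $Y''=Q^T Y'=(Y_O,\,DY_O+\xi)^T$ together with the identities $\Sigma_{11}C=\Sigma_{12}$ and $(I-W_o)Y_O = \Lambda\xi+\epsilon_O$, expanded block by block.

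The main obstacle, in my view, is reconciling the theorem's declarative phrase \textit{``similarity transformation with matrix $Q^T$ (resp.\ $Q^{-T}$)''} with the matrices displayed. The classical conjugation $Q^T W' Q^{-T}$ gives a top-left block $W_o - \Lambda D$ rather than $W_o$, and $Q^{-T}W'Q^T$ has a nonzero bottom row in contrast to the displayed $W_2''$; however, direct expansion shows that the stated $W_1''$ coincides with the asymmetric product $Q^T W' Q^{-1}$, and the stated $W_2''$ coincides with $W' Q^T$. Pinning down this convention, and then verifying that the resulting $(W_j'',Y'',\epsilon'')$ genuinely satisfies the SEM (in particular for $W_1''$, whose bottom block row is $D$ times its top block row and so carries a hidden rank-deficiency constraint that must be matched against the stated $\epsilon''$), is the delicate part. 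Once the convention is fixed, the rest is bookkeeping on the two identities $\Sigma_{11}C=\Sigma_{12}$ and $D\Sigma_{11}=\Sigma_{21}$ together with the original structural equation $(I-W_o)Y_O=\Lambda\xi+\epsilon_O$.
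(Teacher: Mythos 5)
Your overall route --- transform the SEM $Y'=W'Y'+\epsilon'$ by the change of variables $Y''=Q^{T}Y'$ (resp.\ $Q^{-T}Y'$), read off the new adjacency matrix as the corresponding conjugate of $W'$, and check everything block by block --- is exactly the route the paper's proof takes, and your preliminary computations (the block form of $(I-W')^{-1}$, the resulting $\Sigma_{11},\Sigma_{12},\Sigma_{22}$ under the normalisation $\mathbb{E}(\xi\xi^{T})=\mathbb{E}(\epsilon_O\epsilon_O^{T})=I$, and the Schur-complement diagonalisation) are correct and match \Cref{lemma:schur_complement_congruent_transformation}. The convention mismatch you flag is also real, and you have in fact located an error in the paper's own proof: there $(Q^{T})^{-1}$ is written as the upper-triangular matrix $\begin{bmatrix}I&-\Sigma_{11}^{-1}\Sigma_{12}\\0&I\end{bmatrix}$, which is $Q^{-1}$ rather than $(Q^{-1})^{T}$, so the displayed $W_1''$ is really the product $Q^{T}W'Q^{-1}$ and the displayed $W_2''$ is $Q^{-1}W'Q^{T}=W'Q^{T}$, neither of which is a similarity transform of $W'$.

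Where your proposal stops short is precisely at the step you call delicate, and it does not close by bookkeeping. Writing $C=\Sigma_{11}^{-1}\Sigma_{12}$ and $D=\Sigma_{21}\Sigma_{11}^{-1}=C^{T}$: with $Y''=Q^{T}Y'$ and $\epsilon''=Q^{T}\epsilon'$ the unique matrix of similarity type satisfying $Y''=WY''+\epsilon''$ is $Q^{T}W'Q^{-T}=\begin{bmatrix}W_o-\Lambda D&\Lambda\\DW_o-D\Lambda D&D\Lambda\end{bmatrix}$, and substituting the displayed $W_1''$ instead leaves the nonvanishing residual $(\Lambda-W_oC)DY_O-W_oC\xi$ in the first block, so the displayed triple $(W_1'',Y'',\epsilon'')$ does not satisfy the SEM. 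Likewise the true conjugate $Q^{-T}W'Q^{T}=\begin{bmatrix}W_o+\Lambda D&\Lambda\\-DW_o-D\Lambda D&-D\Lambda\end{bmatrix}$ has a nonzero bottom block row, and the displayed pair $(W_2'',\epsilon'')$ forces $Y''_{[2]}=\epsilon''_{[2]}=\xi-D\epsilon_O$, contradicting $Y''_{[2]}=\xi-DY_O$. A complete proof therefore has to either replace the displayed matrices by the true conjugates or drop the similarity claim and state exactly which asymmetric products are meant; your plan as written would discover this failure at the verification stage but does not resolve it, so the theorem cannot be proved in the form stated.
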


\begin{proof}
  See~\Cref{proof:thm_w_transform}.
\end{proof}

\section{Spectral Restrictions and the Coverage Problem Due to Diagonally Dominant Construction of $\Omega$}\label{sec:spectral_restrict_diag_dom}
\subsection{Construction of a Diagonally Dominant Covariance Matrix $\Omega$ for Positive Definiteness}

Recall from \Cref{prop:explicit_always_psd} that our explicit construction of the unobserved confounding matrix guarantees the positive definiteness of $\Omega$ in its corresponding implicit parameterization regardless of the number of confounders $|J_U|$.
In contrast, directly constructing a non-diagonal $\Omega$ in the implicit parameterization (e.g., by symmetrically filling off-diagonal elements in~\Cref{eq:Omega_cov_idio}) does not automatically yield a positive semidefinite (p.s.d.) matrix.
%

A widely used technique in implicit confounder synthesis~\citep{drton2009computing, bhattacharya2021differentiable, wangdrton23} is to enforce positive definiteness by constructing $\Omega$ as a diagonally dominant matrix (see \Cref{def:diag_dom}).
(Note: \citep{tashiro2014parcelingam} forces the variance of $\epsilon_O$ to be half that of $Y$; it is an interesting question whether this construction is also diagonally dominant.) Our corresponding explicit formulation is given in \Cref{coro:omega_psd}, which is the data synthesis protocol employed in \citet{wangdrton23}.

\begin{restatable}[Special case of \Cref{prop:explicit_always_psd}]{corollary}{LabelRestableOnePlusPD}\label{coro:omega_psd}
	A covariance matrix $\Omega$ constructed via
	\[
		\Omega_{i,i} = 1 + \sum_{j \neq i} |\Omega_{i,j}|
	\]
	is positive definite.
      \end{restatable}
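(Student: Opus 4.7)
The plan is to exhibit $\Omega$ in the exact form required by \Cref{prop:explicit_always_psd}, so that the corollary is genuinely a special case rather than an independent argument. The key observation is to split off the identity: write $\Omega = M + I$ where $M_{ij} = \Omega_{ij}$ for $i\neq j$ and $M_{ii} = \sum_{j\neq i}|\Omega_{ij}|$. Then the identity plays the role of $\mathbb{E}(\epsilon_O \epsilon_O^T)$ with unit-variance independent idiosyncratic noise, and it remains only to realize $M$ as $\Lambda\,\mathbb{E}(\xi\xi^T)\,\Lambda^T$ for a suitable $\Lambda$ and confounder vector $\xi$.

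The construction I would use is one confounder per nonzero off-diagonal entry. For every pair $i<j$ with $\Omega_{ij}\neq 0$, introduce a column
\[
\lambda^{(ij)} \;=\; \sqrt{|\Omega_{ij}|}\,\bigl(e_i + \operatorname{sign}(\Omega_{ij})\,e_j\bigr),
\]
and assemble these as the columns of $\Lambda \in \mathbb{R}^{|J_O|\times N}$, where $N$ is the number of confounded pairs. Taking $\xi\sim(0,I_N)$ independent of $\epsilon_O\sim(0,I_{|J_O|})$ gives $\mathbb{E}(\xi\xi^T)=I$, so the target identity becomes $M = \Lambda \Lambda^T$, which I would verify by a direct entrywise computation: each rank-one summand $\lambda^{(ij)}{\lambda^{(ij)}}^T$ contributes $|\Omega_{ij}|$ to the diagonal positions $(i,i),(j,j)$ and exactly $\Omega_{ij}$ (sign included) to positions $(i,j),(j,i)$, so summing over all confounded pairs reproduces $M$. \Cref{prop:explicit_always_psd} then delivers positive definiteness of $\Omega$.

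As a sanity check independent of the explicit construction, I would also note the direct Gershgorin route: $\Omega$ is symmetric and strictly diagonally dominant with positive diagonal, since $\Omega_{ii}=1+\sum_{j\neq i}|\Omega_{ij}|>\sum_{j\neq i}|\Omega_{ij}|$, so every eigenvalue lies in a disc of radius $\Omega_{ii}-1$ around $\Omega_{ii}$ and is therefore bounded below by $1>0$. This gives the same conclusion in one line and agrees with the explicit decomposition.

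The only real obstacle is bookkeeping: getting the sign convention in $\lambda^{(ij)}$ correct so that the off-diagonal contributions in $\Lambda\Lambda^T$ reproduce $\Omega_{ij}$ rather than $|\Omega_{ij}|$, and verifying that diagonal contributions sum to precisely $\sum_{j\neq i}|\Omega_{ij}|$. Everything else is a direct application of the already-proved \Cref{prop:explicit_always_psd}, so no new machinery is needed.
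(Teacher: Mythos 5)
Your proposal is correct, and it is actually more complete than the paper's own argument. The paper proves this corollary in a single line --- ``diagonally dominant matrix is positive definite'' --- i.e., exactly your Gershgorin sanity check: since $\Omega_{i,i}=1+\sum_{j\neq i}|\Omega_{i,j}|$, the matrix is symmetric and \emph{strictly} diagonally dominant with positive diagonal, so every eigenvalue is at least $1$. (That strictness matters: \Cref{def:diag_dom} in the paper only requires $\Delta_i(\Omega)\ge 0$, which would give positive semidefiniteness; here $\Delta_i(\Omega)=1$ closes that gap.) What you add, and what the paper omits despite titling the corollary a ``special case of \Cref{prop:explicit_always_psd}'', is the explicit realization $\Omega=\Lambda\Lambda^T+I$ with one confounder column $\lambda^{(ij)}=\sqrt{|\Omega_{ij}|}\,(e_i+\operatorname{sign}(\Omega_{ij})e_j)$ per confounded pair and unit-variance independent $\xi$ and $\epsilon_O$; your entrywise check that each rank-one term contributes $|\Omega_{ij}|$ to the two diagonal slots and $\Omega_{ij}$ to the off-diagonal slots is right, so the corollary genuinely is an instance of the lemma. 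Your route buys a constructive explicit-confounder model witnessing the positive definiteness (and substantiates the ``special case'' claim); the paper's route buys brevity. Both are valid, and you have effectively supplied both.
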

      \begin{proof}
See~\Cref{proof:1plusexplicit_always_psd}.
      \end{proof}
\begin{definition}[Diagonally Dominant~\citep{hillar2012inverses}]\label{def:diag_dom}
	A square matrix $\Omega$ of size $|V|$ is said to be diagonally dominant if
	\[
		\Delta_i(\Omega) = |\Omega_{i,i}| - \sum_{j \neq i} |\Omega_{i,j}| \ge 0, \quad \forall i.
	\]
\end{definition}

\begin{remark}
	The special construction of the $\Omega$ matrix according to \Cref{coro:omega_psd} imposes additional constraints (i.e., that $\Omega$ is diagonally dominant) and therefore only covers a subspace of the p.s.d.\ cone of covariance matrices defined in \Cref{def:pd_cone_bidirect} for the observables.
	There exist other constructions of $\Omega$ that ensure positive semidefiniteness.
\end{remark}

\begin{definition}\label{def:corr_matrix_epsilon}
	Let $\Omega$ denote the covariance matrix of $\epsilon$ as given in~\Cref{eq:Omega_cov_idio}.
	The correlation matrix of $\epsilon$, denoted by $R_{\epsilon}$, is defined as
	\[
		R_{\epsilon} := \sqrt{[{\rm diag}(\Omega)]^{-1}}\;\Omega\;\sqrt{[{\rm diag}(\Omega)]^{-1}}.
	\]
	Similarly, let $R_{Y}$ denote the correlation matrix of $Y$, given by
	\[
		R_{Y} := \sqrt{[{\rm diag}(\Sigma)]^{-1}}\;\Sigma\;\sqrt{[{\rm diag}(\Sigma)]^{-1}}.
	\]
\end{definition}

\begin{proposition}\label{prop:corr_mat_diag_dom_due2_cov_diag_dom}
	If $\Omega$ is diagonally dominant and has identical diagonal entries, then the correlation matrix $R_{\epsilon}$ defined in \Cref{def:corr_matrix_epsilon} is also diagonally dominant.
\end{proposition}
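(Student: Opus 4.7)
The plan is to exploit the fact that when all diagonal entries of $\Omega$ are equal, the normalization in the definition of $R_{\epsilon}$ collapses to a single scalar factor, and diagonal dominance is preserved under multiplication by a positive scalar.

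First I would denote the common diagonal value by $d := \Omega_{i,i}$ (the same for all $i$ by assumption) and observe that $\sqrt{[\mathrm{diag}(\Omega)]^{-1}} = d^{-1/2}\, I$. Substituting into \Cref{def:corr_matrix_epsilon} then gives $R_{\epsilon} = d^{-1/2} I \cdot \Omega \cdot d^{-1/2} I = d^{-1} \Omega$, so $R_{\epsilon}$ is simply a positive scalar multiple of $\Omega$ (note that $d > 0$ since $\Omega$ is a covariance matrix and the diagonal dominance assumption with $d \geq 0$ together with nonzero entries elsewhere forces $d > 0$; more carefully, $d>0$ because $\Omega$ being p.s.d.\ implies nonnegative diagonal, and $d=0$ would force the whole row to vanish, making the condition trivial).

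Next I would compute the diagonal-dominance defect $\Delta_i(R_{\epsilon})$ directly from \Cref{def:diag_dom}. Using $(R_{\epsilon})_{i,j} = \Omega_{i,j}/d$, we get
\begin{equation}
\Delta_i(R_{\epsilon}) = |(R_{\epsilon})_{i,i}| - \sum_{j \neq i} |(R_{\epsilon})_{i,j}| = \frac{1}{d}\Bigl(|\Omega_{i,i}| - \sum_{j \neq i} |\Omega_{i,j}|\Bigr) = \frac{1}{d}\,\Delta_i(\Omega).
\end{equation}
By the hypothesis that $\Omega$ is diagonally dominant we have $\Delta_i(\Omega) \geq 0$, and since $d > 0$ it follows that $\Delta_i(R_{\epsilon}) \geq 0$ for every $i$, which is the required conclusion.

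There is no real obstacle here; the statement reduces to the observation that scaling a matrix by a positive constant preserves the sign of each row's diagonal-dominance defect, and that with identical diagonal entries the two-sided normalization in \Cref{def:corr_matrix_epsilon} is exactly such a scaling. The only subtlety worth flagging in the write-up is the positivity of $d$, which I would handle by a one-line argument from positive semidefiniteness of $\Omega$ as above; the remark about identical diagonals being essential (without it, the normalization is no longer a scalar multiple and the conclusion can fail) could be added for clarity but is not needed for the proof itself.
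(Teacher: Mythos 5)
Your proof is correct and follows essentially the same route as the paper's: both arguments reduce to the observation that, with identical diagonal entries $d$, the two-sided normalization in \Cref{def:corr_matrix_epsilon} is just multiplication by the positive scalar $d^{-1}$, which scales every row's dominance defect $\Delta_i$ by $1/d$ and hence preserves its sign. Your explicit computation of $\Delta_i(R_\epsilon) = \Delta_i(\Omega)/d$ and the remark on the positivity of $d$ make the write-up slightly more rigorous than the paper's informal phrasing, but the underlying idea is identical.
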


\begin{proof}
	Since the diagonal entries of $R_{\epsilon}$ are given by
	\[
		R_{\epsilon_{i,i}} = \frac{\Omega_{i,i}}{\Omega_{i,i}} = 1,
	\]
	the corresponding off-diagonal entries are
	\[
		R_{\epsilon_{i,j}} = \frac{\Omega_{i,j}}{\sqrt{\Omega_{i,i}\Omega_{j,j}}}.
	\]
	Because $\Omega$ is diagonally dominant (i.e., $\Omega_{i,i} > \sum_{j\neq i}|\Omega_{i,j}|$) and all diagonal elements are identical, the scaling factors $\sqrt{\Omega_{i,i}}$ are constant across rows.
	Therefore, the diagonal dominance property transfers directly to $R_{\epsilon}$ since the relative magnitudes of the off-diagonals are preserved.
	Note also that $R_{\epsilon}$ has the same support (nonzero pattern) as $\Omega$.
\end{proof}


\subsection{Spectrum Restrictions due to a Diagonally Dominant $\Omega$ Matrix}

\begin{lemma}[Spectral Radius Bound for a Diagonally Dominant Matrix $\Omega$]\label{lemma:spectrum_radius_omega_diag_dom}
	For a diagonally dominant matrix $\Omega$, the spectral radius $\rho(\Omega)$ satisfies
	\begin{equation}\label{eq:spectrum_radius_diag_dom_diag_ele}
		0 < \rho(\Omega) \le 2\max_{i} |\Omega_{i,i}|.
	\end{equation}
\end{lemma}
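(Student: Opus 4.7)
The plan is to prove both bounds via a direct application of the Gershgorin circle theorem combined with the diagonal dominance hypothesis.

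For the upper bound, I would let $\lambda$ be any eigenvalue of $\Omega$. By Gershgorin's disc theorem, there exists some index $i$ such that
\[
|\lambda - \Omega_{i,i}| \le \sum_{j \neq i} |\Omega_{i,j}|.
\]
Applying the reverse triangle inequality and then invoking \Cref{def:diag_dom} to bound the off-diagonal row sum by $|\Omega_{i,i}|$, I would obtain
\[
|\lambda| \le |\Omega_{i,i}| + \sum_{j \neq i} |\Omega_{i,j}| \le 2|\Omega_{i,i}| \le 2\max_{k} |\Omega_{k,k}|.
\]
Taking the maximum over all eigenvalues yields $\rho(\Omega) \le 2\max_{i} |\Omega_{i,i}|$.

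For the strict positivity $\rho(\Omega) > 0$, I would argue that in the present context $\Omega$ is the idiosyncratic covariance matrix from \Cref{eq:Omega_cov_idio}, so its diagonal entries are nonnegative, and in the diagonally dominant construction of \Cref{coro:omega_psd} they satisfy $\Omega_{i,i} \ge 1 > 0$. Hence $\Omega$ is nonzero, and indeed positive definite by \Cref{coro:omega_psd}, so all its eigenvalues are strictly positive and $\rho(\Omega) > 0$ follows immediately.

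I do not anticipate a major obstacle here: the bound is essentially an immediate consequence of Gershgorin combined with the defining inequality of diagonal dominance. The only subtlety worth flagging is whether strict positivity of $\rho(\Omega)$ should be taken over the general class of diagonally dominant matrices (where it can fail for the zero matrix) or specialized to covariance matrices relevant to the paper; I would make this context explicit in the argument by invoking the positive-definiteness established earlier in \Cref{coro:omega_psd,prop:explicit_always_psd}.
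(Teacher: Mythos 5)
Your proof is correct and follows essentially the same route as the paper: both arguments are a direct application of the Gershgorin circle theorem, using diagonal dominance to bound the off-diagonal row sum by $|\Omega_{i,i}|$ and hence each disk by $2\max_i|\Omega_{i,i}|$. Your explicit handling of the strict positivity $\rho(\Omega)>0$ via the positive definiteness from \Cref{coro:omega_psd} is a small improvement over the paper's proof, which leaves that part implicit.
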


\begin{proof}
	Inequality~\eqref{eq:spectrum_radius_diag_dom_diag_ele} follows directly from the Gershgorin circle theorem~\cite[Theorem 6.1.1]{horn2012matrix}.
	According to the definition of diagonal dominance (see~\Cref{def:diag_dom}), the Gershgorin disk centered at $\Omega_{ii} > 0$ with radius
	$$\sum_{j\neq i} |\Omega_{ij}| = \Omega_{ii} - \Delta_i(\Omega)$$
	can be bounded by a disk centered at $\Omega_{ii}$ with radius $\Omega_{ii}$.
	This observation yields the stated bound on the spectral radius.
\end{proof}

\begin{corollary}[Spectral Radius Bound for $R_{\epsilon}$ Under Diagonal Dominance]\label{coro:spectrum_radius_diag_dom_corr}
	When the correlation matrix $R_{\epsilon}$ is diagonally dominant, it holds that
	\begin{equation}\label{eq:spectrum_radius_diag_dom_corr}
		0 < \rho(R_{\epsilon}) \le 2.
	\end{equation}
\end{corollary}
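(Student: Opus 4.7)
The plan is to derive this corollary as a direct specialization of \Cref{lemma:spectrum_radius_omega_diag_dom} applied to the correlation matrix $R_\epsilon$ instead of a general $\Omega$. The hypothesis of the corollary already states that $R_\epsilon$ is diagonally dominant, so the lemma applies verbatim with $\Omega$ replaced by $R_\epsilon$, giving
\[
0 < \rho(R_\epsilon) \le 2 \max_{i} |R_{\epsilon,ii}|.
\]
The essential additional ingredient, which is what collapses the bound from $2\max_i |R_{\epsilon,ii}|$ down to $2$, is the normalization intrinsic to any correlation matrix.

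The key step is then to invoke \Cref{def:corr_matrix_epsilon}: by construction, $R_\epsilon$ is obtained from $\Omega$ by the two-sided scaling $\sqrt{[\operatorname{diag}(\Omega)]^{-1}}\,\Omega\,\sqrt{[\operatorname{diag}(\Omega)]^{-1}}$, so each diagonal entry satisfies $R_{\epsilon,ii} = \Omega_{ii}/\Omega_{ii} = 1$. Hence $\max_i |R_{\epsilon,ii}| = 1$, and plugging this into the lemma's bound yields $\rho(R_\epsilon) \le 2$, which is the right-hand inequality.

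For the strict lower bound $\rho(R_\epsilon) > 0$, I would appeal to positive definiteness: $\Omega$ is a covariance matrix of a non-degenerate idiosyncratic noise vector, so $\Omega \succ 0$; congruence by the invertible diagonal matrix $\sqrt{[\operatorname{diag}(\Omega)]^{-1}}$ preserves positive definiteness, hence $R_\epsilon \succ 0$. All eigenvalues of $R_\epsilon$ are therefore strictly positive, which in particular forces the spectral radius to be strictly positive.

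I do not anticipate any real obstacle here; the entire argument reduces to observing that the diagonal of a correlation matrix is identically one and that correlation matrices inherit positive definiteness from covariance matrices. The only subtlety worth flagging is that \Cref{lemma:spectrum_radius_omega_diag_dom} is stated for matrices with $\Omega_{ii} > 0$, which is automatic for $R_\epsilon$ since its diagonal entries equal $1$, so no extra checking is required.
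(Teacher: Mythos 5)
Your proposal is correct and follows essentially the same route as the paper: both apply \Cref{lemma:spectrum_radius_omega_diag_dom} to $R_\epsilon$ and then use $[R_\epsilon]_{i,i}=1$ to collapse the bound to $2$. Your additional justification of the strict lower bound via positive definiteness is a harmless elaboration of a point the paper leaves implicit.
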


\begin{proof}
	By \Cref{lemma:spectrum_radius_omega_diag_dom}, if $R_{\epsilon}$ is diagonally dominant then
	$$0 < \rho(R_{\epsilon}) \le 2\max_{i} \bigl|[R_{\epsilon}]_{i,i}\bigr|.$$
	Since $R_{\epsilon}$ is a correlation matrix, we have $[R_{\epsilon}]_{i,i}=1$ for all $i$, which implies the desired bound.
\end{proof}

\begin{remark}[Impact of Diagonal Dominance on the Coverage of the Correlation Matrix Set]\label{coro:spectrum_radius_diag_dom_corr_cover}
	For a general correlation matrix $R$, one has
	\begin{equation}\label{eq:spectrum_corr_cover}
		0 \le \rho(R) \le \sum_i\lambda_i = |V|,\quad
	\end{equation}
	where $|V|$ denotes the number of variables \citep{holmes1991random}.
	Thus, by imposing an upper bound on the spectral radius as in~\eqref{eq:spectrum_radius_diag_dom_corr} (namely, $\rho(R_{\epsilon})\le2$), the set of allowable correlation matrices is significantly reduced compared to the unconstrained set with the upper bound $|V|$.
\end{remark}

\subsection{Effect on Partial Correlation Matrix}

\begin{restatable}{lemma}{LabelRestatableOmegaInvSpectrum}[Spectral radius of $\Omega^{-1}$ due to diagonally dominant $\Omega$]\label{lemma:inv_omega_diag_dom_spectral_radius}
	In general,
	\[
		\lambda(\Omega^{-1}) < \max_{1\le i\le |V|} \frac{1}{\Delta_i(\Omega)}.
	\]
	For the special construction of~\Cref{coro:omega_psd} with $\Delta_i(\Omega) > 1$, we have
	\[
		\lambda(\Omega^{-1}) < 1.
	\]
\end{restatable}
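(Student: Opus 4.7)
The plan is to reduce the question about the spectrum of $\Omega^{-1}$ to a lower bound on the smallest-magnitude eigenvalue of $\Omega$, and then to obtain that lower bound from the Gershgorin circle theorem, exactly as was done for the upper bound on $\rho(\Omega)$ in \Cref{lemma:spectrum_radius_omega_diag_dom}.

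First I would use the spectral mapping observation that if $\mu$ is an eigenvalue of $\Omega$ (with eigenvector $v$), then $1/\mu$ is an eigenvalue of $\Omega^{-1}$ (with the same eigenvector), so
\[
\rho(\Omega^{-1}) \;=\; \frac{1}{\min_{\mu \in \operatorname{spec}(\Omega)} |\mu|}.
\]
Thus it suffices to produce a strictly positive lower bound on $\min_\mu |\mu|$.

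Next I would invoke Gershgorin: every eigenvalue $\mu$ of $\Omega$ sits in some disk $\{z : |z - \Omega_{ii}| \le \sum_{j\neq i} |\Omega_{ij}|\}$. Because $\Omega$ is diagonally dominant with positive diagonals (by the construction in \Cref{coro:omega_psd}, or more generally under the standing assumption), for the disk that contains $\mu$ we get
\[
|\mu| \;\ge\; \Omega_{ii} - \sum_{j\neq i}|\Omega_{ij}| \;=\; \Delta_i(\Omega).
\]
Taking the worst such $i$ yields $\min_\mu |\mu| \ge \min_i \Delta_i(\Omega)$, which combined with the first step gives
\[
\rho(\Omega^{-1}) \;\le\; \max_{1 \le i \le |V|} \frac{1}{\Delta_i(\Omega)}.
\]

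For the second assertion I would specialise to the construction of \Cref{coro:omega_psd}: there $\Omega_{ii} = 1 + \sum_{j\neq i}|\Omega_{ij}|$, so $\Delta_i(\Omega) = 1$ for every $i$ and the general bound collapses to $\rho(\Omega^{-1}) \le 1$; whenever the construction is perturbed so that $\Delta_i(\Omega) > 1$ uniformly, the bound becomes the strict inequality $\rho(\Omega^{-1}) < 1$ stated in the lemma.

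The main obstacle I anticipate is obtaining the strict rather than weak inequality in the general bound. Gershgorin disks are closed, so an eigenvalue can in principle sit on the boundary, giving only $|\mu| \ge \Delta_i(\Omega)$. The cleanest way to force strictness, which I would use, is to note that $\Omega$ is symmetric positive definite, hence its spectrum is real and strictly positive, and then to sharpen the Gershgorin bound by the standard argument that equality on the boundary forces the corresponding eigenvector to have all components of equal modulus, which combined with strict diagonal dominance at even one index rules out boundary attainment; this recovers the strict inequality written in the statement. I would flag this refinement explicitly rather than hide it in the routine chain of inequalities.
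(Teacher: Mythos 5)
Your argument is correct for the non-strict bound and takes a genuinely different route from the paper. The paper invokes Varah's bound $\|\Omega^{-1}\|_{\infty}\le\max_i 1/\Delta_i(\Omega)$ for strictly diagonally dominant matrices and then applies Gershgorin to the disks of $\Omega^{-1}$ itself; you instead apply Gershgorin directly to $\Omega$, use symmetry and positive definiteness to conclude $\lambda_{\min}(\Omega)\ge\min_i\Delta_i(\Omega)$, and pass to $\Omega^{-1}$ by spectral mapping. Your route is more self-contained (no external norm bound needed) and exploits structure the paper has anyway, namely that $\Omega$ is a symmetric PD covariance matrix so $\rho(\Omega^{-1})=1/\lambda_{\min}(\Omega)$; the paper's route has the advantage of not requiring symmetry. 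Both arrive at the same quantitative bound, and both handle the special case identically by substituting $\Delta_i(\Omega)\ge 1$.

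One caution on the strictness refinement you flag at the end: your proposed repair does not work, and in fact the strict inequality in the lemma is not attainable in general. Take
\[
\Omega=\begin{pmatrix}2 & 1\\ 1 & 2\end{pmatrix},
\]
which is symmetric PD with $\Delta_1(\Omega)=\Delta_2(\Omega)=1>0$, i.e.\ strictly diagonally dominant at every index, yet $\lambda_{\min}(\Omega)=1$ with eigenvector $(1,-1)$, so $\rho(\Omega^{-1})=1=\max_i 1/\Delta_i(\Omega)$ with equality. The Taussky-type boundary argument only shows that a boundary eigenvalue must lie on the boundary of \emph{every} Gershgorin disk (for irreducible matrices), which is exactly what happens here, so boundary attainment is not excluded. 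The honest conclusion from either your argument or the paper's is the weak inequality $\rho(\Omega^{-1})\le\max_i 1/\Delta_i(\Omega)$; the paper's own proof, resting on Varah's bound (also a $\le$), does not establish strictness either. This is a defect of the lemma statement rather than of your proof, but you should state the bound with $\le$ rather than promise a strictness refinement that cannot hold.
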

\begin{proof}
  See \Cref{proof:inv_omega_diag_dom_spectral_radius}.
\end{proof}
\begin{remark}[General relationship between matrix entry interval and spectral radius]\label{lemma:matrix_entry_interval}
	\Cref{eq:spectrum_radius_diag_dom_interval} can be found in Theorem 7 of~\citep{zhan2005extremal}.
	This result is stated for a general real symmetric matrix with entries in a symmetric interval $[-a, a]$.
	Here, replacing $a$ with $\max \bigl(|\Omega^{-1}_{i,j}|\bigr)$, we have:
	\begin{equation}
		s(\Omega^{-1}) := \lambda_{\max}(\Omega^{-1}) - \lambda_{\min}(\Omega^{-1}) \le \sqrt{2}\max\Bigl(|\Omega^{-1}_{i,j}|\Bigr)|V|.\label{eq:spectrum_radius_diag_dom_interval}
	\end{equation}
	In general, a small spectral radius of $\Omega^{-1}$ (as in~\Cref{lemma:inv_omega_diag_dom_spectral_radius}) implies that $\max\bigl(|\Omega^{-1}_{i,j}|\bigr)$ is small, especially as the network size $|V|$ grows.
\end{remark}

\begin{remark}\label{remark:sigma_pos_omega_inv_le2_hillar}
	If $\Omega$ has positive entries (that is, for diagonally dominant, nonnegative covariance matrices $\Omega$) then its inverse (the precision matrix) satisfies the property
	\begin{align}
		\|\Omega^{-1}\|_{\infty} \le \frac{3|V|-4}{2\ell(|V|-2)(|V|-1)} & =: B(V),                        \\
		\ell                                                            & := \min \Omega_{i,j}, \nonumber
	\end{align}
	as shown in \citet{hillar2012inverses}.
	Then by~\Cref{lemma:inv_omega_diag_dom_spectral_radius},
	\[
		\lambda(\Omega^{-1}) < B(V).
	\]
	Thus, the spectrum of the precision matrix is bounded by $[0, B(V)]$, irrespective of $\Delta_i(\Omega)$.
\end{remark}

\begin{remark}[Congruent Transformation of a Diagonally Dominant Symmetric Matrix]\label{lemma:diag_dom_diag_congruent2non_diag_dom}
	In general, a congruent transformation does not preserve the diagonally dominant structure of a matrix.
	However, consider a matrix $\Sigma$ that is not diagonally dominant but has rapidly growing diagonal entries.
	Then, there exists a diagonal matrix $D=\operatorname{diag}(1,\rho,\ldots,\rho^{|V|-1})$ such that
	\[
		\Omega = D^T\Sigma D
	\]
	is diagonally dominant~\citep{barlow1990computing,lemma_connect_diag_dom2non}.
	Note that any congruent transformation preserves positive semidefiniteness.
\end{remark}

\begin{definition}[Partial Correlation]\label{def:partial_corr}
	Let the residual of the regression of variable $i$ on $V \setminus \{i,j\}$ be given by
	\begin{equation}
		\hat{\epsilon}_{i|i,j} = \min_{\beta} \left\| Y_i - \sum_{k\neq i,j}\beta_{i,k}Y_k \right\|,
	\end{equation}
	then the partial correlation between $i$ and $j$ (conditioning on the rest) is defined as
	\begin{equation}
		\rho_{i,j|\cdot} = \frac{\mathbb{E}(\hat{\epsilon}_{i|i,j}\hat{\epsilon}_{j|i,j})}{\sqrt{\mathbb{E}(\hat{\epsilon}_{i|i,j}^2)\mathbb{E}(\hat{\epsilon}_{j|i,j}^2)}}.
		\label{eq:def_partial_corr}
	\end{equation}
\end{definition}

\begin{restatable}{lemma}{LabelRestatableLemmaAlterResidualParCorr}\label{lemma:par_corr_alter_residual}

	Let the residual from the regression of variable $i$ on $V \setminus \{i,j\}$ be defined as
	\begin{equation}
		\hat{\epsilon}_{i|i,j} = \min_{\beta} \left\| Y_i - \sum_{k\neq i,j}\beta_{i,k}Y_k \right\|.
	\end{equation}
	Let the residual from the regression of variable $i$ on $V \setminus \{i\}$ be defined as
	\begin{equation}
		\hat{\epsilon}_{i|i} = \min_{\beta} \left\| Y_i - \sum_{k\neq i}\beta_{i,k}Y_k \right\|.
	\end{equation}
	Then the partial correlation can be alternatively expressed as
	\begin{equation}
          \rho_{i,j\mid \cdot}=\frac{\mathbb{E}(\hat{\epsilon}_{i|i,j}\hat{\epsilon}_{j|i,j})}{\sqrt{\mathbb{E}(\hat{\epsilon}_{i|i,j}^2)\mathbb{E}(\hat{\epsilon}_{j|i,j}^2)}}
 = -\frac{\mathbb{E}(\hat{\epsilon}_{i|i}\hat{\epsilon}_{j|j})}{\sqrt{\mathbb{E}(\hat{\epsilon}_{i|i}^2)\mathbb{E}(\hat{\epsilon}_{j|j}^2)}}.
		\label{eq:def_partial_corr_alt}
	\end{equation}
\end{restatable}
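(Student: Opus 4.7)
The plan is to recast both residuals as orthogonal projections in the Hilbert space of mean-zero square-integrable random variables equipped with the inner product $\langle X, Y\rangle = \mathbb{E}(XY)$. Write $M_{ij} = \mathrm{span}\{Y_k : k\notin\{i,j\}\}$ and $M_i = \mathrm{span}\{Y_k : k \neq i\}$, so that by the defining minimum-norm property $\hat{\epsilon}_{i|i,j} = Y_i - P_{M_{ij}} Y_i$ and $\hat{\epsilon}_{i|i} = Y_i - P_{M_i} Y_i$, where $P_M$ denotes orthogonal projection onto $M$. The crucial observation is the orthogonal direct-sum decomposition
\begin{equation*}
M_i \;=\; M_{ij} \;\oplus\; \mathrm{span}\{\hat{\epsilon}_{j|i,j}\},
\end{equation*}
which holds because $M_i$ is obtained from $M_{ij}$ by adjoining $Y_j = P_{M_{ij}} Y_j + \hat{\epsilon}_{j|i,j}$, and the residual $\hat{\epsilon}_{j|i,j}$ is by construction orthogonal to $M_{ij}$.

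From this decomposition $P_{M_i} = P_{M_{ij}} + P_{\mathrm{span}\{\hat{\epsilon}_{j|i,j}\}}$, and using the identity $\langle Y_i, \hat{\epsilon}_{j|i,j}\rangle = \langle \hat{\epsilon}_{i|i,j}, \hat{\epsilon}_{j|i,j}\rangle$ (since $\hat{\epsilon}_{j|i,j}\perp M_{ij}$), I would obtain
\begin{equation*}
\hat{\epsilon}_{i|i} \;=\; \hat{\epsilon}_{i|i,j} - \frac{\langle \hat{\epsilon}_{i|i,j},\,\hat{\epsilon}_{j|i,j}\rangle}{\|\hat{\epsilon}_{j|i,j}\|^{2}}\,\hat{\epsilon}_{j|i,j},
\end{equation*}
and by swapping the roles of $i$ and $j$, an analogous expression for $\hat{\epsilon}_{j|j}$. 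Setting $a := \hat{\epsilon}_{i|i,j}$, $b := \hat{\epsilon}_{j|i,j}$, and $c := \langle a,b\rangle$, direct expansion then yields the two quantities
\begin{equation*}
\langle \hat{\epsilon}_{i|i},\hat{\epsilon}_{j|j}\rangle \;=\; -c\Bigl(1 - \tfrac{c^{2}}{\|a\|^{2}\|b\|^{2}}\Bigr), \qquad \|\hat{\epsilon}_{i|i}\|^{2}\,\|\hat{\epsilon}_{j|j}\|^{2} \;=\; \|a\|^{2}\|b\|^{2}\Bigl(1 - \tfrac{c^{2}}{\|a\|^{2}\|b\|^{2}}\Bigr)^{2},
\end{equation*}
after collecting terms. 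Forming the ratio, the common factor $1 - c^{2}/(\|a\|^{2}\|b\|^{2})$ cancels and leaves $-c/(\|a\|\,\|b\|)$, which is precisely $-\rho_{i,j|\cdot}$; rearranging gives the claimed identity~\eqref{eq:def_partial_corr_alt}.

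The main obstacle I anticipate is not conceptual but bookkeeping: one must verify that the cancelled factor $1 - c^{2}/(\|a\|^{2}\|b\|^{2})$ is indeed nonnegative (immediate from Cauchy--Schwarz) and strictly positive under the implicit nondegeneracy assumption that $Y_i$ and $Y_j$ are not linearly dependent modulo $M_{ij}$ (otherwise partial correlation is undefined anyway), and that the positive square root is taken consistently so that the global minus sign on the right-hand side of~\eqref{eq:def_partial_corr_alt} survives. The orthogonal-decomposition step itself is the load-bearing geometric idea and mirrors the derivation in~\citep{antti1983connection}; the remainder is a routine but sign-sensitive calculation.
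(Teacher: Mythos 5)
Your proposal is correct and follows essentially the same route as the paper's proof: the load-bearing step in both is the decomposition $P_{M_i} = P_{M_{ij}} + P_{\mathrm{span}\{\hat{\epsilon}_{j|i,j}\}}$ (the paper derives it via a block LDU/Schur-complement factorization of the Gram matrix $Y_{\setminus i}^T Y_{\setminus i}$, citing the same reference, whereas you invoke the orthogonal direct sum directly), followed by expanding $\langle \hat{\epsilon}_{i|i}, \hat{\epsilon}_{j|j}\rangle = -c\bigl(1 - c^2/(\|a\|^2\|b\|^2)\bigr)$. If anything your write-up is slightly more complete, since you explicitly compute the denominators $\|\hat{\epsilon}_{i|i}\|^2\|\hat{\epsilon}_{j|j}\|^2$ and exhibit the cancellation of the common factor, a step the paper's proof leaves implicit.
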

\begin{proof}
See~\Cref{proof:par_corr_exclude_one_var}.
\end{proof}

\begin{restatable}{lemma}{LabelLemmaPartialCorrPrecision}\label{lemma:par_corr_precision_mat}
	Let
	\[
		\Sigma^{-1} = \Bigl[(I-W)^{-1}\Omega (I-W)^{-T}\Bigr]^{-1}
	\]
	be the precision matrix.
	Then, the partial correlation between variables $i$ and $j$ can be expressed as
	\begin{align}
                &\rho_{i,j|\cdot} \nonumber\\
                & = -\frac{[\Sigma^{-1}]_{i,j}}{\sqrt{[\Sigma^{-1}]_{i,i}\,[\Sigma^{-1}]_{j,j}}} \label{eq:par_corr_first}                                                                    \\
		                 & = -\frac{\Bigl[(I-W)^{-1}\Omega^{-1}(I-W)^{-T}\Bigr]_{i,j}}{\sqrt{\Bigl[(I-W)^{-1}\Omega^{-1}(I-W)^{-T}\Bigr]_{i,i}\,\Bigl[(I-W)^{-1}\Omega^{-1}(I-W)^{-T}\Bigr]_{j,j}}}\,.
		\label{eq:par_corr}
	\end{align}
	The partial correlation matrix can equivalently be written as
	\begin{equation}
		\tilde{R} = -\operatorname{diag}\Bigl(\sqrt{[\Sigma^{-1}]^{-1}_{ii}}\Bigr)\,\Sigma^{-1}\,\operatorname{diag}\Bigl(\sqrt{[\Sigma^{-1}]^{-1}_{ii}}\Bigr)
		\label{eq:expression_partial_corr_matrix}
	\end{equation}
	or, in terms of the correlation matrix,
	\begin{equation}
		\tilde{R} = -\operatorname{diag}\Bigl(\sqrt{[R^{-1}]^{-1}_{ii}}\Bigr)\,R^{-1}\,\operatorname{diag}\Bigl(\sqrt{[R^{-1}]^{-1}_{ii}}\Bigr)
		\label{eq:expression_partial_corr_matrix_corr_mat}.
	\end{equation}
\end{restatable}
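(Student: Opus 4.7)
The plan is to lean on \Cref{lemma:par_corr_alter_residual}, which re-expresses $\rho_{i,j|\cdot}$ in terms of the residuals $\hat{\epsilon}_{i|i}$ and $\hat{\epsilon}_{j|j}$ obtained by regressing each target variable on \emph{all} remaining variables rather than excluding the other. This form is convenient because such full-model residuals and their variances are directly encoded in the precision matrix $\Sigma^{-1}$, so the identity~\eqref{eq:par_corr_first} will drop out essentially by recognition once we expose that encoding.

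First, I would prove the well-known read-off: for the optimal linear regression of $Y_i$ on $Y_{V\setminus\{i\}}$, the coefficients satisfy $\beta_{i,k}=-\Sigma^{-1}_{i,k}/\Sigma^{-1}_{i,i}$ and the residual variance equals $1/\Sigma^{-1}_{i,i}$. The cleanest route is the Schur complement applied to the $2\times 1$ vs $(|V|-1)\times(|V|-1)$ block decomposition of $\Sigma$ induced by isolating row/column $i$; normal equations then pin down the coefficients, and inverting the Schur complement gives the residual variance. With this in hand, $\hat{\epsilon}_{i|i} = \sum_k (\Sigma^{-1}_{i,k}/\Sigma^{-1}_{i,i})\,Y_k$ up to the sign convention, and a direct computation gives $\mathbb{E}(\hat{\epsilon}_{i|i}\hat{\epsilon}_{j|j}) = \Sigma^{-1}_{i,j}/(\Sigma^{-1}_{i,i}\Sigma^{-1}_{j,j})$, because the double sum against $\Sigma$ collapses via $\Sigma\,\Sigma^{-1}=I$. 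Substituting these moments into the alternative expression from \Cref{lemma:par_corr_alter_residual} and simplifying the square root yields~\eqref{eq:par_corr_first}.

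Then, for~\eqref{eq:par_corr} I would substitute the SEM-induced closed form $\Sigma=(I-W)^{-1}\Omega(I-W)^{-T}$ and invert using $(ABC)^{-1}=C^{-1}B^{-1}A^{-1}$, so that the $(i,j)$ entry of $\Sigma^{-1}$ is written in terms of $\Omega^{-1}$ and $(I-W)$. Plugging this into~\eqref{eq:par_corr_first} gives the claimed fraction. The matrix-form identities~\eqref{eq:expression_partial_corr_matrix} and~\eqref{eq:expression_partial_corr_matrix_corr_mat} follow by assembling the per-entry identities into a single equation via left and right multiplication by the diagonal matrix $\operatorname{diag}(\sqrt{1/[\Sigma^{-1}]_{ii}})$; for the correlation-matrix version I would additionally use that $R=D^{-1/2}\Sigma D^{-1/2}$ with $D=\operatorname{diag}(\Sigma)$, so $R^{-1}=D^{1/2}\Sigma^{-1}D^{1/2}$ and the diagonal rescalings on the outside absorb the $D^{1/2}$ factors, leaving the same partial-correlation matrix $\tilde R$.

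The main obstacle I anticipate is the bookkeeping rather than any deep idea: first, the sign convention in the residual read-off (the minus sign in the lemma statement comes precisely from $\beta_{i,k}=-\Sigma^{-1}_{i,k}/\Sigma^{-1}_{i,i}$, and one must track it consistently through the cross-moment); second, verifying that passing from $\Sigma^{-1}$ to $R^{-1}$ in the matrix form really does leave $\tilde R$ invariant, which requires checking that the outer diagonal rescalings $\operatorname{diag}(\sqrt{[R^{-1}]^{-1}_{ii}})$ and $\operatorname{diag}(\sqrt{[\Sigma^{-1}]^{-1}_{ii}})$ are related by the same $D^{1/2}$ factors that relate $R^{-1}$ to $\Sigma^{-1}$, so that everything cancels. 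Once those two points are handled, the derivation is essentially mechanical linear algebra.
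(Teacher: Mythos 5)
Your proposal is correct and takes essentially the same route as the paper: both arguments reduce to \Cref{lemma:par_corr_alter_residual} and then identify the variances and cross-moment of the full-regression residuals with entries of the precision matrix -- the paper phrases this as a dual-basis computation in which the residual coefficient matrix $\beta'$ equals $S^{-1}$ and $\langle X_i^{*},X_j^{*}\rangle=[S^{-1}]_{i,j}$, which is exactly your Schur-complement/normal-equations read-off $\beta_{i,k}=-[\Sigma^{-1}]_{i,k}/[\Sigma^{-1}]_{i,i}$ with residual variance $1/[\Sigma^{-1}]_{i,i}$ and cross-moment $[\Sigma^{-1}]_{i,j}/([\Sigma^{-1}]_{i,i}[\Sigma^{-1}]_{j,j})$. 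One caveat worth recording: your inversion step $(ABC)^{-1}=C^{-1}B^{-1}A^{-1}$ correctly gives $\Sigma^{-1}=(I-W)^{T}\Omega^{-1}(I-W)$ rather than the $(I-W)^{-1}\Omega^{-1}(I-W)^{-T}$ displayed in \eqref{eq:par_corr}, so that display is a typo in the statement which neither your argument nor the paper's own proof (which stops at the generic precision-matrix identity) establishes as written.
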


\begin{proof}
	See \Cref{proof:par_corr_precision}.
\end{proof}

\begin{restatable}{lemma}{LabelLemmaInvSigmaSpectrum}\label{lemma:spectrum_inv_sigma_congruent}
	Although a congruent transformation does not generally preserve the diagonally dominant structure of $\Omega$ (see \Cref{lemma:diag_dom_diag_congruent2non_diag_dom}), the precision matrix $\Sigma^{-1}$ defined in \eqref{eq:corr_y} preserves the spectral structure of $\Omega^{-1}$ via a magnifying factor $\theta$ bounded by the spectral radius $\rho(W+W^T)$.
	Specifically,
	\begin{align}
		\rho(\Sigma^{-1}) & \le \theta\, \rho(\Omega^{-1}) = \theta \max_i \frac{1}{\Delta_i(\Omega)} \label{eq:spectrum_inv_sigma_bound} \\[1mm]
		\theta            & \le \rho(W+W^T) + 1\,.
	\end{align}
\end{restatable}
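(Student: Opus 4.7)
The plan is to invert the congruence in \eqref{eq:corr_y}, apply Ostrowski's theorem on congruent transformations to extract a per-eigenvalue magnification factor, and then bound that factor via a Weyl-type decomposition of $(I-W)^{T}(I-W)$ that exposes $W+W^{T}$.

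First I would invert \eqref{eq:corr_y} to obtain $\Sigma^{-1} = (I-W)^{T}\,\Omega^{-1}\,(I-W)$, which is a congruence of the positive definite matrix $\Omega^{-1}$ by $M = I-W$. Ostrowski's inertia theorem---or, equivalently, a direct Rayleigh-quotient argument in which the substitution $y = (I-W)x$ converts $x^{T}\Sigma^{-1}x$ into $y^{T}\Omega^{-1}y$ and one bounds the latter by $\lambda_{\max}(\Omega^{-1})\|y\|^{2}$---yields $\lambda_i(\Sigma^{-1}) = \theta_i\,\lambda_i(\Omega^{-1})$ with $0 < \theta_i \le \|I-W\|_2^2$. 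Setting $\theta := \|I-W\|_2^2$ and specializing to the largest eigenvalue gives the first inequality $\rho(\Sigma^{-1}) \le \theta\,\rho(\Omega^{-1})$, after which the identity $\rho(\Omega^{-1}) = \max_i 1/\Delta_i(\Omega)$ is supplied by \Cref{lemma:inv_omega_diag_dom_spectral_radius}.

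Next I would control $\theta$ by expanding $(I-W)^{T}(I-W) = I - (W+W^{T}) + W^{T}W$, so that Weyl's inequality together with $|\lambda_{\min}(W+W^{T})| \le \rho(W+W^{T})$ yields
\[
\|I-W\|_2^2 \;\le\; 1 + \rho(W+W^{T}) + \|W\|_2^2.
\]
Under the paper's small-weight, nilpotent-DAG regime, in which edge weights are constrained to a small interval such as $[-0.5, 0.5]$ and $W^{n}=0$, the residual quadratic term $\|W\|_2^2$ is lower-order compared with $\rho(W+W^{T})$, producing the stated bound $\theta \le \rho(W+W^{T})+1$.

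The hard part will be the clean disposal of the quadratic correction $W^{T}W$: a purely symbolic Weyl argument leaves $\|W\|_2^2$ intact, and a small $2\times 2$ nilpotent example already shows that $\theta \le \rho(W+W^{T})+1$ can fail without some hypothesis on the magnitude of $W$. The rigorous route is therefore either to spell out the small-weight assumption explicitly or to exploit nilpotency of $W$ through the finite Neumann expansion $(I-W)^{-1} = \sum_{k=0}^{n-1} W^{k}$ to sharpen the estimate on the unwanted cross-term; in either case the overall scaffolding---congruence followed by Weyl---remains the one sketched above, and combining the two bounds closes \eqref{eq:spectrum_inv_sigma_bound}.
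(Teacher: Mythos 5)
Your scaffolding --- inverting the congruence to get $\Sigma^{-1}=(I-W)^{T}\Omega^{-1}(I-W)$, invoking Ostrowski's theorem to write $\lambda_k(\Sigma^{-1})=\theta_k\,\lambda_k(\Omega^{-1})$ with $\theta_k\le\lambda_{\max}\bigl((I-W)^{T}(I-W)\bigr)$, and then expanding $(I-W)^{T}(I-W)=I-(W+W^{T})+W^{T}W$ --- is exactly the paper's route. The only divergence is in how the quadratic term $W^{T}W$ is disposed of, and there you have put your finger on a genuine defect. The paper's own proof writes $W^{T}W=(PL^{T}P^{T})(PLP^{T})=PL^{2}P^{T}$ and then drops it as zero; but $P^{T}P=I$ actually gives $W^{T}W=PL^{T}LP^{T}$, and $L^{T}L$ is a nonzero positive semidefinite matrix whenever $L\neq 0$ --- it is not the nilpotent power $L^{2}$ and does not vanish. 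Hence the paper's identity $(I-W)^{T}(I-W)=I-W-W^{T}$ is false, and the clean bound $\theta\le\rho(W+W^{T})+1$ does not follow from that computation.

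Your instinct that a small nilpotent example breaks the bound is correct, and even the paper's small-weight regime does not rescue it: for $|V|=2$ with the single entry $W_{2,1}=a=0.5$ one computes $\lambda_{\max}\bigl((I-W)^{T}(I-W)\bigr)=\bigl(2+a^{2}+\sqrt{(2+a^{2})^{2}-4}\bigr)/2\approx 1.64$, whereas $\rho(W+W^{T})+1=1.5$. So neither your Weyl estimate $\|I-W\|_{2}^{2}\le 1+\rho(W+W^{T})+\|W\|_{2}^{2}$, which honestly retains the $\|W\|_{2}^{2}$ term, nor the paper's cancellation closes the gap. Your proposal is therefore incomplete as a proof of the lemma as stated, but for the right reason: the extra term you refused to discard really is there, and the corrected statement should read $\theta\le\lambda_{\max}\bigl((I-W)^{T}(I-W)\bigr)=\|I-W\|_{2}^{2}$, or carry the additional $\|W\|_{2}^{2}$ contribution explicitly. (A minor further caveat shared by both arguments: \Cref{lemma:inv_omega_diag_dom_spectral_radius} gives $\rho(\Omega^{-1})\le\max_i 1/\Delta_i(\Omega)$ as an inequality, not the equality written in \eqref{eq:spectrum_inv_sigma_bound}.)
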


\begin{proof}
	See \Cref{proof:inv_Sigma_spectrum}.
\end{proof}

\begin{lemma}\label{lemma:spectrum_radius_un_graph}
	The spectral radius of $W+W^T$ is bounded by
	\[
		\rho(W+W^T) \le d_{\max}\, \max_{i,j} W_{i,j}\,,
	\]
	where $d_{\max}$ is the maximum degree.
\end{lemma}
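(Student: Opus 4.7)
The plan is to bound the spectral radius of the symmetric matrix $W+W^T$ by a row-sum (infinity) norm argument, invoking the Gershgorin circle theorem, and then to control that row sum by the maximum degree times the maximum absolute entry of $W$. This is a classical combinatorial-spectral estimate and the only nontrivial point is matching the counting of off-diagonal nonzeros in $W+W^T$ to the degree $d_{\max}$ of the underlying undirected graph.

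First I would observe that $W+W^T$ is a real symmetric matrix, so its spectrum is real and $\rho(W+W^T)=\|W+W^T\|_2$. Applying Gershgorin's theorem (Theorem 6.1.1 of Horn--Johnson, already cited in the excerpt) to this symmetric matrix gives
\begin{equation}
\rho(W+W^T)\;\le\;\max_i \sum_j \bigl|[W+W^T]_{i,j}\bigr|\;=\;\|W+W^T\|_\infty .
\end{equation}
Since $W$ encodes a DAG via \Cref{eq:wplp}, we have $W_{i,i}=0$, so the diagonal of $W+W^T$ vanishes and the sum above reduces to $\sum_{j\neq i}|W_{i,j}+W_{j,i}|$.

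Next I would use acyclicity to decouple the two triangular parts. Because $W$ represents a DAG, for each unordered pair $\{i,j\}$ at most one of $W_{i,j}$ and $W_{j,i}$ can be nonzero, hence $|W_{i,j}+W_{j,i}|=|W_{i,j}|+|W_{j,i}|$. Therefore
\begin{equation}
\sum_{j\neq i}\bigl|W_{i,j}+W_{j,i}\bigr| \;=\; \sum_{j\neq i}|W_{i,j}| + \sum_{j\neq i}|W_{j,i}|
\;\le\; \bigl(d^{\mathrm{in}}(i)+d^{\mathrm{out}}(i)\bigr)\,\max_{i,j}|W_{i,j}|,
\end{equation}
where the bracketed sum is exactly the total degree of vertex $i$ in the underlying undirected skeleton of the DAG. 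Taking the maximum over $i$ and combining with the Gershgorin bound yields $\rho(W+W^T)\le d_{\max}\,\max_{i,j}|W_{i,j}|$, which is the claim.

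The only subtlety I would want to flag in the write-up is the interpretation of $d_{\max}$: it must be read as the maximum undirected degree (sum of in-degree and out-degree) of the DAG, since that is what arises when we sum the row of $W$ together with the corresponding row of $W^T$. No other technical hurdle is anticipated; the statement is essentially a symmetric-matrix specialization of Gershgorin combined with the observation that the directed adjacency structure of a DAG guarantees that the triangular parts of $W$ contribute disjoint supports in each row sum.
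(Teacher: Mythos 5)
Your proposal is correct and follows essentially the same route as the paper: the paper's inline eigenvector-normalization argument is precisely a proof of the row-sum (Gershgorin) bound that you invoke by name, and both proofs then bound the row sum of $W+W^T$ by $d_{\max}$ times the largest entry, using (explicitly in your case, implicitly in the paper's) the fact that acyclicity makes the supports of $W$ and $W^T$ disjoint off the diagonal. Your added care about $d_{\max}$ meaning in-degree plus out-degree, and about writing $\max_{i,j}|W_{i,j}|$ rather than $\max_{i,j}W_{i,j}$, are welcome clarifications but not a different argument.
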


\begin{proof}
	Normalize an eigenvector $v$ of $\bar{W} = W+W^T$ so that for some index $i^*$ we have $v_{i^*}=1$ and $|v_j|\le1$ for all $j$.
	Then, using
	\[
		\bar{W}v = \lambda(\bar{W})\,v,
	\]
	its $i^*$th component satisfies
	\[
		\sum_j \bar{W}_{i^*,j} v_j = \lambda(\bar{W})\,.
	\]
	Thus,
	\[
		|\lambda(\bar{W})| \le \sum_j |\bar{W}_{i^*,j}| \le d_{\max}\, \max_{j} |\bar{W}_{i^*,j}| \le d_{\max}\, \max_{i,j} |\bar{W}_{i,j}|,
	\]
	and since $\max_{i,j} |\bar{W}_{i,j}| = \max_{i,j} W_{i,j}$ the claim follows.
\end{proof}

\begin{remark}
	In many settings the adjacency matrix $W$ is sparse with small entries.
	For example, in \citet{wangdrton23}, $\max W_{i,j}=0.5$, and if the maximum degree is on the order of $|V|$, then $d_{\max}\,\max W_{i,j}<1$ with high probability.
\end{remark}

\begin{corollary}\label{coro:spectrum_radius_sigma_inv}
	The spectral radius of the precision matrix $\Sigma^{-1}$ satisfies
	\[
		\rho(\Sigma^{-1}) \le \Bigl(d_{\max}\,\max W_{i,j}+1\Bigr)
		\max_i \frac{1}{\Delta_i(\Omega)}\,.
	\]
\end{corollary}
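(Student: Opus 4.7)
The plan is to obtain the stated bound by a direct chain composition of the two lemmas immediately preceding the corollary, with no new machinery required. I would first invoke \Cref{lemma:spectrum_inv_sigma_congruent}, which provides the inequality
\[
\rho(\Sigma^{-1}) \le \theta\,\rho(\Omega^{-1}) = \theta \max_i \frac{1}{\Delta_i(\Omega)},
\qquad \theta \le \rho(W+W^T)+1,
\]
thereby reducing the question to an upper bound on the magnifying factor $\theta$ in terms of graphical quantities.

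Next, I would apply \Cref{lemma:spectrum_radius_un_graph}, which upper bounds $\rho(W+W^T)$ by $d_{\max}\,\max_{i,j} W_{i,j}$. Substituting this bound into the bound on $\theta$ gives $\theta \le d_{\max}\,\max_{i,j}W_{i,j}+1$, and substituting again into the inequality from \Cref{lemma:spectrum_inv_sigma_congruent} yields
\[
\rho(\Sigma^{-1}) \le \bigl(d_{\max}\,\max_{i,j}W_{i,j}+1\bigr)\,\max_i \frac{1}{\Delta_i(\Omega)},
\]
which is exactly the claimed corollary.

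Since each factor is monotone and the bounds all point in the same direction, the chaining is unconditional and there is essentially no obstacle — the corollary is a formal consequence of the two preceding results. The only subtlety worth pointing out in the write-up is that monotonicity in $\rho(W+W^T)$ is used implicitly when passing from the bound on $\theta$ in \Cref{lemma:spectrum_inv_sigma_congruent} to the combined bound: since both $\rho(W+W^T)+1$ and $d_{\max}\,\max_{i,j}W_{i,j}+1$ are positive and $\max_i 1/\Delta_i(\Omega) \ge 0$ (recall $\Delta_i(\Omega)\ge 0$ by diagonal dominance, and in the nontrivial case stated it is strictly positive), the substitution preserves the inequality.
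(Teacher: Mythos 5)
Your proposal is correct and matches the paper's own proof, which likewise obtains the corollary by chaining \Cref{lemma:spectrum_inv_sigma_congruent} with \Cref{lemma:spectrum_radius_un_graph} (the paper additionally cites \Cref{lemma:inv_omega_diag_dom_spectral_radius}, but that bound is already folded into the statement of \Cref{lemma:spectrum_inv_sigma_congruent} via the identity $\rho(\Omega^{-1}) = \max_i 1/\Delta_i(\Omega)$, so the substance is identical). Your remark on the positivity needed for the monotone substitution is a reasonable, if minor, addition.
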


\begin{proof}
	This follows directly from \Cref{lemma:spectrum_inv_sigma_congruent}, \Cref{lemma:spectrum_radius_un_graph} and \Cref{lemma:inv_omega_diag_dom_spectral_radius}.
\end{proof}

\begin{restatable}{lemma}{LabelRestatebleLemmaPrecisionDiagLb}\label{lemma:lb_precision_diag_bt_1_over_sigma_diag_vif}
	The diagonal elements of the precision matrix are lower bounded by the reciprocals of the corresponding diagonal entries of the covariance matrix, i.e.,
	\[
		\Sigma_{p,p}^{-1}\ge \frac{1}{\Sigma_{p,p}}\,.
	\]
	See also \citet[Lemma 6]{harris2013pc}.
\end{restatable}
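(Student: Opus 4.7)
The plan is to establish the identity connecting the $(p,p)$ entry of the precision matrix to the Schur complement of the covariance matrix when the $p$-th row and column are separated, then invoke positive semidefiniteness of the complementary block to conclude the bound. Since this is essentially the variance-inflation-factor identity, the content is routine but the bookkeeping of block inverses must be done carefully.

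First I would write $\Sigma$ in the block form
\[
\Sigma = \begin{pmatrix} \Sigma_{p,p} & \Sigma_{p,-p} \\ \Sigma_{-p,p} & \Sigma_{-p,-p} \end{pmatrix},
\]
where subscripts $-p$ denote ``all indices except $p$''. Since $\Sigma$ is symmetric positive definite, the block $\Sigma_{-p,-p}$ is also SPD and therefore invertible. Applying the standard block-matrix inverse formula (Schur complement with respect to $\Sigma_{-p,-p}$) yields
\[
[\Sigma^{-1}]_{p,p} \;=\; \bigl(\Sigma_{p,p} - \Sigma_{p,-p}\,\Sigma_{-p,-p}^{-1}\,\Sigma_{-p,p}\bigr)^{-1}.
\]
This is the main algebraic identity driving the proof.

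Next I would observe that the quadratic form $\Sigma_{p,-p}\,\Sigma_{-p,-p}^{-1}\,\Sigma_{-p,p}$ is a scalar of the form $u^T \Sigma_{-p,-p}^{-1} u$ with $u = \Sigma_{-p,p}$, and since $\Sigma_{-p,-p}^{-1}$ is positive definite, this scalar is nonnegative. Consequently,
\[
0 \;\le\; \Sigma_{p,p} - \Sigma_{p,-p}\,\Sigma_{-p,-p}^{-1}\,\Sigma_{-p,p} \;\le\; \Sigma_{p,p},
\]
where the lower bound uses positive definiteness of $\Sigma$ itself (the Schur complement of a SPD matrix is SPD). Taking reciprocals reverses the inequality and gives the claim
\[
[\Sigma^{-1}]_{p,p} \;\ge\; \frac{1}{\Sigma_{p,p}}.
\]

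As an optional interpretive remark, I would note the regression reading: the Schur complement $\Sigma_{p,p} - \Sigma_{p,-p}\,\Sigma_{-p,-p}^{-1}\,\Sigma_{-p,p}$ is precisely the residual variance $\mathbb{E}(\hat{\epsilon}_{p|p}^2)$ from the regression in \Cref{lemma:par_corr_alter_residual}, so the ratio $[\Sigma^{-1}]_{p,p}\,\Sigma_{p,p} = 1/(1-R^2_p)$ is the variance inflation factor, making the bound transparent. The only mild obstacle is confirming that the Schur complement is strictly positive (needed to take reciprocals); this is immediate from the SPD hypothesis on $\Sigma$, so there is no real technical hurdle.
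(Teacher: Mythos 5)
Your proposal is correct and follows essentially the same route as the paper: both start from the block-inverse (Schur complement) identity $[\Sigma^{-1}]_{p,p} = \bigl(\Sigma_{p,p} - \Sigma_{p,-p}\,\Sigma_{-p,-p}^{-1}\,\Sigma_{-p,p}\bigr)^{-1}$ and then argue that the subtracted quadratic form is nonnegative. The paper establishes that nonnegativity by rewriting $v^T S^{-1} v$ as $(S^{-1}v)^T S (S^{-1}v)$ and interpreting it as the variance of the fitted values $\hat{X}_p$ in a regression, whereas you invoke positive definiteness of $\Sigma_{-p,-p}^{-1}$ directly; these are the same underlying fact, so no substantive difference.
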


\begin{proof}
	See \Cref{lemma:lower_bound_precision_mat_diag_vif_proof}.
\end{proof}

\begin{corollary}\label{corollary:variance_inflation_factor_corr}
	The diagonal elements of the inverse correlation matrix satisfy
	\[
		R_{p,p}^{-1}\ge 1\,.
	\]
\end{corollary}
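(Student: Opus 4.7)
The plan is to obtain this corollary as an almost immediate consequence of the preceding \Cref{lemma:lb_precision_diag_bt_1_over_sigma_diag_vif}, since the correlation matrix $R$ is itself the covariance matrix of the standardized variables $\tilde Y_i := Y_i / \sqrt{\Sigma_{i,i}}$, and its diagonal entries are all equal to $1$.

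Concretely, I would first recall the standardization identity
\[
R = D^{-1/2}\,\Sigma\,D^{-1/2}, \qquad D := \operatorname{diag}(\Sigma),
\]
which is the definition of the correlation matrix used in \Cref{def:corr_matrix_epsilon}. Inverting gives $R^{-1} = D^{1/2}\,\Sigma^{-1}\,D^{1/2}$, so the diagonal entries satisfy
\[
R^{-1}_{p,p} \;=\; \Sigma_{p,p}\,\Sigma^{-1}_{p,p}.
\]
Applying \Cref{lemma:lb_precision_diag_bt_1_over_sigma_diag_vif}, namely $\Sigma^{-1}_{p,p} \ge 1/\Sigma_{p,p}$, immediately yields $R^{-1}_{p,p} \ge 1$.

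An alternative route, conceptually cleaner, is to observe that \Cref{lemma:lb_precision_diag_bt_1_over_sigma_diag_vif} was stated for an arbitrary covariance matrix, and $R$ is the covariance matrix of the standardized vector $\tilde Y$. Applying the lemma directly to $R$ in place of $\Sigma$ gives $R^{-1}_{p,p} \ge 1/R_{p,p} = 1$, because $R_{p,p} = 1$ by construction.

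There is essentially no obstacle: the only mild subtlety is making sure the lemma is invoked on the correct matrix (either reinterpreted as the covariance of the standardized variables, or via the explicit sandwich identity $R^{-1} = D^{1/2}\Sigma^{-1} D^{1/2}$), but both framings are elementary. I would write the proof in at most three lines using the second (direct) interpretation, since it avoids any explicit computation with $D$.
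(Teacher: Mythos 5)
Your proposal is correct, and your preferred (second) route is exactly the paper's own proof: apply \Cref{lemma:lb_precision_diag_bt_1_over_sigma_diag_vif} directly to $R$ viewed as a covariance matrix with $R_{p,p}=1$. The sandwich-identity variant is a fine alternative but adds nothing beyond the direct application.
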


\begin{proof}
	As $R_{p,p} = \Sigma_{p,p} = 1$ for a correlation matrix, the result follows immediately from \Cref{lemma:lb_precision_diag_bt_1_over_sigma_diag_vif}.
\end{proof}

\begin{restatable}{lemma}{LabelRestatableLemmaUbDiagSigmaOmega}\label{lemma:ub_diag_Sigma_omega}
	We have
	\[
		\Sigma_{i,i} \le 2\,\max_i |\Omega_{i,i}|\,.
	\]
\end{restatable}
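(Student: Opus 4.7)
The plan is to bound $\Sigma_{i,i}$ by rewriting it as a Rayleigh quotient against $\Omega$ and then invoking the Gershgorin bound already established in~\Cref{lemma:spectrum_radius_omega_diag_dom}. Concretely, first I would unfold the definition $\Sigma = (I-W)^{-1}\Omega(I-W)^{-T}$ to obtain
\[
\Sigma_{i,i} \;=\; e_i^T (I-W)^{-1}\,\Omega\,(I-W)^{-T}\,e_i \;=\; v_i^T\,\Omega\, v_i,
\]
where $v_i := (I-W)^{-T}e_i$, and then apply the Rayleigh quotient inequality for the symmetric PSD matrix $\Omega$ to conclude $\Sigma_{i,i}\le \rho(\Omega)\,\|v_i\|^2$. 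This isolates the dependence on $W$ inside a single scalar $\|v_i\|^2$ and leaves the $\Omega$-dependence in exactly the form already controlled by earlier spectral lemmas.

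Next, I would invoke~\Cref{lemma:spectrum_radius_omega_diag_dom}: the diagonal dominance of $\Omega$ (guaranteed by the construction in~\Cref{coro:omega_psd}) forces $\rho(\Omega)\le 2\max_i |\Omega_{i,i}|$, so substitution yields $\Sigma_{i,i}\le 2\|v_i\|^2\max_i|\Omega_{i,i}|$ and the argument reduces to verifying $\|v_i\|^2\le 1$. To close this last step I would exploit the structural regime already in use in the paper---bounded edge weights such as $W_{i,j}\in[-0.5,0.5]$ together with controlled maximum degree, as discussed in~\Cref{sec:dgp_old_admg_protocol}---which, through the Neumann expansion $(I-W)^{-1}=\sum_{k\ge 0}W^k$ (valid because $W$ is nilpotent under the DAG's topological ordering), gives explicit row-sum control on the $i$th column of $(I-W)^{-T}$. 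An alternative closing move is the diagonal rescaling $D=\operatorname{diag}(1,\rho,\dots,\rho^{|V|-1})$ of~\Cref{lemma:diag_dom_diag_congruent2non_diag_dom}, which absorbs the congruent distortion into a non-amplifying transform while preserving the Gershgorin constant~$2$.

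The main obstacle lies precisely in controlling $\|v_i\|^2$: the spectral chain $\Sigma_{i,i}\le\rho(\Omega)\|v_i\|^2\le 2\|v_i\|^2\max_i|\Omega_{i,i}|$ is a routine consequence of the PSD and Gershgorin facts already at hand, but the $(I-W)^{-T}$ factor can in principle amplify row norms by summing total causal effects along every directed path into $i$, so in the unconstrained regime one only has the trivial diagonal lower bound $[(I-W)^{-1}]_{i,i}=1$. A clean proof therefore has to make explicit either the weight/degree regime on $W$ or a normalization that renders the congruent factor non-amplifying; once that is in place, the rest of the derivation is straightforward substitution of the two spectral inequalities.
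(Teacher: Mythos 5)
Your overall strategy is the same spectral one the paper uses---view $\Sigma=(I-W)^{-1}\Omega(I-W)^{-T}$ as a congruence of $\Omega$, control $\rho(\Omega)$ via the Gershgorin bound of \Cref{lemma:spectrum_radius_omega_diag_dom}, and transfer the bound through the congruent factor---but you make the transfer explicit as $\Sigma_{i,i}=v_i^T\Omega v_i\le\rho(\Omega)\,\|v_i\|^2$ with $v_i=(I-W)^{-T}e_i$, whereas the paper invokes Ostrowski's theorem together with the observation that $[(I-W)^{-1}]_{i,i}=1$ and then reads the diagonal of the p.s.d.\ matrix $\Sigma$ off its spectral radius. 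The problem is that your argument terminates at the unverified claim $\|v_i\|^2\le 1$, and that claim is false except in degenerate cases. Since $W$ is nilpotent, $(I-W)^{-1}=I+W+W^2+\cdots$ has unit diagonal, so
\[
\|v_i\|^2=\sum_j\bigl([(I-W)^{-1}]_{i,j}\bigr)^2=1+\sum_{j\neq i}\bigl([(I-W)^{-1}]_{i,j}\bigr)^2\;\ge\;1,
\]
with equality only when vertex $i$ has no ancestors. None of your proposed closing moves repairs this: bounded weights and bounded degree give at best $\|v_i\|^2\le 1+(\text{positive terms})$, strictly larger than $1$ whenever $i$ has a parent with nonzero total effect, and the rescaling of \Cref{lemma:diag_dom_diag_congruent2non_diag_dom} acts on $\Omega$ rather than on the congruent factor. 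Concretely, if $i$ has $d$ source parents, each connected to $i$ with edge weight $1/2$, and $\Omega=I$ (which satisfies the construction of \Cref{coro:omega_psd}), then $\Sigma_{i,i}=1+d/4$, so the chain you set up cannot yield the stated constant $2$ once $d>4$.

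To your credit, you located exactly where the argument is fragile. The paper's own proof hides the same quantity inside its Ostrowski step: the scaling factors $\theta_k$ there are governed by the eigenvalues of the Gram matrix $(I-W)^{-1}(I-W)^{-T}$, whose $i$th diagonal entry is precisely your $\|v_i\|^2$, yet only the diagonal $[(I-W)^{-1}]_{i,i}=1$---not the row norm---is computed. If you want a self-contained correct statement along your lines, either restrict to a regime where $\rho\bigl((I-W)^{-1}(I-W)^{-T}\bigr)\le 1+\delta$ and carry the factor $1+\delta$ into the conclusion, or keep the operator-norm factor explicit and write $\Sigma_{i,i}\le \rho(\Omega)\,\|(I-W)^{-1}\|_2^2$, in the same spirit as the explicit magnification factor $\theta\le\rho(W+W^T)+1$ appearing in \Cref{lemma:spectrum_inv_sigma_congruent}.
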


\begin{proof}
	See \Cref{proof_lemma:ub_diag_Sigma_omega}.
\end{proof}

\begin{restatable}{theorem}{LabelRestatableTheoremParCorr}\label{thm:spectrum_par_corr}
	The spectral radius of the partial correlation matrix corresponding to the diagonally dominant construction is bounded by
	\[
		\rho(\tilde{R}) \le \Bigl(2\,\max_i |\Omega_{i,i}|\Bigr)
		\Bigl(d_{\max}\,\max W_{i,j}+1\Bigr)\max_i\frac{1}{\Delta_i(\Omega)}\,.
	\]
\end{restatable}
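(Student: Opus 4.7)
The plan is to decompose the bound into three factors that match the statement and apply, in sequence, a submultiplicativity argument for congruent diagonal scalings, the lower bound on diagonal entries of the precision matrix, and the already-established spectral radius bound on $\Sigma^{-1}$.

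First, I would invoke the representation from \Cref{lemma:par_corr_precision_mat}, writing
\[
\tilde{R} = -D\,\Sigma^{-1}\,D, \qquad D := \operatorname{diag}\!\Bigl(\sqrt{[\Sigma^{-1}]^{-1}_{ii}}\Bigr).
\]
Since $\tilde R$ and $\Sigma^{-1}$ are symmetric, the spectral radius coincides with the operator 2-norm, so submultiplicativity of $\|\cdot\|_2$ yields
\[
\rho(\tilde R) = \|\tilde R\|_2 \le \|D\|_2^{\,2}\,\|\Sigma^{-1}\|_2 = \|D\|_2^{\,2}\,\rho(\Sigma^{-1}).
\]
This cleanly separates the ``diagonal scaling'' factor from the ``precision matrix'' factor.

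Next, I would bound $\|D\|_2^{\,2} = \max_i 1/[\Sigma^{-1}]_{ii}$. Applying \Cref{lemma:lb_precision_diag_bt_1_over_sigma_diag_vif} entrywise gives $[\Sigma^{-1}]_{ii} \ge 1/\Sigma_{ii}$, hence $1/[\Sigma^{-1}]_{ii} \le \Sigma_{ii}$. Then \Cref{lemma:ub_diag_Sigma_omega} gives $\Sigma_{ii} \le 2\max_i |\Omega_{ii}|$, so
\[
\|D\|_2^{\,2} \le 2\max_i |\Omega_{ii}|.
\]
For the second factor, \Cref{coro:spectrum_radius_sigma_inv} directly supplies
\[
\rho(\Sigma^{-1}) \le \bigl(d_{\max}\max W_{i,j}+1\bigr)\max_i \frac{1}{\Delta_i(\Omega)}.
\]
Multiplying these two bounds yields exactly the claimed inequality.

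I do not expect a major obstacle, since every factor has already been established in earlier lemmas; the only mild subtlety is justifying $\rho(\tilde R)\le\|D\|_2^{\,2}\rho(\Sigma^{-1})$, which rests on symmetry (so that spectral radius equals operator norm) rather than on a generic Rayleigh quotient argument for a non-normal product. If one preferred, the same inequality can be obtained by noting that $\tilde R$ and $\Sigma^{-1/2}D^2\Sigma^{-1/2}$ are similar (through $\Sigma^{-1/2}D$), but the operator-norm route is cleaner and avoids introducing square roots of $\Sigma^{-1}$.
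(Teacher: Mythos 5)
Your proposal is correct and follows the same overall architecture as the paper's proof: both start from the representation $\tilde{R}=-D\,\Sigma^{-1}D$ of \Cref{lemma:par_corr_precision_mat}, both bound the diagonal factor via \Cref{lemma:lb_precision_diag_bt_1_over_sigma_diag_vif} and \Cref{lemma:ub_diag_Sigma_omega}, and both import the bound on $\rho(\Sigma^{-1})$ from \Cref{coro:spectrum_radius_sigma_inv}. The one genuine difference is the mechanism used to propagate the spectral bound through the diagonal congruence. The paper invokes Ostrowski's theorem, writing $\lambda_k(\tilde R)=\theta_k\lambda_k(\Sigma^{-1})$ with $\theta_k$ bounded by the spectrum of $D=\operatorname{diag}\bigl(\sqrt{[\Sigma^{-1}]^{-1}_{ii}}\bigr)$; strictly, Ostrowski bounds $\theta_k$ by the eigenvalues of $DD^T=D^2$, i.e.\ by $\max_i[\Sigma^{-1}]^{-1}_{ii}$ rather than by $\max_i\sqrt{[\Sigma^{-1}]^{-1}_{ii}}$, so the paper's intermediate chain carries a spurious square root (and a $\min$ where a $\max$ is wanted) that happens not to affect the final stated constant. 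Your route via $\rho(\tilde R)=\|\tilde R\|_2\le\|D\|_2^2\,\|\Sigma^{-1}\|_2$, justified by symmetry of $\tilde R$ and $\Sigma^{-1}$, gives exactly the multiplier $\max_i 1/[\Sigma^{-1}]_{ii}\le\max_i\Sigma_{ii}\le 2\max_i|\Omega_{ii}|$ and lands on the claimed inequality without that detour; it is the cleaner and more airtight version of the same argument. The only point worth making explicit is that $\|\Sigma^{-1}\|_2=\rho(\Sigma^{-1})$ because $\Sigma^{-1}$ is symmetric positive definite, which you do note.
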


\begin{proof}
	See \Cref{proof_thm:spectrum_par_corr}.
\end{proof}

\begin{remark}[Effect of a Restricted Spectrum]\label{remark:restricted_spectrum_partial_corr}
	Let $\lambda(\cdot)$ denote an eigenvalue of a matrix.
	In general, the eigenvalues of a partial correlation matrix satisfy $0\le|\lambda(\tilde{R})|\le|V|$ \citep{artner2022shape}.
	According to \Cref{thm:spectrum_par_corr}, the diagonally dominant construction of $\Omega$ limits the spectrum of the partial correlation matrix:
	\[
		\rho(\tilde{R}) \le \Bigl(2\,\max_i |\Omega_{i,i}|\Bigr)
		\Bigl(d_{\max}\,\max W_{i,j}+1\Bigr)\max_i\frac{1}{\Delta_i(\Omega)} \ll |V|\,.
	\]
	In light of \Cref{lemma:matrix_entry_interval}, a restricted spectral radius generally leads to a restricted range of matrix entries, which in turn can constrain the distribution of conditional independence (CI) test statistics based on partial correlation.
\end{remark}


\section{The block-hierarchical explicit unobserved confounding synthesis and performance evaluation}\label{sec:new_dgp}

To overcome the limitations discussed in Section~\ref{sec:spectral_restrict_diag_dom}, we explicitly model unobserved confounders.
In our approach, we first generate all variables—both observed and unobserved—in an acyclic (DAG) fashion (i.e., the ground truth DAG), and then hide the selected unobserved variables from the causal discovery algorithm.
For performance evaluation, we transform the DAG into ancestral ADMGs (as described in Section~\ref{sec:dag2ancestral}) that serve as the ground truth for the causal discovery algorithm.

Traditional methods (e.g., generating large DAGs via Erdo-Renyi graphs) tend to produce homogeneous graphs.
To remedy this, we adopt a block-hierarchical approach (see Section~\ref{sec:hdgp}).

\subsection{Hierarchical data generation via ancestral sampling}\label{sec:hdgp}

\subsubsection{An example DAG generated hierarchically}
\begin{figure}[h!]
	\centering
	\includegraphics[scale=0.5]{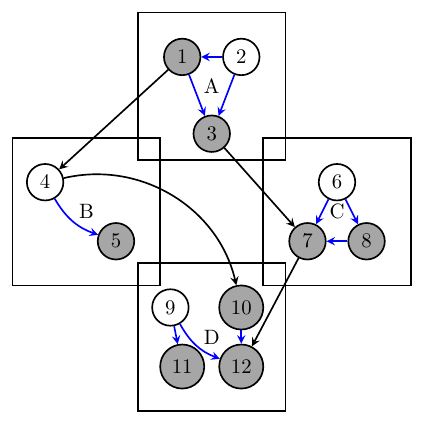}
	\caption{An example graph generated from our block-hierarchical data generation process with unobserved confounding where each block is represented as a rectangle what we call a macro node.
		In this example, unobserved node 4 is a child of observed node 1.
		Our algorithm allows control over joint dependencies across block structures.}
	\label{fig:hdgp}
\end{figure}

We first present the output and then describe its generation.
In Figure~\ref{fig:hdgp}, the following conventions are used:
\begin{itemize}
	\item A \emph{rectangle} (block) denotes a macro node.
	\item A \emph{circle} denotes a micro node.
	\item A \emph{macro-node DAG} is a DAG consisting only of macro nodes (e.g., the DAG on the set $\{A, B, C, D\}$ in Figure~\ref{fig:hdgp}; note that here the symbol $B$ denotes the macro node name rather than the adjacency matrix in Equation~\eqref{eq:root_confounder_data_gen}).
	\item A \emph{local micro-node DAG} is the DAG of micro nodes within a single macro node (e.g., the DAG on the set $\{1,2,3\}$ attached to macro node $A$).
	\item The \emph{global micro-node DAG} is defined over all micro nodes (e.g., with vertices $\{1,\ldots,12\}$).
	\item Shaded circles denote observed variables (which will be available to the causal discovery algorithm), while white circles denote unobserved variables.
	      In Figure~\ref{fig:hdgp}, nodes 2, 4, 6, and 9 are unobserved and serve as confounders.
\end{itemize}

\subsubsection{Protocol}
We now describe the algorithm and procedure to generate such a graph and the corresponding data:
\begin{enumerate}
	\item Generate the macro-node DAG (e.g., in Figure~\ref{fig:hdgp} the DAG with edges $A \rightarrow B$, $A \rightarrow C$, $C \rightarrow D$, and $B \rightarrow D$).
	      This can be done via Equation~\eqref{eq:wplp}.
	\item Populate each macro node with a local micro-node DAG (again via Equation~\eqref{eq:wplp}).
	\item Add inter macro-node connections by randomly selecting micro nodes from corresponding macro nodes.
	      Due to the macro-node DAG structure, this process does not introduce cycles.
	\item Enforce the expected number of confounders in the DAG (details in Section~\ref{sec:enforce_confounder}).
	\item Generate data via ancestral sampling (details in Section~\ref{sec:data_gen}).
	\item Choose which confounders to hide.
	      The corresponding columns in the data are removed, and the DAG (with the specified unobserved confounders) is transformed into an ancestral graph (see Section~\ref{sec:dag2ancestral}).
	      This graph serves as the ground truth for the causal discovery algorithm.
\end{enumerate}

The advantages of this protocol include:
\begin{itemize}
	\item Providing ground truth for marginal causal abstraction \citep{rubenstein2017causal,beckers2020approximate}.
	\item Enabling heterogeneous DAG properties (e.g., varying sparsity) by controlling higher-order joint dependencies via the inter macro-node connections.
	\item Reducing the topological sort time complexity from $|V|+|E_{intra}|+|E_{inter}|$ to $|V|+|E_{intra}|$, where $|E_{intra}|$ is the number of intra macro-node edges and $|E_{inter}|$ is the number of inter macro-node edges.
\end{itemize}

\subsubsection{Algorithm for ensuring an expected number of confounders}\label{sec:enforce_confounder}
To ensure sufficient confounding, we use the following procedure.
Given a topological order of the graph, we iterate over nodes starting from the lowest rank.
For a given node $v$, we examine if we can add an extra connection from some node $u$ (with $u \prec v$).
If $u$ is already a parent to more than one variable, then it ensures the existence of a confounder.
Regardless of success, we then choose the next variable (one rank higher than $u$) and continue until the expected number of confounders is reached.

\subsection{Data generation via ancestral sampling}\label{sec:data_gen}

\subsubsection{Ancestral data sampling}\label{sec:ancestral_sample}
For a given DAG, data are generated via ancestral sampling.
In the order specified by the topological sort, each variable is sampled conditional on its parents.
For source nodes, samples are drawn from a prescribed idiosyncratic distribution.
Compared to Equation~\eqref{eq:yeqnoise}, ancestral sampling offers the convenience of enforcing additional constraints, improved scalability in high-dimensional settings, reduced time and space complexity relative to matrix multiplication, and the flexibility to incorporate nonlinear parent–child relationships as well as non-Gaussian noise distributions.

\subsubsection{Alternative weight sampling}
Previous methods \citep{ogarrio2016hybrid,bhattacharya2021differentiable,wangdrton23} typically sample edge weights uniformly over a fixed interval.
Such homogeneous weight sampling may limit the diversity of edge strengths.
To address this, we propose an alternative based on Wishart sampling.
Specifically, we draw edge weights from a Wishart distribution with a positive definite scale matrix $S$ (by default, $S$ is taken as the identity matrix), and then randomly flip some of the edge weights to negative values.
The density is given by
\begin{align}
&f_{\mathbf{W}, \mathbf{S}} (\mathbf{W}, \mathbf{S}) \nonumber\\ =& \frac{1}{2^{d|V|/2} |\mathbf{S}|^{d/2} \Gamma_p\Bigl(\frac{d}{2}\Bigr)} |\mathbf{W}|^{(d-|V|-1)/2} \exp\Bigl(-\frac{1}{2}\operatorname{tr}\bigl(\mathbf{S}^{-1}\mathbf{W}\bigr)\Bigr),
	\label{eq:wishart}                                                                                                                                                                                                                             \\[1mm]
	d                                                   & = |V| + 1\,.
\end{align}
Here, the parameter $d$ and the choice of scale matrix $S$ control the diversity and the magnitude of the edge weights.

\subsection{Converting a DAG with Hidden Nodes to an Ancestral Graph}\label{sec:dag2ancestral}

\begin{figure}
	\centering
	\includegraphics[width=0.5\linewidth]{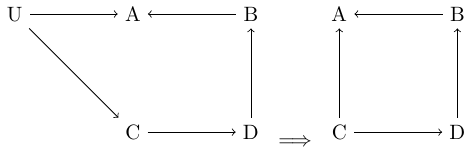}
	\caption{Transformation of a DAG (left) into an ancestral graph (right) by marginalizing over the set of hidden nodes $U$.
		The conditional independence (CI) relationships are preserved; for example, in the left-hand-side (l.h.s.) DAG, $C \perp B \mid D$ holds, and this CI statement is maintained in the right-hand-side (r.h.s.) ancestral graph.
		Similarly, $A \perp D \mid B, C$ is valid in both representations.}
	\label{fig:ex_dag2ancestral}
\end{figure}

Following \citet[Definition~4.2.1]{spirtes2002}, suppose we are given a DAG $D = (V, E)$ and a set $U \subset V$ of nodes that are to be hidden (i.e., unobserved variables).
The corresponding ancestral acyclic directed mixed graph (ancestral ADMG) $D'$ is generated as follows~\Cref{algo:dag2ancestral}.
%
\begin{algorithm}[H]\caption{DAG to Ancestral Graph}\label{algo:dag2ancestral}
\begin{algorithmic}[1]
  \For{each hidden node \(h \in U\)}
\For{each pair \(d_1, d_2 \in \mathrm{ch}(h)\) such that \(d_1, d_2 \notin U\) and \(d_1 \notin \mathrm{ne}(d_2)\)
  (to avoid repetitive edges)
}
  \If{\(d_1 \notin \mathrm{an}(d_2)\) but \(d_2 \in \mathrm{an}(d_1)\)}
  \State{Add edge \(d_1 \rightarrow d_2\) \Comment{Prevent almost-directed cycles and avoids ambiguity in the ancestral relationship (see also~\Cref{prop:inducing_path_graph_is_not_ancestral}).}}
  \EndIf
  \If{neither \(d_1 \in \mathrm{an}(d_2)\) nor \(d_2 \in \mathrm{an}(d_1)\)}
    \State Add edge \(d_1 \leftrightarrow d_2\)
  \EndIf
\EndFor
\For{each \(a \in \mathrm{pa}(h)\) and \(d \in \mathrm{ch}(h)\) such that \(a \notin \mathrm{ne}(d)\)}
  \State Add edge \(a \rightarrow d\)
\EndFor
\EndFor
\end{algorithmic}
\end{algorithm}

%
\Cref{fig:ex_dag2ancestral} and \Cref{fig:dag2ancestral_u_child_of_o} illustrate two examples of transforming DAGs into ancestral ADMGs.

\begin{figure}
	\centering
	\includegraphics[width=0.5\linewidth]{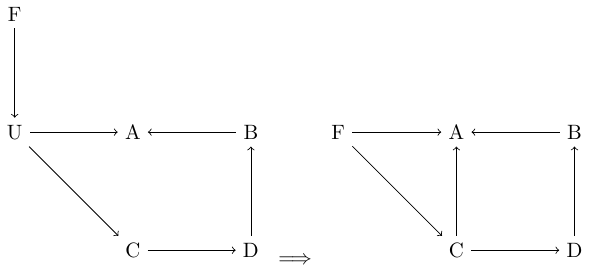}
	\caption{Example of a DAG transformed to an ancestral graph after marginalizing over $U$, where $U$ is the child of an observable $F$.
		Due to the ancestral relationship between nodes $A$ and $C$, no bidirected edge is created.}\label{fig:dag2ancestral_u_child_of_o}
\end{figure}

\begin{remark}
	Ancestral graphs are closed under marginalization with respect to CI statements.
	The above procedure avoids the introduction of almost-directed cycles.
	Once the graph is ancestral, CI assertions can be read off directly.
\end{remark}


\subsection{Illustration}

We illustrate our approach with two concrete examples presented in~\Cref{fig:dag2ancestra_vs_fci_root} and~\Cref{fig:dag2ancestra_vs_fci_intermediate}.
In~\Cref{fig:dag2ancestra_vs_fci_root}, the standard setting is shown where the unobserved confounder is a root variable.

\Cref{fig:dag2ancestra_vs_fci_intermediate} presents an interesting scenario in which the unobserved confounders $X_2$ and $X_5$ are themselves confounded by the observed variable $X_1$.
In this case, the ground truth ancestral ADMG contains no bidirected edges.

\begin{figure}[htbp]
	\centering
	\includegraphics[width=0.95\linewidth]{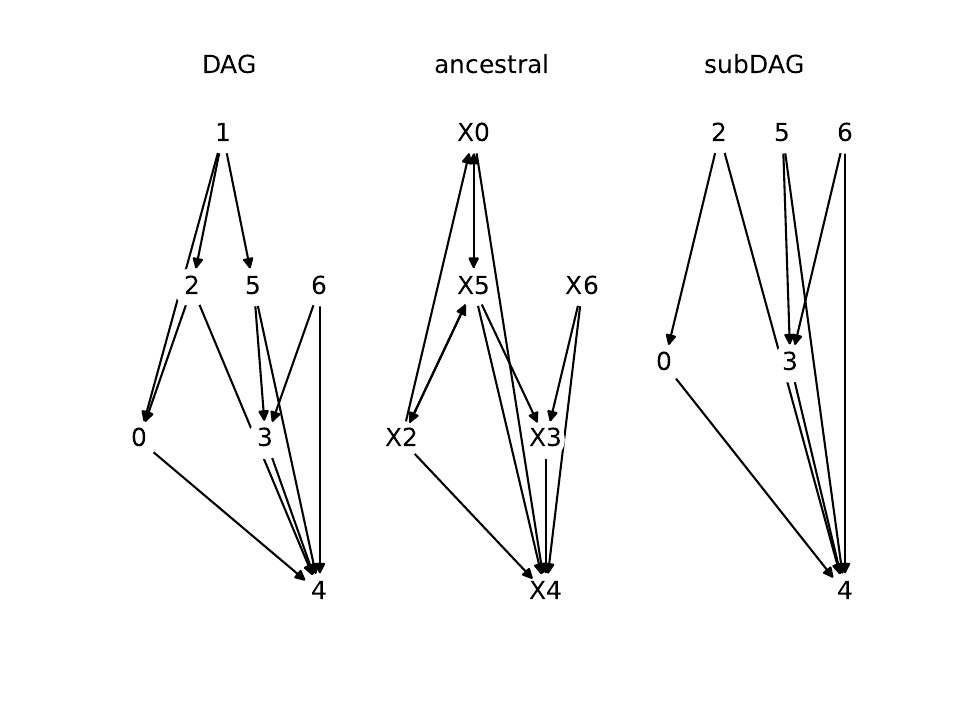}
	\caption{An example DAG generated via our new data generation process and the corresponding ancestral ADMG obtained by hiding the root variable $X_0$.}
	\label{fig:dag2ancestra_vs_fci_root}
\end{figure}

\begin{figure}[htbp]
	\centering
	\includegraphics[width=0.9\linewidth]{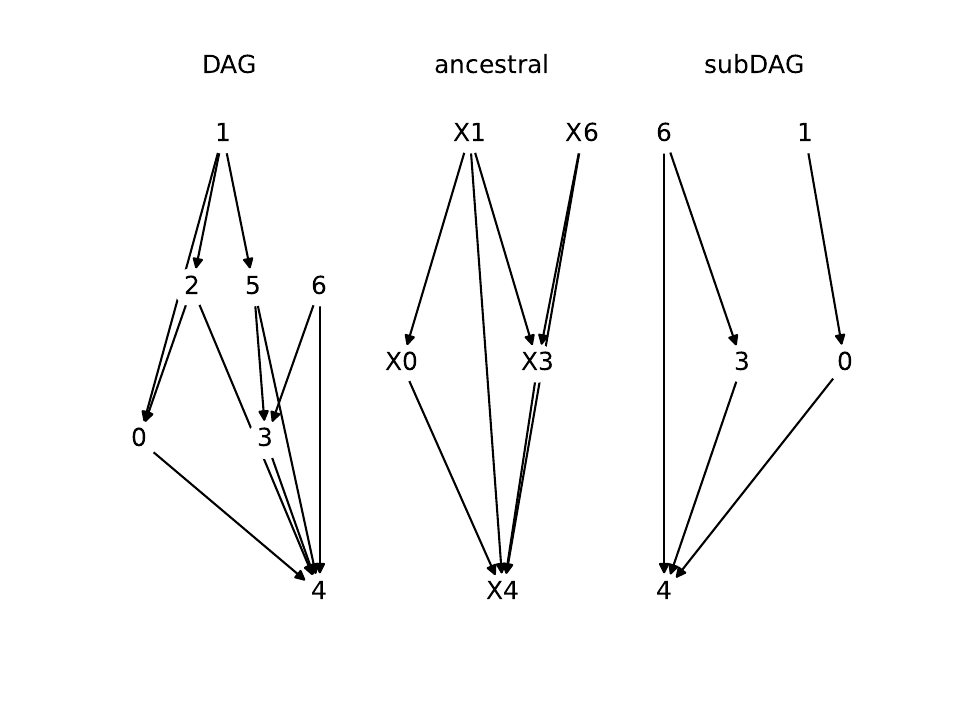}
	\caption{An example DAG generated via our new data generation process and the corresponding ancestral ADMG when hiding the intermediate confounders $X_2$ and $X_5$.}
	\label{fig:dag2ancestra_vs_fci_intermediate}
\end{figure}

\section{Conclusion}\label{sec:conclusion}

We have identified an important limitation in the common practice of imposing a diagonally dominant constraint in the implicit formulation of unobserved confounders, a method widely used in state-of-the-art data synthesis for causal discovery.
This constraint restricts the spectrum of both the correlation and partial correlation matrices.
Together with the restricted bidirected edge sampling method, it prevents full coverage of the positive definite (PD) cone.

To address these issues, we proposed an improved block-hierarchical ancestral DAG generation protocol that explicitly models unobserved confounders by hiding the corresponding columns. For evaluation, we convert the ground truth DAG into ancestral graph to compare the output of causal discovery algorithm.
Our new protocol provides greater control over the graph structure and the unobserved confounders while covering the space of causal models, thereby facilitating more comprehensive evaluation and benchmarking of causal discovery algorithms.

In addition, we analyzed the connection between implicit and explicit unobserved confounding parameterization, illustrates additional parameter constraint the explicit method could introduce, offering a comprehensive understanding of the solution space of causal discovery algorithm with unobserved confounding. 



%

\begin{acks}
We thank helpful discussions with 
Mathias Drton, Daniela Schokoda, Felix Rios, 
Y. Sammuel Wang, 
Stefan Bauer and Liam Solus.

AM was supported by Novo Nordisk Foundation Grant NNF20OC0062897.
\end{acks}

\section*{Author Contributions}
XS conceived the idea of the paper, developed the theoretical results with proofs independently, and implemented the algorithms. XS and AM implemented the software package and conducted the experiments. AM developed essential points of the paper, including DAG to ancestral graph conversion, as well as co-developed connections between ADMGs and inducing path graphs. AM analyzed and interpreted the findings. PM developed~\Cref{ex:para_un_identify}, assisted XS in development of the block-hierarchical ancestral sampling approach. XS and AM wrote the paper. CM supervised the project, improved the manuscript. 

\bibliography{causalspyne}
\bibliographystyle{acm}

\appendix
\section{General notations}\label{sec:notations}
In this section, general notations used throughout the paper are discussed.
Other notations are left to be introduced in the sections where they are used.

We use $G=(V,E)$ to represent a general graph, and $D=(V,E)$ specifically refers to directed acyclic graph (DAG), where we use $V$ to denote the set of vertex of a graph $G$ (or $D$), and $E$ the set of edges.
$|V|$ (cardinality of set $V$) is used for denoting the number of vertex in the graph.
We use the two terms node and vertex interchangeably in this paper.
A vertex or node in a graph represent random variable (s), which we use capital letter to represent.
We use $U$ to denote the set of unobserved (unmeasured) variables, $O$ to denote the set of observed (measured) random variable, in short observable (s).
$u, v, i, j, k$, as well as integers and $a, b, c, d$, are used to index a vertex (node) in a graph.
Let $Y$ represents a random variable vector, then $Y_u, Y_v, Y_i,Y_j$ represents the $u,v,i,j$ component of the random vector.
We also use $X$ to represent a random vector.
In graphs with directed edges, let $\cdot$ be a vertex or set of vertex, $pa(\cdot)$ is used to represent the set of parents of $\cdot$, $ch(\cdot)$ is to represent the set of children of $\cdot$.
Let $ne(\cdot)=pa(\cdot)\cup ch(\cdot)$ denote the neighbor set.
We use $de(\cdot)$ to represent the set of descendant, $an(\cdot)$ to represent the set of ancestors.
We use $\rho$ to represent a path in a graph, as a sequence of vertex connected one after another, either directed or undirected.
For the directed case, a path start with the source and ends with the sink following the arrow heads.
A vertex (node) in a graph without parent is called a root (source) variable in this paper.

We use $P$ to represent a distribution, and $\mathcal{D}$ to represent a distribution for a specific random vector.
Graph can encode conditional independence (CI) assertions for distributions.
Let $Y_1 \perp Y_2 | S$ denote $Y_1$ and $Y_2$ are conditionally independent when conditioned on variable set $S$.
We use $CI(G)$ ($CI(D)$) to represent the set of CI assertions.
So $CI(P)$ represents the set of CI assertions for a distribution $P$.

We use $W$ to represent the adjacency matrix, to represent a matrix block, we use ${[W]}_J$ where $J$ is an index set for subsetting the rows and column of the matrix $W$, similarly, we use ${[W]}_{J_1, J_2}$ where $J_1$ subset the rows and $J_2$ subset the columns.
To explicitly denote the size of a matrix or matrix block, we use ${[W]}_{|J|}$ for a square matrix of size $|J|$, where $|J|$ is the cardinality of the set $J$.
We use ${[\cdot]}_{|J| \times 1}$ to indicate a column vector of length $J$ where $\cdot$ is the symbol to represent this column vector.
We use $I$ to indicate an identity matrix, with its size to be inferred from the context.

We use $\equiv$ to represent always equal and $:=$ to denote a definition equal.
We use $\epsilon$ to represent idiosyncratic noise variable~\citep{wangdrton23} and $\xi$ for the latent confounder (especially when the unobserved confounder is root variable).
When from the context clear, we use the same two symbols to represent observations (instances) of these random vectors.


\section{Preliminaries}\label{sec:preliminaries}
\subsection{DAGs representation of CI statements}\label{sec:dag_ci}
Terminologies and notations of this section follows~\Cref{sec:notations}.

A graph offers an equivalent class partition for distributions via encoding CI assertions. Let $G$ be a DAG, $P$ be a distribution it represents.
\begin{itemize}
	\item The tuple $(G, P)$ satisfies causal Markov condition if $Y \perp \{V \setminus \left( \mathrm{de}(Y) \cup \mathrm{pa}(Y) \right) \} \mid \mathrm{pa}(Y)$ \\
 In this case, $CI(G) \subset CI(P)$. $G$ is an I-map of $P$. This is also termed the local Markov property.
	\item The tuple $(G,P)$ satisfied minimality condition if any subgraph of $G$ does not satisfy causal Markov condition.
	\item The tuple $(G, P)$ satisfies the faithful condition if $CI(P) \subset CI(G)$
	\item $P$ is faithful to $G$ if $(G, P)$ satisfies Markov and faithful condition.
        \item Two DAGs $G_1$ and $G_2$ are Markov equivalent if the set of distributions faithful to them are the same.
\end{itemize}
Causal Markov condition implies global Markov properties like Markov blanket. Faithfulness implies minimaility, but not the opposite, since there can be missing CI assertions. 
\subsection{Problems of DAG representation when there exist unobserved confounder}\label{sec:dag_problem}
Causal sufficiency means no unmeasured (unobserved) common causes or the confounder takes only one value, which is usually violated in practice. As an example, in \Cref{fig:spirtes02} (reproduced from~\citep{spirtes2001causation}), $U$ is an unobserved confounder in the DAG of the left-hand side of~\Cref{fig:spirtes02}, which is hidden in the incomplete DAG on the right-hand side of~\Cref{fig:spirtes02}. Then we encounter the following problems:
\begin{itemize}
	\item the incomplete DAG on the right-hand side of~\Cref{fig:spirtes02} imposes extra CI statement:\@ $Y_1 \perp Y_2$
	\item fork $Y1 \not \perp Y2$ in l.h.s.
	      DAG reject $Y1 \perp Y2$ in r.h.s.
	      DAG
	\item $A1 \perp \{A2, Y2\}$ (violation in DAG on the r.h.s.: $A_1 \rightarrow Y_2$)
\end{itemize}
\begin{figure}[h!]
	\centering
	\includegraphics[width=0.9\linewidth]{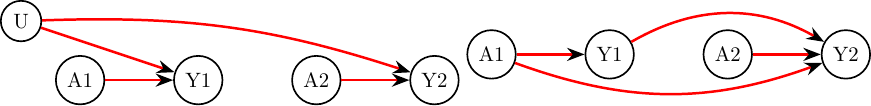}	\caption{problems of DAG representation when there is unobserved variable: variable renamed after~\cite{spirtes2002}}\label{fig:spirtes02}
\end{figure}
It turns out DAGs can not efficiently represent CI statements when there are unobserved confounding which lead to ancestral graphs in the next section.

\subsection{Ancestral graph}\label{sec:ancestral_graph}
\newcommand{\mdag}{G}
\newcommand{\ancestorize}{an}
In this section, we start with a series of essential definitions which lead to ancestral graph.
\begin{definition}[bi-directed-sibling]
	If $A \leftrightarrow B$, we call $A, B$ are bi-directed-siblings, which correspond to the concept of spouse in~\citep{spirtes2002,zhang2008causal}.
\end{definition}

\hypertarget{hyper_target_inducing_path}{} 
\begin{definition}[Inducing path $\induceP(A,B|O;\mdag)$ relative to variable subset $O$ of DAG $\mdag$~\citep{spirtes2001causation}]\label{def:inducing_path} Let $O$ be the set of observables. Let $L=V(\mdag) \setminus O$ be the set of latent variables (unobserved). 
  Let $\upath{}$ (a set of observables and/or unobserved variables) be an undirected path (a path with undirected edges and maybe directed edges as well) that connects $A\in O$ and $B\in O$: $\upath=\induceP(A,B|O;\mdag)$, If
	\begin{enumerate}
		\item $\forall e \in O\cap \upath \setminus \{A, B\}$, $e$ is a collider
		\item if $v \in \upath$ is a collider (which can be unobserved), $v$ is an ancestor of $A$ or $B$.
                \item A single edge is always defined to be an inducing path relative to any set $O$ or $L$~\citep{zhang2008causal}.
	\end{enumerate}
For an example of inducing path, see~\Cref{fig:ex_inducing_path}.
\end{definition}
\begin{figure}
	\centering
	\includegraphics[width=0.5\linewidth]{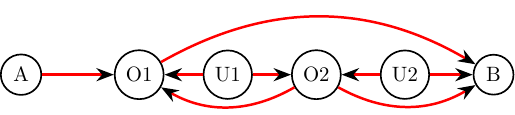}
	\caption{Example of inducing path for a DAG $G$ with $O=\{A,O1,O2,B\}$, $L=\{U1, U2\}$: $\upath{}_p(A,B|O, G)=\{A, O1, U1, O2, U2, B\}$,  in Definition~\ref{def:inducing_path}}\label{fig:ex_inducing_path}
\end{figure}

\begin{remark}
Inducing path $\induceP(A,B|O;\mdag)$ persist (thus we use subscript $p$ of $\upath$) d-connection of $A,B$ regardless of unobserved variable in $G$: $\not\exists S \subset O$, s.t.~$A\perp B | S$. See Thm 6.1 in~\citep{spirtes2001causation}.
\end{remark}

\hypertarget{hyper_target_inducing_path_graph}{} 
\begin{definition}[Inducing path graph $\ipg(\mdag)$ for DAG $\mdag$~\citep{spirtes2001causation}]
	In $\ipg(\mdag)$, edge $A\to B$ exists $\iff$ there is an inducing path $\induceP(A, B|O;\mdag)$. There are the following edge types
	\begin{itemize}
		\item $A\to B \in G^{'}$: $\exists \induceP(A,B|O;\mdag)$ is out of $A$ and into $B$ in $\mdag$.
		\item $A\leftrightarrow B \in G^{'}$: $\exists \induceP(A,B|O;\mdag)$ is into $A$ and into $B$ in $\mdag$.
		      A double edge in the inducing path graph implies at least two inducing paths between the two variables in question.
	\end{itemize}
\end{definition}

\begin{figure}[h!]
	\centering
	\includegraphics[width=0.5\linewidth]{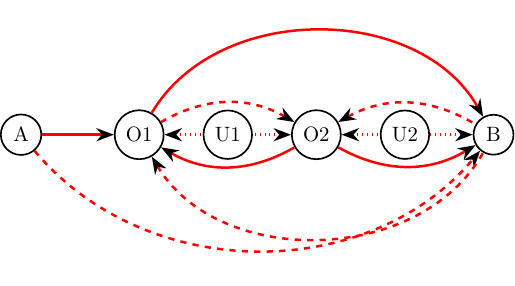}
        \caption{inducing path graph for DAG in~\ref{fig:ex_inducing_path}: Solid and dotted (for edges emitting from unobserved variables) edges are from the original DAG, with dashed edges added indicating inducing paths.}\label{fig:inducing_path_graph}
\end{figure}

\begin{definition}[active path ${\upath}_a$]
	Given set of variables $S$ as condition, the path ${\upath}_a(A, B|S)$ is active between $A, B$, if
	\begin{itemize}
		\item $\forall$ non-collider of ${\upath}_a$ $\not\in S$ (no intercept variable).
		\item $\forall$ collider in ${\upath}_a$ ancestor a variable in $S$.
	\end{itemize}
\end{definition}

\begin{remark}
  An inducing path $\upath_p(A,B|O,G)$  between $A, B$ relative to the observable set $O$ for a graph $D$ is an active path $\upath_a(A, B|S\subset O\setminus\{A, B\})$ for any subset $S\subset O$. From an algorithm point of view, starting from a complete graph, if $A$ and $B$ are d-separated by any subset $S$ of $O$, then there is no inducing path between $A, B$, thus the edge between $A$ and $B$ is removed from the graph (step B of Causal Inference Algorithm in~\citet{spirtes2001causation}).
\end{remark}

\begin{definition}[m-separation]m-separation extends d-separation to the mixed graph: if there is no active path $\upath_a(A,B|S)$, then $S$ m-separates $A, B$
\end{definition}

\begin{definition}[Ancestral~\cite{evans2023latent}]
	An ordinary (single edge) mixed graph without any undirected edge is \emph{ancestral} if its directed part is acyclic, and no vertex
	is an ancestor of any of its bi-directed-siblings; it is \emph{maximal} if every pair of vertices that are not adjacent
	satisfy \(m\)-separation~\citep{spirtes2002}(extension of d-separation to mixed graph, thus m-separation). 
\end{definition}

\begin{figure}[h!]
	\centering
	\includegraphics[scale=0.8]{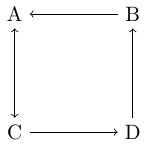}
	\includegraphics[scale=0.8]{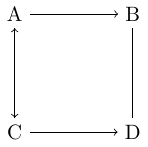}
        \caption{Examples of non-ancestral graphs, renamed after~\citep{spirtes2002}. The left example forms an almost-directed-cycle, see~\Cref{prop:inducing_path_graph_is_not_ancestral}. The right example violates the requirements that undirected end vertex can not have arrow head.}\label{fig:eg_non_ancestral}
\end{figure}

\begin{remark}[examples of non-ancestral]
	For instance, in the left part of~\Cref{fig:eg_non_ancestral}, $A, C$ are bi-directed-siblings connected via bi-directed edge (a.k.a. spouse in the terminology of~\citep{spirtes2002,zhang2008causal}), but there exist a directed path $C\rightarrow~D\rightarrow~B\rightarrow~A$, so this is not ancestral ($C$ is an ancestor of its bi-directed-sibling $A$).
	In the terminology of~\citep{zhang2008causal}, this is called almost-directed-cycle.

	In the right part of~\Cref{fig:eg_non_ancestral}, the graph is non-ancestral since no arrow head is allowed at $B$ since $B$ has undirected connection with $D$.
\end{remark}

\begin{definition}[Maximal ancestral graphs~\citep{zhang2008causal, Ali_Richardson_Spirtes_2009}]
	An ancestral graph is maximal if there is no inducing path between any two non-adjacent vertex.
        A DAG is also a MAG. Alternatively, for any pair of non-adjacent vertices $A, B$, there exists a set $S$, s.t.~$A\not\perp B|S$, i.e.~$S$ connectes $A, B$(If there is inducing path, then $\not\exists$ separation set for $A$ and $B$, Thm 6.1 in~\citet{spirtes2001causation}).
\end{definition}

\begin{proposition}
  A maximal ancestral graph (MAG) \textbf{without undirected edges} is an inducing path graph with observable set $O$ being all variables in the MAG.
\end{proposition}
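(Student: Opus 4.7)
The plan is to exhibit, for a given MAG $M$ without undirected edges on vertex set $V$, a DAG $G$ (possibly with additional latent vertices) such that $\ipg(G) = M$ when the observable set is $O = V$. The natural candidate is the \emph{canonical DAG} construction: for every bidirected edge $A \leftrightarrow B$ in $M$, introduce a fresh latent vertex $L_{AB}$ together with two directed edges $L_{AB} \to A$ and $L_{AB} \to B$ in $G$; every directed edge $A \to B$ of $M$ is copied unchanged into $G$. The vertex set of $G$ is $V \cup \{L_{AB} : A \leftrightarrow B \in M\}$, with $O := V$.

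First I would check that $G$ is indeed a DAG. Since $M$ is ancestral, its directed part is already acyclic on $V$. Each new latent vertex $L_{AB}$ has no incoming edge in $G$, so it is a source and cannot participate in any directed cycle. Hence $G$ is acyclic.

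Next, I would verify that every edge of $M$ appears as the corresponding edge in $\ipg(G)$. For a directed edge $A \to B \in M$, the single edge $A \to B$ in $G$ is, by the last clause of \Cref{def:inducing_path}, an inducing path relative to any $O$, and it is out of $A$ into $B$, so $A \to B \in \ipg(G)$. For a bidirected edge $A \leftrightarrow B \in M$, consider the path $A \leftarrow L_{AB} \to B$ in $G$. Its only non-endpoint vertex $L_{AB}$ is latent, so the first condition of \Cref{def:inducing_path} is vacuously satisfied; moreover $L_{AB}$ is not a collider, so the second condition is vacuous too. Thus this is an inducing path, and since it is into both endpoints it contributes $A \leftrightarrow B$ to $\ipg(G)$.

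The remaining and most delicate step is the converse: if $A, B \in V$ are \emph{non-adjacent} in $M$, then no inducing path between them exists in $G$ relative to $O$. I would argue by contradiction. By maximality of $M$, there is some $S \subseteq V \setminus \{A,B\}$ that m-separates $A$ and $B$ in $M$. The canonical construction is designed so that m-separation in $M$ over $V$ coincides with d-separation in $G$ over $V$: every active path in $G$ between two observables descends to an active walk in $M$ by contracting each $A \leftarrow L_{CD} \to B$ fragment to $C \leftrightarrow D$, and conversely every active m-connecting path in $M$ lifts to an active d-connecting path in $G$. Invoking this correspondence, $S$ d-separates $A$ and $B$ in $G$. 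However, by the defining property of inducing paths (Thm.~6.1 of \citet{spirtes2001causation}), the existence of an inducing path between $A$ and $B$ in $G$ relative to $O$ forbids any subset of $O \setminus \{A,B\}$ from d-separating them, a contradiction. Thus no spurious edge is produced, and $\ipg(G) = M$.

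The main obstacle is the m-separation/d-separation correspondence between $M$ and its canonical DAG over the observables; this is standard in the ancestral-graph literature (Richardson--Spirtes), but a clean self-contained proof requires careful path-surgery arguments to lift and project active paths through the latent forks $L_{AB}$ while respecting collider status and ancestral relations. Everything else is essentially bookkeeping on the canonical construction.
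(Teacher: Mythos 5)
Your proof is correct in substance but takes a genuinely different, more constructive route than the paper. The paper's proof is a two-line argument that tacitly presupposes the MAG arose from some DAG $G$ by marginalization and checks only one direction of the adjacency correspondence: by maximality, non-adjacency in the MAG implies the absence of an inducing path, hence non-adjacency in $\ipg(G)$. It never exhibits the underlying DAG, never verifies that each edge of the MAG is realized by an inducing path of the correct type, and says nothing about edge marks. You instead exhibit the witness explicitly via the canonical DAG (which the paper only introduces later, in the appendix on parametric unidentifiability), verify acyclicity, show that each directed and bidirected edge of $M$ reappears in $\ipg(G)$ with the right orientation, and close the converse direction by combining maximality with Theorem~6.1 of \citet{spirtes2001causation} through the m-/d-separation correspondence between $M$ and its canonical DAG --- the one external ingredient you lean on, which you flag honestly. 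One residual subtlety that neither you nor the paper addresses: under the paper's definition of the inducing path graph, a pair joined by $A \to B$ in $M$ could in principle acquire an \emph{additional} bidirected mark in $\ipg(G)$ if some inducing path into both endpoints existed; ruling this out uses ancestrality of $M$, since any such path would force an observed collider on it to be an ancestor of $A$ or $B$ and hence create an almost-directed cycle. On balance, your construction buys a self-contained, checkable identity $\ipg(G)=M$ for a concrete $G$, whereas the paper buys brevity at the cost of leaving both the existence of the underlying DAG and the edge-type matching implicit.
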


\begin{proof}
When an ancestral graph $\ancestorize(G)$ is maximal ($\ancestorize(G)$ differs from $G$ in that $G$ is DAG, $\ancestorize(G)$ has only observed vertex of $G$), non-adjacency between $A, B$ indicate there is no induced path between the two variables $A, B$, thus in the induced path graph $G^{'}$, the two nodes $A, B$ are not adjacent.
\end{proof}

\begin{proposition}\label{prop:inducing_path_graph_is_not_ancestral}
An \hyperlink{hyper_target_inducing_path_graph}{inducing path graph} is not necesarily a maximal ancestral graph.
\end{proposition}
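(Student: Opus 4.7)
The plan is to exhibit a counterexample: a DAG whose inducing path graph contains an \emph{almost-directed cycle}, i.e., a pair of bi-directed-siblings $A \leftrightarrow C$ together with a directed path from $C$ to $A$ (or vice versa). Since the ancestral property forbids a vertex from being an ancestor of any of its bi-directed-siblings (as illustrated in the left panel of \Cref{fig:eg_non_ancestral}), any such configuration immediately rules out the graph from being ancestral, hence also from being a maximal ancestral graph.

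Concretely, I would reuse the DAG $G$ from \Cref{fig:ex_inducing_path} (with observables $O=\{A,O_1,O_2,B\}$ and latents $L=\{U_1,U_2\}$), whose inducing path graph is already drawn in \Cref{fig:inducing_path_graph} and, per the figure caption and file name, is precisely the intended example. First I would walk through \Cref{def:inducing_path} on a few representative paths to justify each added dashed edge: in particular, verify that the concatenation of arrows through the latents $U_1,U_2$ yields inducing paths whose endpoints have arrowheads at both ends, producing bi-directed edges in $\ipg(G)$; and verify that no separating set exists for those pairs, so the edges cannot be dropped. Then I would exhibit the almost-directed cycle: a pair of vertices connected in $\ipg(G)$ by $\leftrightarrow$ while also being connected by a purely directed path in the original DAG (which carries over as a directed path of single edges in $\ipg(G)$, since single edges of $G$ between observables are themselves inducing paths by the last clause of \Cref{def:inducing_path}).

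From there the conclusion is immediate: the ancestral property (no vertex is an ancestor of a bi-directed-sibling) fails, so $\ipg(G)$ is not ancestral, and in particular not maximal ancestral. This gives a witness that $\ipg(\cdot)$ is not a map into the class of MAGs, proving the proposition.

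The main obstacle is bookkeeping rather than conceptual: one must carefully verify, for each pair in $O$, both which inducing paths exist and what arrowhead orientations they induce, so that the two edges forming the almost-directed cycle are unambiguously present in $\ipg(G)$. A secondary subtlety is making sure the single-edge inducing path convention (\Cref{def:inducing_path}, item 3) is invoked to justify that direct arrows between observables in $G$ do appear as directed edges in $\ipg(G)$; without this, one might mistakenly think the directed portion of the almost-directed cycle is absent.
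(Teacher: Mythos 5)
Your approach is essentially the paper's: both argue that an inducing path graph is automatically maximal, so it suffices to exhibit one containing an almost-directed cycle (a bi-directed pair $A\leftrightarrow B$ with $A$ an ancestor of $B$), which violates ancestrality. The only difference is the choice of witness—the paper constructs the minimal explicit example $A\leftarrow U\rightarrow B$, $A\rightarrow C\rightarrow B$ (\Cref{ex:inducing_path_graph_almost_d_cycle}), whose inducing path graph is $A\leftrightarrow B$ plus $A\rightarrow C\rightarrow B$, whereas you propose the larger DAG of \Cref{fig:ex_inducing_path}, which requires the extra bookkeeping you acknowledge but leads to the same conclusion.
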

\begin{proof}
  An inducing path graph only contains bidirected and directed edges with arrowhead indicating an inducing path, which satisfies the maximal arguments. So it suffices to check if almost-directed-cycle exsits in inducing path graph, or alternatively speaking, bidirected end vertex should not ancestor each other. Consider~\Cref{ex:inducing_path_graph_almost_d_cycle} as a counter example.
\end{proof}

\begin{example}\label{ex:inducing_path_graph_almost_d_cycle}
Consider the DAG below with unobserved confounder $U$:
\begin{itemize}
  \item $A\leftarrow U\rightarrow B$
  \item $A\rightarrow C \rightarrow B$
\end{itemize}
The corresponding inducing path graph below has almost directed cycle.
\begin{itemize}
  \item $A\leftrightarrow B$
  \item $A\rightarrow C \rightarrow B$
\end{itemize}
\end{example}
\begin{remark}
Almost-directed-cycle only causes ambiguity in ancestral relationship, but does not violate the acyclic property of DAG.
For example, in inducing path graph $G^{'}$, if $A,B$ are bi-directed-siblings, i.e. $A\leftrightarrow B \in G^{'}$, corresponding to in original DAG: $B\leftarrow U_2\rightarrow O_1 \leftarrow U_1 \rightarrow A$ and $O_1\leftrightarrow A$ in $G^{'}$.

if $\exists$ almost-directed-cycle from $A$ to $B$, then $\exists C$, s.t.
$A\rightarrow C\rightarrow B$ in $G^{'}$, corresponding to in original DAG: $A\rightarrow U_3 \rightarrow C\rightarrow U_4 \rightarrow B$ which does not violate the acyclic property of DAG.
\end{remark}

\begin{remark}
A partially oriented inducing path graph is not a graph.
\end{remark}

\begin{remark}[Ancestral ADMG]
Ancestral ADMGs:~In this paper, we only deal with~\emph{mixed graphs} containing directed and bidirected~(but no undirected) edges. The intersection of ADMGs and ancestral graph is called ancestral ADMG.\end{remark}

\begin{definition}[O-Markov equivalent between DAGs]\label{def:o-markov_equiv}
	Two DAGS $D_1(O, U)$ and $D_2(O,U)$ are O-Markov equivalent if they represent the same set of CI statements with respect to observables.
\end{definition}

\begin{definition}[Distribution equivalent]
Two DAGS $D_1(V)$ and $D_2(V)$ are distribution equivalent if they code the same local Markov properties.
\end{definition}

\begin{definition}[O-Distribution equivalent between DAGs]\label{def:o-markov_equiv}
Two DAGS $D_1(O, U)$ and $D_2(O,U)$ are O-Distribution equivalent if they code the same local Markov properties with respect to observables.
\end{definition}

\section{Connections between implicit and explicit formulation of confounding}\label{sec:connect_implicit_explicit}
\subsection{Existence of equivalent adjacency matrices}\label{sec:o-markv-equiv}
The explicit reformulation of the implicit unobserved confounding model in~\Cref{eq:root_confounder_data_gen} is a special block structure of zero block for $\xi$ in the adjacency matrix $W^{'}$. It is then a natural question if there exist another adjancency matrix $W^{''}$ other than $W^{'}$ in~\Cref{eq:root_confounder_data_gen} such that the marginal distribution of $O$ remains invariant.

We use covariance matrix to characterize the distribution, so the first question we are interested
is if there is congruent transformation between covariance matrices.

\begin{proof}
  Let $\Sigma^{''}$ be the resulting matrix from the congruent transform
Then
\begin{align}\label{eq:}
  \Sigma^{''}:=Q^T\Sigma^{'} Q &=\begin{bmatrix}&Q_{11}^T &Q_{21}^T\\&Q_{12}^T &Q^T_{22}\end{bmatrix}\begin{bmatrix}
    \Sigma_{11} & \Sigma_{12} \\
    \Sigma_{21} & \Sigma_{22}
  \end{bmatrix}  \begin{bmatrix}&Q_{11} &Q_{12}\\&Q_{21} &Q_{22}
\end{bmatrix}
\end{align}
The top-left block of the resulting matrix is
\begin{equation}
  {[Q^T\Sigma^{'} Q]}_{11}= Q_{11}^T\Sigma_{11}Q_{11}+Q_{21}^T\Sigma_{21}Q_{11} + Q_{11}^T\Sigma_{12}Q_{21}+Q_{21}^T\Sigma_{22}Q_{21}\label{eq:algebraic_eq_sigma_invariance}
\end{equation}
Letting the top-left block of the resulting matrix be $\Sigma_{11}$, we have

\begin{equation}
{\Sigma}_{11}=  Q_{11}^T\Sigma_{11}Q_{11}+Q_{21}^T\Sigma_{21}Q_{11} + Q_{11}^T\Sigma_{12}Q_{21}+Q_{21}^T\Sigma_{22}Q_{21}\label{eq:alg_eq_invariance_sigma_transform}
\end{equation}
\end{proof}

\begin{corollary}\label{coro:alg_eq_invariance_sigma_transform_special_solution_q21_zero}
  The upper triangular block matrix with $Q_{21}=0, Q_{11}=I$ is a special solution to~\Cref{eq:alg_eq_invariance_sigma_transform}. 
\end{corollary}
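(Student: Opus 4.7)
The plan is to proceed by direct substitution into the algebraic identity~\eqref{eq:alg_eq_invariance_sigma_transform}, since the claim is precisely that a specific choice of the blocks of $Q$ satisfies the constraint that the $(1,1)$-block of the congruently transformed covariance is preserved. Because the constraint equation only involves $Q_{11}$ and $Q_{21}$ (the other two blocks $Q_{12}, Q_{22}$ appear only in the off-diagonal and bottom-right blocks of $Q^T\Sigma^{'}Q$), the verification reduces to a one-line check.

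Concretely, I would substitute $Q_{11}=I$ and $Q_{21}=0$ term by term into the right-hand side of~\eqref{eq:alg_eq_invariance_sigma_transform}:
\begin{align}
Q_{11}^T\Sigma_{11}Q_{11} &= I^T\Sigma_{11}I = \Sigma_{11},\\
Q_{21}^T\Sigma_{21}Q_{11} &= 0,\\
Q_{11}^T\Sigma_{12}Q_{21} &= 0,\\
Q_{21}^T\Sigma_{22}Q_{21} &= 0.
\end{align}
Summing these four terms gives $\Sigma_{11}$, which matches the left-hand side, so the pair $(Q_{11},Q_{21}) = (I,0)$ is a solution regardless of the values of $Q_{12}$ and $Q_{22}$.

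I would then briefly emphasize the connection with~\Cref{thm:w_transform}: the particular congruent transform matrix
$Q=\begin{bmatrix}I & \Sigma_{11}^{-1}\Sigma_{12}\\ 0 & I\end{bmatrix}$
used there is an instance of this family, obtained by further fixing $Q_{22}=I$ and $Q_{12}=\Sigma_{11}^{-1}\Sigma_{12}$ so that the off-diagonal block of $\Sigma^{''}$ is annihilated (as in the Schur complement reduction of~\Cref{eq:sigma_schur_com_transform}). There is no real obstacle here; the only care needed is to confirm that the block indexing conventions in~\eqref{eq:alg_eq_invariance_sigma_transform} are consistent with the transpose taken on $Q$ in~\Cref{coro:alg_eq_invariance_sigma_transform}, so that $Q_{21}$ in the constraint corresponds unambiguously to the lower-left block of $Q$ rather than of $Q^T$.
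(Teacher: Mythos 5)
Your proof is correct and takes essentially the same approach as the paper, which simply asserts that the choice $Q_{11}=I$, $Q_{21}=0$ satisfies~\Cref{eq:alg_eq_invariance_sigma_transform} regardless of $Q_{12}$ (and $Q_{22}$); you merely spell out the term-by-term substitution that the paper leaves implicit. Nothing further is needed.
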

\begin{proof}
This solution satifies~\Cref{eq:alg_eq_invariance_sigma_transform} regardless of the value of $Q_{12}$.
\end{proof}

\subsubsection{Schur complement solution}
\begin{lemma}\label{lemma:schur_complement_congruent_transformation}
Let $\Sigma^{'}$ be an arbitrary covariance matrix with the following block structure:
  \begin{equation}
    \Sigma^{'}=\begin{bmatrix} 
      &\Sigma_{11} &\Sigma_{12}\\
      &\Sigma_{21} &\Sigma_{22}
    \end{bmatrix}\label{eq:sigma_block}
  \end{equation}
  There exists a congruent transformation with matrix $Q^{-1}$ to $\Sigma^{'}$ that does not change the top-left block of the covariance matrix.
\begin{align}
  Q&=\begin{bmatrix}&Q{11} &Q{12}\\&Q{21} &Q{22}\end{bmatrix}\\
   &=\begin{bmatrix}&I &\Sigma_{11}^{-1}\Sigma_{12}\\&0 &I\end{bmatrix}\label{eq:congruent_mat_schur_complement}
\end{align}

\end{lemma}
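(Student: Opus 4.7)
The plan is to deduce the statement by combining the block inverse of the explicitly given $Q$ with the already-established \Cref{coro:alg_eq_invariance_sigma_transform_special_solution_q21_zero}, which characterizes a special family of congruences that preserve the top-left block. First I would compute $Q^{-1}$ directly from the block upper triangular structure of $Q$: verifying $Q Q^{-1} = I$ block-by-block gives
\[
Q^{-1} = \begin{bmatrix} I & -\Sigma_{11}^{-1}\Sigma_{12} \\ 0 & I \end{bmatrix},
\]
so in particular $[Q^{-1}]_{11} = I$ and $[Q^{-1}]_{21} = 0$. Note that $\Sigma_{11}^{-1}$ exists because $\Sigma'$ is a covariance matrix and therefore its principal submatrix $\Sigma_{11}$ is positive definite (assuming non-degeneracy); I would flag this as a standing assumption inherited from \Cref{eq:sigma_block}.

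Next I would apply \Cref{coro:alg_eq_invariance_sigma_transform_special_solution_q21_zero} with the role of ``$Q$'' in the corollary played by $Q^{-1}$ here. The corollary asserts that any congruent transformation matrix satisfying $Q_{21} = 0$ and $Q_{11} = I$ preserves $\Sigma_{11}$ under the map $Q^{T} \Sigma' Q$, regardless of the value of its top-right block. Since $Q^{-1}$ has exactly this shape, the congruent transformation $Q^{-T}\Sigma' Q^{-1}$ leaves the top-left block invariant, which is the claim of the lemma. Equivalently, inspecting the top-left entry formula \Cref{eq:algebraic_eq_sigma_invariance} with these choices causes three of the four summands to vanish, leaving $I^{T} \Sigma_{11} I = \Sigma_{11}$.

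There is essentially no substantive obstacle: the heavy lifting was done in \Cref{coro:alg_eq_invariance_sigma_transform} and its corollary, so the present lemma reduces to a routine block-inverse computation. As a secondary observation that ties into the later development, I would extend the computation to the remaining three blocks and recover the classical block-diagonalization
\[
Q^{-T}\Sigma' Q^{-1} = \begin{bmatrix} \Sigma_{11} & 0 \\ 0 & \Sigma_{22} - \Sigma_{21}\Sigma_{11}^{-1}\Sigma_{12} \end{bmatrix},
\]
i.e.\ the Schur complement decomposition already cited as \Cref{eq:sigma_schur_com_transform}; this makes explicit how the lemma feeds into the construction of $W_2^{''}$ in \Cref{thm:w_transform}. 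The only care point is bookkeeping the transpose conventions (one must check that the specific $Q^{-T}$ on the left, rather than $Q$, is what appears on the left-hand side of the congruence), but this is purely mechanical.
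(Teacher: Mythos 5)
Your proof is correct, but it takes a genuinely different route from the paper's. The paper proves the lemma constructively: it writes down the Schur complement factorization
\begin{equation*}
\Sigma^{'}=Q^{T}\begin{bmatrix}\Sigma_{11} & 0\\ 0 & \Sigma_{22}-\Sigma_{21}\Sigma_{11}^{-1}\Sigma_{12}\end{bmatrix}Q,
\end{equation*}
then inverts the congruence to read off $\Sigma^{''}=Q^{-T}\Sigma^{'}Q^{-1}$ as an explicitly block-diagonal matrix whose top-left block is $\Sigma_{11}$. You instead compute $Q^{-1}$ from the block upper-triangular structure, observe that it has $[Q^{-1}]_{11}=I$ and $[Q^{-1}]_{21}=0$, and invoke \Cref{coro:alg_eq_invariance_sigma_transform_special_solution_q21_zero} (with $Q^{-1}$ in the role of the congruence operator) to conclude invariance of the top-left block; three of the four summands in the invariance equation vanish. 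Your reduction is more economical and makes the logical dependence on the earlier algebraic characterization explicit, whereas the paper's factorization argument delivers strictly more as a byproduct: the explicit Schur complement form of the lower-right block, which is exactly what \Cref{eq:sigma_schur_com_transform} and the construction of $W_2^{''}$ in \Cref{thm:w_transform} consume downstream. You recover that decomposition only as a secondary remark, which is fine for the lemma as stated but means the substantive content the paper actually reuses later lives in your aside rather than your main argument. Your flagging of the invertibility of $\Sigma_{11}$ matches the paper's opening observation, and your caution about which of $Q$, $Q^{T}$, $Q^{-T}$ sits on the left of the congruence is warranted given the paper's conventions.
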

\begin{proof}
  Since both $\Sigma^{'}$ and $\Sigma_{11}$ is p.d., they are also invertible.
Denote the Schur complement of $\Sigma_{11}$ in $\Sigma$ as 

\begin{equation}
[\Sigma / \Sigma_{11}]= \Sigma_{22}-\Sigma_{21}\Sigma_{11}^{-1}\Sigma_{12}
\end{equation}
In addition, we have
\begin{align}
  Q^T&=\begin{bmatrix}&Q{11}^T &Q{11}^T\\&Q{12}^T &Q{22}^T\end{bmatrix}\\
     &=\begin{bmatrix}&I &0\\&{[\Sigma_{11}^{-1}\Sigma_{12}]}^T &I\end{bmatrix}\\
     &=\begin{bmatrix}&I &0 \\&\Sigma_{21}\Sigma_{11}^{-1} &I\end{bmatrix}
\end{align}

With Schur complement, we can decompose $\Sigma$:
  \begin{align}
    &\begin{bmatrix} 
      &\Sigma_{11} &\Sigma_{12}\\
      &\Sigma_{21} &\Sigma_{22}
      \end{bmatrix} \nonumber\\
    =& \begin{bmatrix} 
      &I &0\\
      &\Sigma_{21}\Sigma_{11}^{-1} &I\end{bmatrix} \begin{bmatrix}
      &\Sigma_{11} &0\\
      &0 &[\Sigma / \Sigma_{11}] \end{bmatrix} 
      \begin{bmatrix}&I &\Sigma_{11}^{-1}\Sigma_{12}\\&0 &I\end{bmatrix}\\=&Q^T\Sigma^{''} Q
  \end{align}
  where
\begin{equation}
  \Sigma^{''}=
\begin{bmatrix}
      &\Sigma_{11} &0\\
      &0 &[\Sigma / \Sigma_{11}] \end{bmatrix} =
      Q^{-T}\Sigma^{'} Q^{-1}\label{eq:sigma_schur_com_transform}
\end{equation}
$\Sigma^{''}$ keeps the $\Sigma_{11}$ block invariant.
\end{proof}
\begin{remark}
$Q$ is not orthogonal.
\begin{align}\label{eq:}
  Q^{T}Q&=\begin{bmatrix}&I &0 \\&\Sigma_{21}\Sigma_{11}^{-1} &I\end{bmatrix}\begin{bmatrix}&I &\Sigma_{11}^{-1}\Sigma_{12}\\&0 &I\end{bmatrix}\\
      &=\begin{bmatrix}&I &\Sigma_{11}^{-1}\Sigma_{12}\\&\Sigma_{21}\Sigma_{11}^{-1} &\Sigma_{21}\Sigma_{11}^{-2}\Sigma_{12}+I\end{bmatrix}
\end{align}
Due to the block-upper-triangular structure of $Q$, $Q$ is invertible.
\begin{equation}
  Q^{-1}= \begin{bmatrix}&I &-\Sigma_{11}^{-1}\Sigma_{12}\\&0 &I\end{bmatrix}\neq Q^T
\end{equation}

Since $Q$ is invertible, we have
\begin{align}
{(Q^{T})}^{-1}={(Q^{-1})}^T=\begin{bmatrix}&I &0\\&-\Sigma_{21}\Sigma_{11}^{-T} &I\end{bmatrix}
\end{align}

\end{remark}

\begin{corollary}
$Q^T$ ($Q$ defined in~\Cref{eq:congruent_mat_schur_complement}) instead of $Q^{-1}$ as congruent transformation matrix also preserves one diagonal block of the covariance matrix, thus is also a solution to~\Cref{coro:alg_eq_invariance_sigma_transform}.
\end{corollary}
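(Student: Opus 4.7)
The plan is to verify the corollary by an explicit block-matrix computation analogous to the Schur complement decomposition in \Cref{lemma:schur_complement_congruent_transformation}, but with the roles of the top-left and bottom-right blocks interchanged. Concretely, the congruent transformation associated with the matrix $Q^T$ is
\begin{equation*}
(Q^T)^T\,\Sigma^{'}\,Q^T \;=\; Q\,\Sigma^{'}\,Q^T,
\end{equation*}
since congruence with $M$ is defined as $M^T(\cdot)M$. Writing $A:=\Sigma_{11}^{-1}\Sigma_{12}$ so that $A^T=\Sigma_{21}\Sigma_{11}^{-1}$, we have $Q=\left[\begin{smallmatrix}I & A\\ 0 & I\end{smallmatrix}\right]$ and $Q^T=\left[\begin{smallmatrix}I & 0\\ A^T & I\end{smallmatrix}\right]$.

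First I would compute $\Sigma^{'}Q^T$ by exploiting the lower-triangular block form of $Q^T$, which leaves the right column of blocks unchanged. Then, left-multiplying by the upper-triangular $Q$ leaves the bottom row of blocks unchanged. Reading off the resulting $(2,2)$ block of $Q\,\Sigma^{'}\,Q^T$ gives exactly $\Sigma_{22}$, which establishes the claimed preservation of one diagonal block of the covariance matrix. Thus $Q^T$ preserves the bottom-right block, whereas $Q^{-1}$ (via $Q^{-T}\Sigma^{'}Q^{-1}$) preserves the top-left block, exhibiting a pleasing symmetry between the two uses of $Q$.

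To connect this with \Cref{coro:alg_eq_invariance_sigma_transform}, I would then note that the master equation given there was derived by imposing that the $(1,1)$-block of the transformed matrix equal $\Sigma_{11}$; by the symmetric argument, imposing instead that the $(2,2)$-block equals $\Sigma_{22}$ yields an analogous matrix equation obtained by interchanging the indices $1\leftrightarrow 2$. Substituting the blocks of $Q^T$ (namely $\tilde Q_{11}=I$, $\tilde Q_{12}=0$, $\tilde Q_{21}=A^T$, $\tilde Q_{22}=I$) into this permuted equation, the terms $\tilde Q_{12}=0$ cause all mixed contributions to collapse to identities, leaving $\Sigma_{22}$ on both sides identically; hence $Q^T$ is a valid solution.

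The only mild subtlety—which I would flag rather than struggle with—is notational: \Cref{coro:alg_eq_invariance_sigma_transform} is stated for preserving the top-left block, so one must be explicit that the corollary below refers to the symmetric counterpart for the bottom-right block. Once this is pointed out, the verification is a routine block-triangular multiplication with no genuine obstacle.
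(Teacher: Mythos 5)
Your block computation is correct as a piece of linear algebra, but it proves the transposed variant of what the paper actually needs, and the gap shows up precisely at the step you flagged as a ``mild subtlety.'' You read ``congruent transformation with matrix $Q^T$'' as $(Q^T)^T\,\Sigma'\,Q^T = Q\,\Sigma'\,Q^T$ and correctly find that this preserves the bottom-right block $\Sigma_{22}$ (and \emph{not} $\Sigma_{11}$: the $(1,1)$ block becomes $\Sigma_{11}+A\Sigma_{21}+\Sigma_{12}A^T+A\Sigma_{22}A^T$ with $A=\Sigma_{11}^{-1}\Sigma_{12}$, which differs from $\Sigma_{11}$ whenever $\Sigma_{12}\neq 0$). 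Under that reading, $Q^T$ is \emph{not} a solution to the equation of \Cref{coro:alg_eq_invariance_sigma_transform}, whose left-hand side is specifically $\Sigma_{11}$; your appeal to an ``analogous matrix equation obtained by interchanging the indices $1\leftrightarrow 2$'' is an unstated symmetric counterpart that the paper never formulates, so as written your argument does not establish the literal claim ``is also a solution to \Cref{coro:alg_eq_invariance_sigma_transform}.''

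The paper's own proof is a one-liner that resolves the ambiguity the other way: it treats the transformed covariance as $\Sigma''=Q^T\,\Sigma'\,Q$ (consistent with $Y''=Q^TY'$ in \Cref{thm:w_transform}, where the invariant block is explicitly the top-left $\Sigma_{11}$), so the matrix that gets substituted into the equation of \Cref{coro:alg_eq_invariance_sigma_transform} is $Q$ itself from \Cref{eq:congruent_mat_schur_complement}, which has $Q_{11}=I$ and $Q_{21}=0$; hence \Cref{coro:alg_eq_invariance_sigma_transform_special_solution_q21_zero} applies verbatim and the preserved block is $\Sigma_{11}$, as required downstream for the $W_1''$ construction. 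In short: the wording of the corollary clashes with the convention of \Cref{lemma:schur_complement_congruent_transformation} and \Cref{eq:sigma_schur_com_transform}, and you resolved the clash in the direction that yields $\Sigma_{22}$-preservation, whereas the result the paper relies on is $\Sigma_{11}$-preservation under $Q^T\Sigma'Q$. To repair your proof, either redo the computation for $Q^T\Sigma'Q$ (where left-multiplication by the lower-triangular $Q^T$ fixes the top row's first block and right-multiplication by the upper-triangular $Q$ fixes the left column, giving $[Q^T\Sigma'Q]_{11}=\Sigma_{11}$), or simply observe that the relevant substitution into \Cref{coro:alg_eq_invariance_sigma_transform} is $Q_{11}=I$, $Q_{21}=0$ and cite \Cref{coro:alg_eq_invariance_sigma_transform_special_solution_q21_zero}.
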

\begin{proof}
This is a special case of~\Cref{coro:alg_eq_invariance_sigma_transform_special_solution_q21_zero}.
\end{proof}

\begin{remark}
  Take inverse of~\Cref{eq:sigma_schur_com_transform}:
\begin{align}
  {(\Sigma^{''})}^{-1}&=
{\begin{bmatrix}
      &\Sigma_{11} &0\\
      &0 &[\Sigma / \Sigma_{11}] \end{bmatrix}}^{-1} =
      Q{(\Sigma^{'})}^{-1} Q^{T}\label{eq:sigma_schur_com_transform_inv}\\
      &=\begin{bmatrix}
      &{(\Sigma_{11})}^{-1} &0\\
      &0 &{([\Sigma / \Sigma_{11}])}^{-1} \end{bmatrix}
\end{align}
while
\begin{align}
{(\Sigma^{'})}^{-1}={\begin{bmatrix} 
      &\Sigma_{11} &\Sigma_{12}\\
      &\Sigma_{21} &\Sigma_{22}\end{bmatrix}}^{-1}=\begin{bmatrix} 
      &{[{(\Sigma^{'})}^{-1}]}_{11} &{[{(\Sigma^{'})}^{-1}]}_{12}\\
      &{[{(\Sigma^{'})}^{-1}]}_{21} &{([\Sigma / \Sigma_{11}])}^{-1}\end{bmatrix}
\end{align}
So the congruent transform with $Q^T$ of ${(\Sigma^{'})}^{-1}$ as a precision matrix, the lower-right block ${([\Sigma / \Sigma_{11}])}^{-1}$
is preserved.
\end{remark}

\begin{corollary}
Let $X_1, \ldots, X_p$ be multivariate Gaussian whose covariance matrix is $\Sigma^{'}$.
There exists a linear transformation to multivariate Gaussian that does not change the top-left block of the covariance matrix.
\end{corollary}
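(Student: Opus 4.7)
The plan is to read off the desired linear map directly from the Schur complement congruent transformation established in \Cref{lemma:schur_complement_congruent_transformation}. Since applying a linear map $A$ to a mean-zero Gaussian vector $X$ with covariance $\Sigma'$ yields a Gaussian vector with covariance $A\Sigma' A^T$, matching this with the congruent transformation $\Sigma'' = Q^{-T}\Sigma' Q^{-1}$ already shown to preserve the top-left block suggests the candidate $A = Q^{-T}$, where $Q = \begin{bmatrix} I & \Sigma_{11}^{-1}\Sigma_{12} \\ 0 & I \end{bmatrix}$ and hence $Q^{-T} = \begin{bmatrix} I & 0 \\ -\Sigma_{21}\Sigma_{11}^{-1} & I \end{bmatrix}$.

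First I would partition $X = (X_{(1)}, X_{(2)})$ conformally with the block structure of $\Sigma'$ and define the Gaussian vector $Y = Q^{-T} X$, i.e.\ $Y_{(1)} = X_{(1)}$ and $Y_{(2)} = X_{(2)} - \Sigma_{21}\Sigma_{11}^{-1} X_{(1)}$. Because Gaussianity is closed under affine maps, $Y$ is automatically multivariate Gaussian, so that part requires no work. Next I would verify that the top block of $Y$ coincides with the top block of $X$ in distribution, which is immediate since $Y_{(1)} = X_{(1)}$, and then invoke \Cref{lemma:schur_complement_congruent_transformation} to assert that $\operatorname{Cov}(Y) = Q^{-T}\Sigma' Q^{-1} = \Sigma''$ has top-left block $\Sigma_{11}$.

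As a cleaner alternative, I could instead use the corollary variant with $A = Q^T$ (a solution to the invariance equation by \Cref{coro:alg_eq_invariance_sigma_transform_special_solution_q21_zero}); this gives the map $Y_{(1)} = X_{(1)}$, $Y_{(2)} = \Sigma_{21}\Sigma_{11}^{-1} X_{(1)} + X_{(2)}$, also preserving the top block. Either choice suffices; the Schur-complement version has the added interpretive bonus that $Y_{(2)}$ is the regression residual of $X_{(2)}$ on $X_{(1)}$ and is therefore independent of $Y_{(1)}$, making $\Sigma''$ block diagonal.

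There is essentially no hard part: the result is a direct corollary of \Cref{lemma:schur_complement_congruent_transformation} once one observes that a congruent transformation $\Sigma' \mapsto A\Sigma' A^T$ of a Gaussian covariance matrix is realized at the random-variable level by the linear map $X \mapsto AX$. The only mild subtlety to spell out is the transpose convention—writing $\operatorname{Cov}(AX) = A\Sigma' A^T$ and matching it with the congruence $Q^{-T}\Sigma' Q^{-1}$—to ensure the chosen $A$ indeed gives back $\Sigma''$ with the invariant $\Sigma_{11}$ block.
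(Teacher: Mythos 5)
Your proposal is correct and follows essentially the same route as the paper's own proof: apply $Q^{-T}$ (or equivalently $Q^{T}$) from the Schur-complement congruence of \Cref{lemma:schur_complement_congruent_transformation} to the Gaussian vector, observe Gaussianity is preserved under linear maps, and read off the invariant $\Sigma_{11}$ block from $Q^{-T}\Sigma' Q^{-1}$. Your added remark that $Y_{(2)}$ is the regression residual and hence independent of $Y_{(1)}$ is a nice interpretive bonus not spelled out in the paper, but the core argument is identical.
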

\begin{proof}
  Let the transpose or inverse transpose of $Q$ in~\Cref{eq:congruent_mat_schur_complement} operate on $X_1, \ldots, X_p$, whose covariance matrix is $\Sigma^{'}$, then 
  $Q^{-T}[X_1, X_2, \ldots, X_p]$ is a linear transformation of $X_1, X_2, \ldots, X_p$ that does not change the top-left block of the covariance matrix.
  Since the covariance of $Q^{-T}[X_1, X_2, \ldots, X_p]$ is $Q^{-T}\Sigma Q^{-1}$, the top-left block of the covariance matrix is $\Sigma_{11}$ according to~\Cref{eq:sigma_schur_com_transform,lemma:schur_complement_congruent_transformation}.
\end{proof}

\LabelRestatableThmTransformW*
\begin{proof}\label{proof:thm_w_transform}
Let $W_1^{''}$ be the adjacency matrix of a DAG, congruent and similar to adjacency matrix $W^{'}$ via invertible matrix $Q^T$ (instead of $Q$), where
\begin{equation}
Q=\begin{bmatrix}&I &\Sigma_{11}^{-1}\Sigma_{12}\\&0 &I\end{bmatrix}(\text{from}~\Cref{eq:congruent_mat_schur_complement})
\nonumber
\end{equation}
is the congruent transformation matrix in~\Cref{eq:congruent_mat_schur_complement}. 

Write~\Cref{eq:root_confounder_data_gen} as
  \begin{align}
	Y^{'}:=
	\begin{pmatrix}
		 & Y_O   \\
		 & \xi
	\end{pmatrix}
	                                        & =
	\begin{bmatrix}
		 & W_o & \Lambda \\
		 & 0   & 0
	\end{bmatrix}
	\begin{pmatrix}
		 & Y_O   \\
		 & \xi
	\end{pmatrix}
	+
	\begin{pmatrix}
		 & \epsilon_O \\
		 & \xi
	\end{pmatrix}
	= W^{'}
	Y^{'}+\epsilon^{'}\nonumber\\
&~\Cref{eq:root_confounder_data_gen}\nonumber\\
&=Q^{-T}W_1^{''}Q^{T}Y^{'}+\epsilon^{'}\label{eq:sem_adjacency_congruent}
\end{align}
where \begin{equation}
 W^{'}=\begin{bmatrix}
     & W_o & \Lambda \\
     & 0   & 0 \end{bmatrix}=Q^{-T}W_1^{''}Q^{T}\label{eq:adjacency_matrix_congruent_def}
\end{equation}
and
\begin{equation}
W^{''}_1=Q^{T}W^{'}Q^{-T}
\end{equation}
Due to the block-upper-triangular structure of $Q$, $Q$ is invertible.
\begin{equation}
  Q^{-1}= \begin{bmatrix}&I &-\Sigma_{11}^{-1}\Sigma_{12}\\&0 &I\end{bmatrix}\neq Q^T
\end{equation}
Since $Q$ is invertible, we have
\begin{align}
{(Q^{T})}^{-1}={(Q^{-1})}^T=\begin{bmatrix}&I &-\Sigma_{11}^{-1}\Sigma_{12}\\&0 &I\end{bmatrix}
\end{align}
equivalently to~\Cref{eq:sem_adjacency_congruent}, we have
\begin{align}
  Q^{T}Y^{'}&=W_1^{''}Q^{T}Y^{'}+Q^{T}\epsilon^{'}
\end{align}
Let 
\begin{equation}
Y^{''}=Q^{T}Y^{'}, \epsilon^{''}=Q^{T}\epsilon^{'}
\end{equation}
We have
\begin{align}
Y^{''}&=W^{''}_1Y^{''}+\epsilon^{''} \label{eq:sem_new_adj}
\end{align}
i.e.
\begin{align}
  Y^{''}&=Q^{T}Y^{'}=\begin{bmatrix}&I &0\\ &\Sigma_{21}\Sigma_{11}^{-1} &I \end{bmatrix}\begin{pmatrix}
     & Y_O   \\
     & \xi
  \end{pmatrix}\\
      &=\begin{pmatrix}
     & Y_O\\
     & \Sigma_{11}^{-1}\Sigma_{21}Y_O +\xi
  \end{pmatrix}
\end{align}
We also have
\begin{align}
\Sigma^{''}_{12}=&\mathbb{E}\left(Y_O (\Sigma_{11}^{-1}\Sigma_{21}Y_O +\xi)^T\right)\\
=&\mathbb{E}\left(Y_OY_O^T\Sigma_{21}^T\Sigma_{11}^{-T} + Y_O\xi^T\right)\\
=&\Sigma_{11}\Sigma_{12}\Sigma_{11}^{-T}+\Sigma_{12}
\end{align}

and
\begin{align}
  \epsilon^{''}&=Q^{T}\epsilon^{'}=\begin{bmatrix}&I &0\\ &\Sigma_{21}\Sigma_{11}^{-1} &I \end{bmatrix}\begin{pmatrix}
     & \epsilon_O   \\
     & \xi
\end{pmatrix}\label{eq:epsilon_prime_prime_q_transform}\\
      &=\begin{pmatrix}
     & \epsilon_O\\
     & \Sigma_{11}^{-1}\Sigma_{21}\epsilon_O +\xi
  \end{pmatrix}
\end{align}
From~\Cref{eq:adjacency_matrix_congruent_def}, we have
\begin{align}
  W_1^{''}&=Q^{T}W^{'}Q^{-T}=Q^T\begin{bmatrix}
     & W_o & \Lambda \\
     & 0   & 0 \end{bmatrix}Q^{-T}\\
     &=\begin{bmatrix}&I &0\\ &\Sigma_{21}\Sigma_{11}^{-1} &I \end{bmatrix}\begin{bmatrix}
     & W_o & \Lambda \\
     & 0   & 0 \end{bmatrix}
\begin{bmatrix}&I &-\Sigma_{11}^{-1}\Sigma_{12}\\&0 &I\end{bmatrix}\\
&=\begin{bmatrix}
W_o & \Lambda - W_o \Sigma_{11}^{-1} \Sigma_{12} \\
\Sigma_{21}\Sigma_{11}^{-1} W_o & \Sigma_{21}\Sigma_{11}^{-1} (\Lambda - W_o \Sigma_{11}^{-1} \Sigma_{12})
\end{bmatrix}
\end{align}

\paragraph{Adjacency matrix $W_2^{''}$:}
If we change the transformation matrix from $Q^T$ to $Q^{-T}$, we have
\begin{align}
  Y^{''}&=Q^{-T}Y^{'}=
\begin{bmatrix}&I &0\\&-\Sigma_{21}\Sigma_{11}^{-T} &I\end{bmatrix}
  \begin{pmatrix}
     & Y_O   \\
     & \xi
  \end{pmatrix}\\
      &=\begin{pmatrix}
     & Y_O\\
     & \xi -\Sigma_{21}\Sigma_{11}^{-T}Y_O
  \end{pmatrix}
\end{align}
and
\begin{align}
  \epsilon^{''}&=Q^{-T}\epsilon^{'}=\begin{bmatrix}&I &0\\&-\Sigma_{21}\Sigma_{11}^{-T} &I\end{bmatrix}
\begin{pmatrix}
     & \epsilon_O   \\
     & \xi
\end{pmatrix}\label{eq:epsilon_prime_prime_q_transform}\\
      &=\begin{pmatrix}
     & \epsilon_O\\
     & \xi-\Sigma_{11}^{-1}\Sigma_{21}\epsilon_O 
  \end{pmatrix}
\end{align}
the off-diagonal block of $\Sigma^{''}$ is
\begin{align}
\Sigma^{''}_{12}=&\mathbb{E}\left(Y_O{(\xi -\Sigma_{21}\Sigma_{11}^{-T}Y_O)}^{T}\right)\\
=&\mathbb{E}\left(Y_O\xi^T -Y_OY_O^T\Sigma_{11}^{-1}\Sigma_{12}\right)\\
=&\Sigma_{12}-\Sigma_{11}\Sigma_{11}^{-1}\Sigma_{12}=0
\end{align}
The corresponding adjacency matrix $W_2^{''}$ is
\begin{align}
  W^{''}_2&=Q^{-T}W^{'}Q^{T}=Q^{-T}\begin{bmatrix}
     & W_o & \Lambda \\
     & 0   & 0 \end{bmatrix}Q^{T}\\
     &=
\begin{bmatrix}&I &-\Sigma_{11}^{-1}\Sigma_{12}\\&0 &I\end{bmatrix}
\begin{bmatrix}
     & W_o & \Lambda \\
     & 0   & 0 \end{bmatrix}
    \begin{bmatrix}&I &0\\ &\Sigma_{21}\Sigma_{11}^{-1} &I \end{bmatrix}
\\
&=\begin{bmatrix}
W_o + \Lambda \Sigma_{21} \Sigma_{11}^{-1} & \Lambda \\
0 & 0
\end{bmatrix}\end{align}
\end{proof}

\begin{remark}\label{remark:q_idio_transform}
The $Q_{12}$ component in~\Cref{eq:congruent_mat_schur_complement} will mix $\xi$ into the second partition of $\epsilon^{''}$ in~\Cref{eq:epsilon_prime_prime_q_transform}.
Unless $Q$ (equivalently $Q^T$) is orthogonal, i.e. $[I, {\Sigma_{11}}^{-1}\Sigma_{12}]$ orthogonal, or $\Sigma_{12}=0$, the resulting equivalent idiosyncratic variable can not be independent. 
But we have
\begin{align}
  \Sigma_{12}&=\mathbb{E}(Y_O, \xi^T)=\mathbb{E}(I-W)^{-1}\xi_Y \xi^T
  =(I-W)^{-1}\Omega\neq0
\end{align}
\end{remark}

\Cref{remark:q_idio_transform} lead us to~\Cref{coro:similar_transform_requirements}.

\begin{corollary}\label{coro:similar_transform_requirements}
Among all adjacency matrices $Q^TW^{'}Q^{-T}$ similar to $W^{'}$ in~\Cref{eq:root_confounder_data_gen}, only $Q^T$ being diagonal matrix, or $Q \sqrt{diag\left(\mathbb{E}(\epsilon^{'}{\epsilon^{'}}^T)\right)}$ being orthogonal can pertain the diagonal structure of idosyncratic noise covariance $\Omega^{''}$.
\end{corollary}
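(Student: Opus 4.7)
The plan is to transfer the similarity $W_1^{''}=Q^{T}W^{'}Q^{-T}$ directly onto the idiosyncratic noise and then characterise when the transformed noise covariance remains diagonal. From the proof of \Cref{thm:w_transform} we already have $\epsilon^{''}=Q^{T}\epsilon^{'}$, so
\begin{equation}
\Omega^{''}:=\mathbb{E}(\epsilon^{''}{\epsilon^{''}}^{T})=Q^{T}\Omega^{'}Q,
\end{equation}
where $\Omega^{'}=\mathbb{E}(\epsilon^{'}{\epsilon^{'}}^{T})$ is diagonal because the components of $\epsilon^{'}=(\epsilon_{O};\xi)$ are mutually independent. Setting $D:=\sqrt{\operatorname{diag}(\Omega^{'})}$, so that $\Omega^{'}=D^{2}$ with $D$ diagonal and positive, I would factor
\begin{equation}
\Omega^{''}=Q^{T}D^{2}Q=(DQ)^{T}(DQ),
\end{equation}
and read off the entrywise expansion $(\Omega^{''})_{ij}=\sum_{k}D_{kk}^{2}Q_{ki}Q_{kj}$. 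Diagonality of $\Omega^{''}$ is therefore equivalent to mutual orthogonality of the columns of the rescaled matrix $DQ$.

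From this representation the two sufficient conditions in the corollary drop out directly. If $Q^{T}$ is diagonal, then $Q_{ki}Q_{kj}=0$ whenever $i\neq j$ and every cross-sum vanishes trivially, so $\Omega^{''}$ stays diagonal with entries $D_{kk}^{2}Q_{kk}^{2}$. If instead the rescaled matrix appearing in the statement is orthogonal, $(DQ)^{T}(DQ)=I$ and the same factorisation yields $\Omega^{''}=I$, again diagonal. For the converse direction I would invoke \Cref{remark:q_idio_transform}: any $Q$ outside these two regimes leaves the $\Sigma_{11}^{-1}\Sigma_{12}\,\xi$ contribution mixing into the observable-aligned block of $\epsilon^{''}$, and by the bilinear expansion above this mixing shows up as a nonzero off-diagonal entry of $\Omega^{''}$.

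The main obstacle will be the ``only'' direction. For a fixed $D$, the constraint $\sum_{k}D_{kk}^{2}Q_{ki}Q_{kj}=0$ for all $i\neq j$ is strictly weaker than either of the two enumerated conditions---any generalised permutation matrix also works---so to match the statement I would either (i) read the corollary as asking for $\Omega^{''}$ to stay diagonal \emph{uniformly in} $D$, in which case varying the positive entries $D_{kk}^{2}$ independently forces $Q_{ki}Q_{kj}=0$ pointwise and therefore restricts $Q$ to a scaled permutation captured by the two listed cases, or (ii) exploit the specific block-upper-triangular form $Q=\bigl[\begin{smallmatrix}I & \Sigma_{11}^{-1}\Sigma_{12}\\ 0 & I\end{smallmatrix}\bigr]$ inherited from \Cref{eq:congruent_mat_schur_complement} together with $\Sigma_{12}\neq 0$ (\Cref{remark:q_idio_transform}) to argue that within this family diagonality of $\Omega^{''}$ collapses precisely to the two stated alternatives. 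Disentangling these two readings, rather than the linear-algebraic manipulation itself, is where the bulk of the work lies.
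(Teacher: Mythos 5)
Your proposal follows essentially the same route as the paper: write $\Omega''=Q^T\Omega'Q$ with $\Omega'$ diagonal and read off the off-diagonal entries as weighted inner products of columns of $Q$, from which the two listed conditions are immediate sufficient cases. Your worry about the ``only'' direction is well founded, but it is not a defect of your argument relative to the paper's: the paper's own proof likewise only exhibits the two families as solutions of $\sum_k Q_{k,i}\,[\Omega']_{k}\,Q_{k,j}=0$ and never rules out others (indeed any $Q=D^{-1}OR$ with $O$ orthogonal and $R$ diagonal, $D=\sqrt{\operatorname{diag}(\Omega')}$, also keeps $\Omega''$ diagonal), so the exclusivity claim is unproven in both treatments. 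One further small point in your favor: since the weights in $\sum_k Q_{k,i}[\Omega']_k Q_{k,j}$ are indexed by the row $k$, the relevant rescaled matrix is $DQ$ (rows scaled), which is what your factorization $(DQ)^T(DQ)$ uses, whereas the statement's $Q\sqrt{\operatorname{diag}(\mathbb{E}(\epsilon'{\epsilon'}^T))}$ scales columns; your version is the consistent one.
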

\begin{proof}
Rewrite~\Cref{eq:epsilon_prime_prime_q_transform}
\begin{equation}
\epsilon^{''}=Q^T\epsilon^{'}=Q^T\begin{pmatrix}
& \epsilon_O \\
& \xi
\end{pmatrix}\label{eqdef:epsilon_double_prime}
\end{equation}
\begin{equation}
\mathbb{E}(\epsilon^{''}{\epsilon^{''}}^T)=Q^T\mathbb{E}(\epsilon^{'}{\epsilon^{'}}^T)Q 
\end{equation}
Since $\mathbb{E}(\epsilon^{'}{\epsilon^{'}}^T)$ is diagonal,
\begin{align}
  {[Q^T\mathbb{E}(\epsilon^{'}{\epsilon^{'}}^T)Q]}_{i,j}&=\sum_{k_1, k_2} Q^T_{i,k_1}{[\mathbb{E}(\epsilon^{'}{\epsilon^{'}}^T)]}_{k_1, k_2}{Q}_{k_2, j}\\
                                                        &=\sum_{k=k_1=k_2} Q^T_{i,k}{[\mathbb{E}(\epsilon^{'}{\epsilon^{'}}^T)]}_{k}{Q}_{k, j}\\
                                                        &=\sum_{k=k_1=k_2} Q_{k,i}{[\mathbb{E}(\epsilon^{'}{\epsilon^{'}}^T)]}_{k}{Q}_{k,j}
\end{align}
which is weight inner product between column $i$ and $j$ of $Q$.
To satisfy $\forall i\neq j$, ${[Q^T\mathbb{E}(\epsilon^{'}{\epsilon^{'}}^T)Q]}_{i,j}
=\sum_{k} Q_{k,i}{[\mathbb{E}(\epsilon^{'}{\epsilon^{'}}^T)]}_{k}{Q}_{k,j}=0
$, one solution is $Q$ being diagonal.
Another solution is $Q \sqrt{diag\left(\mathbb{E}(\epsilon^{'}{\epsilon^{'}}^T)\right)}$ being orthogonal, where the $i$th column of $Q$ get multiplied by $\sqrt{\mathbb{E}(\epsilon_i^2)}$.
\end{proof}
\subsubsection{Other solution}
Are there other transform matrix $Q$ that can keep the $\Sigma_{11}$ structure without $Q_{21}=0$ in~\Cref{coro:alg_eq_invariance_sigma_transform_special_solution_q21_zero}?
\begin{lemma}\label{lemma:alg_eq_invariance_sigma_transform_simplified}
Let $\Sigma_{22}=I$, then~\Cref{eq:alg_eq_invariance_sigma_transform} becomes
\begin{equation}
{\Sigma}_{11}=Q_{11}^T\Sigma_{11}Q_{11}+Q_{21}^T\Sigma_{21}Q_{11} + Q_{11}^T\Sigma_{12}Q_{21}+Q_{21}^TQ_{21}
\end{equation}
Let $Q_{11}=I$
\begin{equation}
0=Q_{21}^T\Sigma_{21} + \Sigma_{12}Q_{21}+Q_{21}^TQ_{21}\label{eq:alg_eq_invariance_sigma_transform_simplified}
\end{equation}
A solution w.r.t. $Q_{21}$ is
\begin{equation}
Q_{21}=(S-I)\Sigma_{21}
\end{equation}
where $S$ is an arbitrary orthonormal matrix.
\end{lemma}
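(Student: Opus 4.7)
The plan is to dispatch this lemma as a sequence of three direct substitutions, with no real conceptual obstacle beyond a single observation about the quadratic term.

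First, I would obtain the first displayed identity simply by taking the general invariance equation of \Cref{coro:alg_eq_invariance_sigma_transform} and plugging in $\Sigma_{22} = I$; this merely replaces the last term $Q_{21}^T \Sigma_{22} Q_{21}$ by $Q_{21}^T Q_{21}$. The second identity then follows by setting $Q_{11} = I$, since the first right-hand term collapses to $\Sigma_{11}$ and cancels the left-hand side $\Sigma_{11}$, leaving $0 = Q_{21}^T \Sigma_{21} + \Sigma_{12} Q_{21} + Q_{21}^T Q_{21}$.

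Second, to verify that $Q_{21} = (S - I)\Sigma_{21}$ solves this reduced equation whenever $S$ is orthonormal, I would substitute directly and exploit the symmetry $\Sigma_{12} = \Sigma_{21}^T$, which gives $Q_{21}^T = \Sigma_{12}(S^T - I)$. Each of the three terms then factors as $\Sigma_{12}(\cdot)\Sigma_{21}$ with middle factors $(S^T - I)$, $(S - I)$, and $(S^T - I)(S - I) = S^T S - S^T - S + I$ respectively. Summing these middle factors and invoking orthonormality $S^T S = I$ telescopes everything to the zero matrix, confirming the claim. I would also remark in passing that this solution is genuinely different from the trivial $Q_{21} = 0$ solution highlighted in \Cref{coro:alg_eq_invariance_sigma_transform_special_solution_q21_zero}, parametrizing a whole family via the orthogonal group.

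The only mildly delicate point I anticipate is recognizing that the quadratic term $Q_{21}^T Q_{21}$ must exactly supply the $-S - S^T + 2I$ needed to annihilate the linear contributions $(S^T - I) + (S - I)$; this is precisely why the construction uses the shifted factor $S - I$ rather than $S$ itself, and why orthogonality of $S$ (as opposed to a weaker condition) is the correct hypothesis. Once this matching is observed, the verification is a one-line computation and the lemma follows.
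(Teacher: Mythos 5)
Your proposal is correct and is essentially the paper's argument run in the opposite direction: the paper completes the square (the ``plus and minus trick''), rewriting \Cref{eq:alg_eq_invariance_sigma_transform_simplified} as $(Q_{21}+\Sigma_{21})^T(Q_{21}+\Sigma_{21})=\Sigma_{21}^T\Sigma_{21}$ so that the ansatz $Q_{21}+\Sigma_{21}=S\Sigma_{21}$ with $S^TS=I$ is immediate, whereas you substitute $Q_{21}=(S-I)\Sigma_{21}$ and verify that the middle factors $(S^T-I)+(S-I)+(S^T-I)(S-I)=S^TS-I$ vanish --- the same cancellation viewed as verification rather than derivation. Both are valid; the paper's form has the minor advantage of explaining where the ansatz comes from, but your computation (including the use of $\Sigma_{12}=\Sigma_{21}^T$) is complete and correct.
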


\begin{proof}
According to~\citet{jean_marie2025}, via plus and minus trick,~\Cref{eq:alg_eq_invariance_sigma_transform_simplified} can be written as
\begin{equation}
{(Q_{21}+\Sigma_{21})}^T (Q_{21}+\Sigma_{21})=\Sigma_{21}^T\Sigma_{21}
\end{equation}
Set $(Q_{21}+\Sigma_{21})=S\Sigma_{21}$ where $S$ is an arbitrary orthonormal matrix, or
$Q_{21}=(S-I)\Sigma_{21}$, we have 
\begin{equation}
\Sigma_{21}^TS^TS\Sigma_{21}=\Sigma_{21}^T\Sigma_{21} 
\end{equation}
\end{proof}

\begin{corollary}\label{coro:alg_eq_invariance_sigma_transform_simplified_non_triangular}
Let the congruent transformation matrix be 
\begin{align}
  Q&=\begin{bmatrix}&Q{11} &Q{12}\\&Q{21} &Q{22}\end{bmatrix}\\
   &=\begin{bmatrix}&I &Q_{12} \\&(S-I)\Sigma_{21} &Q_{22}\end{bmatrix}\label{eq:solution_linear_transform_adjacency}
\end{align}
where $S$ is an arbitrary orthonormal matrix, and $Q_{12}, Q_{22}$ can be arbitrary value.
Then the congruent transformation of $\Sigma$ with operator $Q$ leaves the top-left block of the covariance matrix $\Sigma_{11}$ unchanged when $\Sigma_{22}=I$.
\end{corollary}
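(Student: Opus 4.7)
The plan is to reduce the claim to~\Cref{lemma:alg_eq_invariance_sigma_transform_simplified} by showing that the top-left block of the congruent transform $Q^T \Sigma Q$ depends only on $\Sigma$ and on the first block-column of $Q$, so the blocks $Q_{12}$ and $Q_{22}$ may be chosen freely.

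First I would expand $Q^T \Sigma Q$ in block form exactly as in~\Cref{eq:algebraic_eq_sigma_invariance} and read off the $(1,1)$-block
\[ [Q^T \Sigma Q]_{11} = Q_{11}^T \Sigma_{11} Q_{11} + Q_{21}^T \Sigma_{21} Q_{11} + Q_{11}^T \Sigma_{12} Q_{21} + Q_{21}^T \Sigma_{22} Q_{21}. \]
Inspection shows that $Q_{12}$ and $Q_{22}$ are absent from the right-hand side, so any assignment of these two blocks leaves the $(1,1)$-block unchanged. This observation is precisely why the corollary permits $Q_{12}$ and $Q_{22}$ to be arbitrary.

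Next I would substitute the hypothesis $\Sigma_{22} = I$ together with the choice $Q_{11} = I$ dictated by~\Cref{eq:solution_linear_transform_adjacency}. The invariance condition $[Q^T \Sigma Q]_{11} = \Sigma_{11}$ then collapses to exactly
\[ 0 = Q_{21}^T \Sigma_{21} + \Sigma_{12} Q_{21} + Q_{21}^T Q_{21}, \]
which is~\Cref{eq:alg_eq_invariance_sigma_transform_simplified}. The Lemma already exhibited the family $Q_{21} = (S - I)\Sigma_{21}$, parametrised by an arbitrary orthonormal $S$, as a solution. Substituting this $Q_{21}$ and using $S^T S = I$ together with $\Sigma_{12} = \Sigma_{21}^T$ produces cancellation of the three terms, completing the verification.

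The hard part, such as it is, was absorbed into the Lemma via the \emph{plus-and-minus} trick that rewrites the quadratic matrix equation as ${(Q_{21} + \Sigma_{21})}^T (Q_{21} + \Sigma_{21}) = \Sigma_{21}^T \Sigma_{21}$, from which the orthonormal parameter $S$ emerges naturally. One remaining subtlety worth flagging, but not belaboring, is that the resulting $Q$ is in general neither block upper triangular nor orthogonal, so its invertibility --- needed if one wishes to view $Q^T W' Q^{-T}$ as a similarity transformation of an adjacency matrix in the spirit of~\Cref{thm:w_transform} --- must be verified separately for the chosen $Q_{12}$ and $Q_{22}$; the simple choice $Q_{22} = I$, $Q_{12} = 0$ already suffices and recovers the setting of the earlier theorem.
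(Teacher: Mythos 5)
Your proposal is correct and follows essentially the same route as the paper, which simply cites \Cref{coro:alg_eq_invariance_sigma_transform} and \Cref{lemma:alg_eq_invariance_sigma_transform_simplified}; you merely spell out the intermediate steps (that the $(1,1)$-block of $Q^T\Sigma Q$ is independent of $Q_{12}$ and $Q_{22}$, and that the reduced equation is solved by $Q_{21}=(S-I)\Sigma_{21}$). The remark about separately verifying invertibility of $Q$ is a sensible addition not present in the paper, but it does not change the argument.
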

\begin{proof}
This is a consequence of~\Cref{coro:alg_eq_invariance_sigma_transform} and~\Cref{lemma:alg_eq_invariance_sigma_transform_simplified}.
\end{proof}
\subsubsection{Future work}
We leave it for future work to varify if there exist a solution in~\Cref{coro:alg_eq_invariance_sigma_transform_simplified_non_triangular} that fits the requirements of~\Cref{coro:similar_transform_requirements}. Since \Cref{coro:alg_eq_invariance_sigma_transform_simplified_non_triangular} is a special case when $\Sigma_{22}=I$, we can also explore the general case when $\Sigma_{22}\neq I$.
Thus, we leave the following conjecture as future work.


\begin{conjecture}
There exists a general adjacency matrix
	\begin{align}
          W^{''} & :=
		\begin{bmatrix}
			 & {[W]}_{J_O} & {[W]}_{J_{O}, J_{U}} \\  
                         &{[W]}_{J_{U}, J_{O}}\not\equiv 0 &{[W]}_{J_U} \not\equiv 0
		\end{bmatrix}\\
                 &= QW^{'}Q^{-1}
	\end{align}
where $W^{'}$ is the adjacency matrix in~\Cref{eq:root_confounder_data_gen}.
$W^{''}$ with independent idiosyncratic variable, represents the same observed data in terms of invariant $\Sigma_{11}=\Sigma_Y=\mathbb{E}(YY^{T})$.
i.e. \textbf{the corresponding $\Omega^{''}=\mathbb{E}(\epsilon^{''}{\epsilon^{''}}^T)$ of $W^{''}$ is diagonal}, where $\epsilon^{''}=Q\epsilon^{'}$ is defined in~\Cref{eqdef:epsilon_double_prime},
and $Q$ is a solution to the matrix equation
\begin{align}
&{\Sigma}_{11}\nonumber\\=&  Q_{11}^T\Sigma_{11}Q_{11}+Q_{21}^T\Sigma_{21}Q_{11} + Q_{11}^T\Sigma_{12}Q_{21}+Q_{21}^T\Sigma_{22}Q_{21}
\end{align} from~\Cref{coro:alg_eq_invariance_sigma_transform} and satisfy~\Cref{coro:similar_transform_requirements}.
\end{conjecture}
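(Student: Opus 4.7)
The plan is to construct $Q$ by intersecting two explicit varieties in $GL(p+q)$ (with $p=|J_O|$, $q=|J_U|$): the $\Sigma_{11}$-preserving variety characterised in~\Cref{coro:alg_eq_invariance_sigma_transform} and the $\Omega^{'}$-orthogonality variety dictated by~\Cref{coro:similar_transform_requirements}. Concretely, first reduce to the case $\Sigma_{22}=I$ by rescaling $\xi$ with $\Sigma_{22}^{-1/2}$, which is harmless because the confounders are assumed mutually independent. Under this normalisation, \Cref{coro:alg_eq_invariance_sigma_transform_simplified_non_triangular} supplies the explicit parameterisation
\[
Q=\begin{bmatrix} I & Q_{12}\\ (S-I)\Sigma_{21} & Q_{22}\end{bmatrix},
\]
with $S$ any orthonormal $q\times q$ matrix and $Q_{12},Q_{22}$ free, so that condition (a), the $\Sigma_{11}$-invariance constraint, is automatic.

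Second, impose condition (b), namely diagonality of $\Omega^{''}=Q^T\Omega^{'}Q$. Since $\Omega^{'}$ is positive diagonal, this is equivalent to demanding that the columns of $\sqrt{\Omega^{'}}\,Q$ be pairwise orthogonal. Writing out the off-diagonal entries yields $(p+q)(p+q-1)/2$ quadratic equations in the free parameters $\{S,Q_{12},Q_{22}\}$, whose total dimension is $q(q-1)/2+pq+q^2$ (counting the orthogonal group $O(q)$ plus the two free blocks). A direct difference gives a parameter surplus of $q(q-1)/2+q$ in the regime $p\ge q$, so generically the intersection is nonempty; I would formalise this either by an implicit function argument at a well-chosen base point $S=I,\,Q_{12}=0,\,Q_{22}=I$ or by a real-algebraic dimension count. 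Invertibility of $Q$ would then follow automatically on a Zariski-open subset.

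Third, I would verify that the resulting $W^{''}=Q^TW^{'}Q^{-T}$ has the claimed block signature. Using the block product with $W^{'}=\bigl[\begin{smallmatrix}W_o&\Lambda\\0&0\end{smallmatrix}\bigr]$, the lower row block of $W^{''}$ is driven by $(S-I)\Sigma_{21}W_o$ (plus corrections involving $Q_{12},Q_{22}$); this is generically nonzero whenever $S\neq I$ and the confounding is actually present, i.e. $\Lambda\neq 0$ so that $\Sigma_{21}=(I-W_o)^{-1}\Lambda\neq 0$. A parallel expansion shows $[W^{''}]_{J_U}\not\equiv 0$ generically, distinguishing $W^{''}$ from both $W^{'}$ and $W_1^{''},W_2^{''}$ of~\Cref{thm:w_transform}.

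The hard part will be showing nonemptiness of the joint solution set rigorously. The two polynomial systems interact through the shared block $Q_{21}=(S-I)\Sigma_{21}$, and the $O(q)$ constraint on $S$ means one cannot invoke a naive Jacobian argument at an arbitrary point. A cleaner route might be a constructive one: parameterise condition (a) first, restrict to that variety, and then perform a $\langle\cdot,\cdot\rangle_{\Omega^{'}}$-Gram--Schmidt on the columns of $Q$, perturbing $Q_{12},Q_{22}$ so as to restore orthogonality without disturbing the already-fixed $Q_{11},Q_{21}$. If that procedure can be shown to converge while keeping $\det(Q)\neq 0$ and the block non-triviality, the conjecture follows. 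A Lie-theoretic perspective --- characterising the stabiliser of $\Sigma_{11}$ inside $GL(p+q)$ and intersecting it with the $\Omega^{'}$-orthogonal group --- may yield a more transparent existence statement, and is the approach I would pursue if the direct construction stalls.
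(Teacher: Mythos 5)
This statement is left as an open conjecture in the paper --- the authors explicitly defer it to future work and provide no proof --- so there is no in-paper argument to compare against; your sketch must stand on its own, and it does not. The central gap is in the claim that the intersection of the two varieties is generically nonempty. Within your parameterisation ($\Sigma_{22}=I$, $Q_{11}=I$, $Q_{21}=(S-I)\Sigma_{21}$), the first $p$ columns of $Q$ depend on $S$ alone, so the $p(p-1)/2$ orthogonality constraints among them involve only the $q(q-1)/2$-dimensional group $O(q)$: this subsystem is overdetermined whenever $p>q$. Your global count is also computed only at $p=q$; the actual surplus is $q^{2}-p(p-1)/2$, which is negative once $p\gtrsim q\sqrt{2}$, i.e.\ precisely in the typical regime of many observables and few confounders. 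Concretely, for $i\neq j\le p$ the constraint reads $\bigl[\Sigma_{12}(S-I)^{T}D_{\xi}(S-I)\Sigma_{21}\bigr]_{ij}=0$ with $D_{\xi}=\mathbb{E}(\xi\xi^{T})$; this forces the $q\times p$ matrix $\sqrt{D_{\xi}}(S-I)\Sigma_{21}$ to have pairwise orthogonal columns, hence at least $p-q$ zero columns. For $q=1$, $p\ge 2$ this already has no admissible solution: $S=1$ gives the trivial $Q_{21}=0$ (so $[W'']_{J_U,J_O}\equiv 0$, violating the conjecture's nontriviality requirement), while $S=-1$ forces $\Sigma_{12}\propto(I-W_o)^{-1}\Lambda$ to have at most one nonzero entry, contradicting the defining condition~\eqref{eq:constraint_mat_Lambda} that every confounder hits two observables. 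So the slice you propose to search is empty in the simplest nontrivial case.

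Your two fallback routes also fail at identifiable points. The implicit-function argument at $S=I$, $Q_{12}=0$, $Q_{22}=I$ cannot get off the ground: the offending constraints are quadratic in $S-I$, so their Jacobian with respect to $S$ vanishes identically at $S=I$ (tangent vectors $\dot S$ are skew, and the first-order term is $-\Sigma_{12}(\dot S+\dot S^{T})\Sigma_{21}=0$), and the base point itself is the excluded trivial solution $Q=I$, $W''=W'$. The $\langle\cdot,\cdot\rangle_{\Omega'}$-Gram--Schmidt repair cannot work either, because the non-orthogonality to be repaired sits among the first $p$ columns, which contain no occurrence of $Q_{12}$ or $Q_{22}$; any orthogonalisation must modify $Q_{11}$ or $Q_{21}$ and thereby leaves the $\Sigma_{11}$-invariance variety. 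A viable attack would have to abandon the $Q_{11}=I$ normalisation and work with the full solution set of \Cref{coro:alg_eq_invariance_sigma_transform}, or else aim to refute the conjecture outright in the $q=1$ case along the lines sketched above.
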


\subsection{Extra advantage of the explicit method: introducing extra constraint and parametric unidentifiability}\label{sec:trek_constraint_star_canonical}
In the linear Gaussian case, the class of O-distributions (distributions w.r.t.~observables) from the two data synthesis method are O-Markov indistinguishable, they can still differ in parameterization. 

\begin{definition}[canonical DAG]
	Given an ancestral graph $G$, where every bidirected edge $k\leftrightarrow l$ is replaced by a new vertex $i$ along with the directed edges $i\rightarrow k$ and $i\rightarrow l$.
	We denote this new DAG by $D(G)$, the canonical DAG with respect to an ancestral graph~\citep{spirtes2002}.
\end{definition}
Different DAGs can end up producing the same ancestral graph $G$ upon marginalization.
Let one of such a DAG be $D_1(G)$.
Although $D_1(G)$ and $D(G)$ (the canonical DAG as defined above) satisfy the same set of polynomial constraints among the covariances of the observed variables, they could still differ as models.
For instance, there could be certain polynomial inequalities (semi-algebraic constraints) that are satisfied by one of the DAGs and not by the other, illustrated in~\Cref{ex:para_un_identify}.
\begin{example}\label{ex:para_un_identify}
	Let $D_1$ be the DAG on 4 nodes with the edge set $\{1\rightarrow 2, 1\rightarrow 3, 1\rightarrow 4\}$.
	Upon marginalization of the vertex $1$, we obtain the ancestral graph $G(D_1;1)$ (we use this notation to indicate the ancestral graph via hiding variable $1$ from $D_1$) with vertices $2,3,4$ and bidirected edges $\{2\leftrightarrow 3, 2\leftrightarrow 4, 3\leftrightarrow 4\}$.
	Now, in order to obtain $D(G(D_1;1))$ (the canonical DAG), we add 3 new confounders $1,5$ and $6$, so that we get the DAG on 6 vertices with the edge set $\{1\rightarrow 2, 1\rightarrow 3, 5\rightarrow 3, 5\rightarrow 4, 6\rightarrow 2, 6\rightarrow 4\}$.
	Both DAGs do not satisfy any polynomial constraints in the covariances of the observed variables $2,3$ and $4$.

	Let $\omega_i:=\Omega_{ii}$, using the trek rule on $D_1$, we can explicitly state the covariances in terms of the error variances $\omega$ and structural coefficients $\Lambda$ (instead of $W$ in~\Cref{def:trek_monomial} to conform to~\Cref{eq:oeqbolambdaxi}) in the following way:
	\begin{eqnarray*}
		\Sigma_{23}&=&\omega_1\Lambda_{12}\Lambda_{13} \\
		\Sigma_{24}&=&\omega_1\Lambda_{12}\Lambda_{14} \\
		\Sigma_{34}&=&\omega_1\Lambda_{13}\Lambda_{14}.
	\end{eqnarray*}
	Similarly, for $D(G(D_1;1))$, we get:
	\begin{eqnarray*}
		\Sigma_{23}&=&\omega_1\Lambda_{12}\Lambda_{13} \\
		\Sigma_{24}&=&\omega_6\Lambda_{62}\Lambda_{64} \\
		\Sigma_{34}&=&\omega_5\Lambda_{53}\Lambda_{54}.
	\end{eqnarray*}
	As all the error variances are non-negative, we get the following inequality constraint for $D_1$: \[ \Sigma_{23}\Sigma_{24}\Sigma_{34}= \omega_1^3\Lambda_{12}^2\Lambda_{13}^2\Lambda_{14}^2 \geq 0, \] which is not necessarily true for $D(G(D_1;1))$ as \[ \Sigma_{23}\Sigma_{24}\Sigma_{34}= \omega_1 \omega_5 \omega_6 \Lambda_{12}\Lambda_{13}\Lambda_{62}\Lambda_{64}\Lambda_{53}\Lambda_{54}, \] could be negative for certain choice of the structural coefficients.
\end{example}

The above example shows that the corresponding models might differ even though they satisfy the same constraints on the covariances of the observed variables.
We do however, conjecture that even though the two models are different, the model obtained from $D_1$ is always contained in the one obtained from $D(G(D_1; 1))$.
This is because, one could potentially assign specific values to certain variance and edge parameters of $D(G(D_1; 1))$ to obtain the exact same parametrization as $D_1$.
For instance, in the above example, use superscript to distinguish the two DAGs, we could set
\begin{eqnarray*}
	\omega_1^{D(G(D_1; 1))}=\omega_5^{D(G(D_1; 1))}=\omega_6^{D(G(D_1; 1))}=\omega_1^{D_1} \text{ and} \\
	\lambda_{62}^{D(G(D_1; 1))}=\lambda_{12}^{D_1}, \lambda_{53}^{D(G(D_1; 1))}=\lambda_{13}^{D_1}, \\
	\lambda_{54}^{D(G(D_1; 1))}=\lambda_{64}^{D(G(D_1; 1))}=\lambda_{14}^{D_1}
\end{eqnarray*}
to obtain the same parametrization of $D_1$.

This implies even if the two DAGs discussed here are O-Markov equivalent, i.e., the set of CIs w.r.t.\ only observables are the same, their parametrization can be different, with one satisfying a semi-algebraic constraint but not the other.



\section{Other mathematical proofs}
We restate lemmas, corollaries, propositions, and theorems in the main text in this section, using the same numbering,
and append their proofs afterwards:

\LabelRestatableExplicitDGP*
\begin{proof}\label{proof:explicit_dgp}
	Define
	\[
		\epsilon_Y' = \Lambda\,\xi + \epsilon_O.
	\]
	Then
	\begin{align}
		\Omega' & = \mathbb{E}(\epsilon_Y'\,{\epsilon_Y'}^T) \nonumber                                                          \\[1mm]
		        & = \mathbb{E}\left( (\Lambda\,\xi + \epsilon_O)(\Lambda\,\xi + \epsilon_O)^T \right) \nonumber                 \\[1mm]
		        & = \Lambda\,\mathbb{E}(\xi\xi^T)\,\Lambda^T + \mathbb{E}(\epsilon_O\,\epsilon_O^T).\label{eq:corr_idio_decomp}
	\end{align}
	By \Cref{prop:explicit_always_psd}, the matrix $\Omega'$ is positive definite.
	Moreover, condition~\eqref{eq:constraint_mat_Lambda} ensures that $\Omega'$ has nonzero off-diagonal entries.
	Thus, the explicit representation~\eqref{eq:oeqbolambdaxi} is equivalent to the implicit formulation given by \Cref{eq:wplp,eq:corr_y} with a non-diagonal, positive definite $\Omega$.
        Note that the nonzero off-diagonal correspond to $\sum_u\Lambda_{v_1, u}\Lambda_{v_2, u}$.
\end{proof}

\LabelRestatableCoroExplicitDAG*
\begin{proof}\label{proof:coro_explicit_dag}
Note that the block of zeros in the lower left of $W'$ indicates that the confounders $\xi$ do not have parents in the observable subsystem and, thus, appear as root nodes.
	Moreover, the presence of $\Lambda$ as the upper-right block shows the direct influence of $\xi$ on $Y_O$, which, upon inducing the covariance $\Omega'$, accounts for the nonzero off-diagonals.
	This reformulation emphasizes that the DAG is, in fact, bipartite with $\xi$ as one partition and $Y_O$ (along with any additional noise variables) as the other.

	Under our convention, the entry $W_{i,j}$ corresponds to an edge directed from node $j$ into node $i$.
	Thus, having a zero block for the rows corresponding to $\xi$ indicates that $\xi$ has no incoming edges, i.e., it is a source node.
	Consequently, the full DAG that includes $\xi$ is bipartite, with $\xi$ in one partition and $Y_O$ (and $\epsilon_O$) in the other.
\end{proof}

\LabelRestableOnePlusPD*
\begin{proof}\label{proof:1plusexplicit_always_psd}
Diagonally dominant matrix is positive definite.
\end{proof}
\LabelRestatableExplicitPD*
\begin{proof}\label{proof_prop:explicit_always_psd}
	Since the components of $\xi$ are mutually independent, $\mathbb{E}(\xi\xi^T)$ is a positive diagonal matrix with the $i$th diagonal entry equal to $\operatorname{Var}(\xi_i)$.
	Hence,
	\[
		0 \prec \mathbb{E}(\xi\xi^T).
	\]
	By Sylvester's law of inertia~\citep[Theorem 4.3.7]{horn2012matrix}, any congruent transformation preserves definiteness.
	Therefore,
	\[
		\Lambda\,\mathbb{E}(\xi\xi^T)\,\Lambda^T \succ 0.
	\]
	Similarly, since the components of $\epsilon_O$ are mutually independent, $\mathbb{E}(\epsilon_O\,\epsilon_O^T)$ is a positive diagonal matrix with the $i$th diagonal entry given by $\operatorname{Var}([\epsilon_O]_i)$, so
	\[
		0 \prec \mathbb{E}(\epsilon_O\,\epsilon_O^T).
	\]
	Since the sum of two positive definite matrices remains positive definite, it follows that for every nonzero vector $x\in \mathbb{R}^{|J_O|}$,
	\[
		x^T \Lambda\,\mathbb{E}(\xi\xi^T)\,\Lambda^T\,x + x^T\mathbb{E}(\epsilon_O\,\epsilon_O^T)x > 0.
	\]
	Hence,
	\[
		\Omega = \Lambda\,\mathbb{E}(\xi\xi^T)\,\Lambda^T + \mathbb{E}(\epsilon_O\,\epsilon_O^T) \succ 0.
	\]
\end{proof}


\LabelRestatableTrekRule*
\begin{proof}\label{proof:trek_rule}
Since
\begin{align}
{(I-W)}^{-1} &= I + W + W^2 + \ldots \\
{(I-W)}^{-T} &= I + W^T + {(W^T)}^2 + \ldots
\end{align}
we have
\begin{equation}
  {(I-W)}^{-1}\Omega{(I-W)}^{-T} = \Omega + \sum_{r_1=0,r_2=0}^{\infty} W^{r_1}\Omega {(W^T)}^{r_2}
\end{equation}
The covariance between $u,v$ is given by 
\begin{align}
  \mathbb{E}[X_uX_v] &= \Omega_{u,v} + \sum_{r_1=0,r_2=0}^{\infty}{[W^{r_1}\Omega {(W^T)}^{r_2}]}_{u,v}\\
                     &=\Omega_{u,v} + \sum_{r_1=0,r_2=0}^{\infty}\sum_{t_1, t_2}{{[W^{r_1}]}_{u,t_1}\Omega_{t_1,t_2}{[{(W^T)}^{r_2}]}_{t_2,v}}\\
                     &=\Omega_{u,v} + \sum_{r_1=0,r_2=0}^{|V|}\sum_{t_1, t_2}{{[W^{r_1}]}_{u,t_1}\Omega_{t_1,t_2}{[{(W^T)}^{r_2}]}_{t_2,v}}
\end{align}
where the summation reduces from $\infty$ to maximum $v$ due to nilpotency of $W$, where $W^{|V|}=0$ indicates that no connection from a vertex to itself is allowed which ensures acyclicity of the graph. $W^{|V|}=0$ further implies that $W^{|V|}v=\lambda^{|V|}v=0$ so all eigenvalues of $W$ are $0$.

From variable $t_1$, a path travels $r_1$ steps to reach vertex $u$ and starting from $t_2$ a path lead to $v$ in $r_2$ steps. This is a trek from $u$ to $v$ with top $t_1, t_2$ being confounded if $t_1\neq t_2$.

The summation over all possible treks between $u,v$ is given by $\sum_{r_1=0,r_2=0}^{\infty}\sum_{t_1, t_2}$ can be written as 

\[\sum_{trek(u,v;k=(t_1,t_2)|D) \in T(u,v|D)} m_{trek(u,v;k=(t_1,t_2)|D)}\]

with
\begin{align}
  m_{trek(u,v;k=(t_1,t_2)|D)}&=\Omega_{t_1,t_2}\prod_{s_1, s_2\in~trek(u,v;k|D)} W_{s_1,s_2}\\
                   &={{[W^{r_1}]}_{u,t_1}\Omega_{t_1,t_2}{[{(W^T)}^{r_2}]}_{t_2,v}}
\end{align}
\end{proof}

\LabelRestatableLemmaAlterResidualParCorr*
\begin{proof}\label{proof:par_corr_exclude_one_var}
Let $Y_{\setminus i}$ denote the design matrix $Y$ with the $i^\text{th}$ column removed, and similarly let $Y_{\setminus \{i,j\}}$ denote the matrix $Y$ with both the $i^\text{th}$ and $j^\text{th}$ columns removed.
	Then, the residuals for the regression of $Y_i$ and $Y_j$ on $V \setminus \{i,j\}$ are given by
	\begin{align}
		\hat{\epsilon}_{i|i,j} & = Y_i - Y_{\setminus \{i,j\}}\Bigl({Y_{\setminus \{i,j\}}^T Y_{\setminus \{i,j\}}}\Bigr)^{-1}Y_{\setminus \{i,j\}}^T Y_i \\
                                       &=Y_i - Y_{\setminus \{i,j\}}\Bigl(\hat{\Sigma}_{\setminus\{i,j\}}\Bigr)^{-1}{\hat{\Sigma}}_{\setminus \{i,j\}; i} \\
                                       &=(I-P_{\setminus i,j})Y_i\\
		\hat{\epsilon}_{j|i,j} & = Y_j - Y_{\setminus \{i,j\}}\Bigl({Y_{\setminus \{i,j\}}^T Y_{\setminus \{i,j\}}}\Bigr)^{-1}Y_{\setminus \{i,j\}}^T Y_j\\
                                       &=Y_j - Y_{\setminus \{i,j\}}\Bigl(\hat{\Sigma}_{\setminus\{i,j\}}\Bigr)^{-1}{\hat{\Sigma}}_{\setminus \{i,j\}; j}\\
                                       &=(I-P_{\setminus i,j})Y_j\\
	\end{align}
        where
\begin{equation}
P_{\setminus i,j}=Y_{\setminus \{i,j\}}\Bigl({Y_{\setminus \{i,j\}}^T Y_{\setminus \{i,j\}}}\Bigr)^{-1}Y_{\setminus \{i,j\}}^T
\end{equation} is the symmetric projection operator onto the columns space of $Y_i$ (premultiply).

Projection operator has the property
\begin{equation}
(I-P_{\setminus i,j})(I-P_{\setminus i,j})=I-P_{\setminus i,j}
\end{equation}
due to $P_{\setminus i,j}P_{\setminus i,j}=P_{\setminus i,j}$.

        We have
        \begin{align}
&\hat{\epsilon}_{i|i,j}^T \hat{\epsilon}_{j|i,j}\nonumber\\
&=Y_i^T(I-P_{\setminus i,j})(I-P_{\setminus i,j})Y_j\\
&=Y_i^T(I-P_{\setminus i,j})Y_j
        \end{align}

	Similarly, the residual for the regression of $Y_i$ on $V \setminus \{i\}$ is
\begin{align}
		  & \hat{\epsilon}_{i|i}\nonumber                                                                                                                      \\
		= & Y_i-\hat{Y}_i^{V\setminus\{i\}}                                                                                                                    \\
		= & Y_i-{Y}_{\setminus i}{({Y}_{\setminus i}^T{Y}_{\setminus i})}^{-1}{Y}_{\setminus i}^TY_i                                                           \\
		= & Y_i-[{Y}_{\setminus \{i,j\}}, Y_j]{({[{Y}_{\setminus \{i,j\}}, Y_i]}^T{[{Y}_{\setminus \{i,j\}}, Y_i]})}^{-1}{[{Y}_{\setminus \{i,j\}}, Y_i]}^TY_i \\
		= & Y_i-[{Y}_{\setminus \{i,j\}}, Y_j]{\begin{bmatrix}
			                                        & {Y}_{\setminus \{i,j\}}^T{Y}_{\setminus \{i,j\}} & {Y}_{\setminus \{i,j\}}^TY_i \\
			                                        & Y_i^T{Y}_{\setminus \{i,j\}}                     & Y_i^TY_i
		                                       \end{bmatrix}}^{-1}{[{Y}_{\setminus \{i,j\}}, Y_i]}^TY_i \label{eq:project_only_1var}\\
		= & Y_i-[{Y}_{\setminus \{i,j\}}, Y_j]{\begin{bmatrix}
                                                                &{\hat{\Sigma}}_{\setminus \{i,j\}} & {\hat{\Sigma}}_{\setminus \{i,j\}; i} \\
                                                                & {\hat{\Sigma}}_{\setminus \{i,j\}; i}^T                     & \hat{\sigma}_i
                  \end{bmatrix}}^{-1}\begin{bmatrix}
                                                                &{\hat{\Sigma}}_{\setminus \{i,j\}; i}\\ &\hat{\sigma}_i
                  \end{bmatrix}\\
                  =&(I-P_{\setminus i})Y_i
	\end{align}
        where
        \begin{align}
          P_{\setminus i}&=Y_{\setminus i}\Bigl({Y_{\setminus i}^T Y_{\setminus i}}\Bigr)^{-1}Y_{\setminus i}^T\\          
                         &=
[{Y}_{\setminus \{i,j\}}, Y_j]{\begin{bmatrix}
			                                        & {Y}_{\setminus \{i,j\}}^T{Y}_{\setminus \{i,j\}} & {Y}_{\setminus \{i,j\}}^TY_i \\
			                                        & Y_i^T{Y}_{\setminus \{i,j\}}                     & Y_i^TY_i
                                                      \end{bmatrix}}^{-1}{[{Y}_{\setminus \{i,j\}}, Y_i]}^T\\
    &= 
[{Y}_{\setminus \{i,j\}}, Y_j]\nonumber\\
    &{\Bigl(L\begin{bmatrix}
    &Y_{\setminus \{i,j\}}^TY_{\setminus \{i,j\}} &0\\
      &0 &[(Y_{\setminus i}^TY_{\setminus i}) \Bigg/(Y_{\setminus \{i,j\}}^TY_{\setminus \{i,j\}})] \end{bmatrix} 
      L^T\Bigr)}^{-1}\nonumber\\
                     &{[{Y}_{\setminus \{i,j\}}, Y_i]}^T\\
                     &=[{Y}_{\setminus \{i,j\}}, Y_j]L^{-T}\nonumber\\
    &{\begin{bmatrix}
    &Y_{\setminus \{i,j\}}^TY_{\setminus \{i,j\}} &0\\
      &0 &([Y_{\setminus i}^TY_{\setminus i}) \Bigg/ (Y_{\setminus \{i,j\}}^TY_{\setminus \{i,j\}} )] \end{bmatrix} }^{-1}\nonumber\\
    &L^{-1}{[{Y}_{\setminus \{i,j\}}, Y_j]}^T\label{eq:operator_projection_setminus_i}
        \end{align}
with the Schur complement of $Y_{\setminus \{i,j\}}^TY_{\setminus \{i,j\}}$ in 
$Y_{\setminus i}^TY_{\setminus i}$ as 

\begin{align}
&[(Y_{\setminus i}^TY_{\setminus i}) \Bigg/ (Y_{\setminus \{i,j\}}^TY_{\setminus \{i,j\}})]\nonumber\\
=&Y_i^TY_i-(Y_i^TY_{\setminus\{i,j\}}){(Y_{\setminus \{i,j\}}^TY_{\setminus \{i,j\}})}^{-1}(Y^T_{\setminus\{i,j\}}Y_i)\\
=&Y_i^T(I-P_{\setminus i,j})Y_i
\end{align}

and
  \begin{align}
    L=\begin{bmatrix} 
      &I &0\\
      &Y_i^TY_{\setminus\{i,j\}}{\bigl(Y_{\setminus \{i,j\}}^TY_{\setminus \{i,j\}}\bigr)}^{-1} &I\end{bmatrix} 
\end{align}
Rewrite $L$ as
$$L=\begin{bmatrix} 
      &I &0\\
      &L_{21} &I\end{bmatrix} 
$$
then 
\begin{align}
  L^{-1}&=\begin{bmatrix} 
      &I &0\\
      &-L_{21} &I\end{bmatrix} \\
        &=\begin{bmatrix} 
      &I &0\\
      &-Y_i^TY_{\setminus\{i,j\}}{\bigl(Y_{\setminus \{i,j\}}^TY_{\setminus \{i,j\}}\bigr)}^{-1} &I\end{bmatrix} 
\end{align}
so
\begin{align}
&L^{-1}{[{Y}_{\setminus \{i,j\}}, Y_i]}^T\\
=&\begin{bmatrix} 
      &I &0\\
      &-Y_i^TY_{\setminus\{i,j\}}{\bigl(Y_{\setminus \{i,j\}}^TY_{\setminus \{i,j\}}\bigr)}^{-1} &I\end{bmatrix}
\begin{bmatrix}&Y_{\setminus\{i,j\}}^T\\
&Y_i^T\end{bmatrix}\\
=&\begin{bmatrix}&Y_{\setminus\{i,j\}}^T\\
&-Y_i^TY_{\setminus\{i,j\}}{\bigl(Y_{\setminus \{i,j\}}^TY_{\setminus \{i,j\}}\bigr)}^{-1}Y_{\setminus\{i,j\}}^T+Y_i^T
\end{bmatrix}\\
  =&\begin{bmatrix}&Y_{\setminus\{i,j\}}^T\\
&Y_i^T(I-P_{\setminus i,j})
\end{bmatrix}
\end{align}
Define
\begin{align}
&P_{(I-P_{\setminus i,j})Y_i}\nonumber\\
=&\bigl((I-P_{\setminus i,j})Y_i\bigr){\Bigl(Y_i^T(I-P_{\setminus i,j})Y_i\Bigr)}^{-1}Y_i^T(I-P_{\setminus i,j})\\
=&\bigl((I-P_{\setminus i,j})Y_i\bigr){\Bigl(Y_i^T(I-P_{\setminus i,j})(I-P_{\setminus i,j})Y_i\Bigr)}^{-1}Y_i^T(I-P_{\setminus i,j})
\end{align}

So~\Cref{eq:operator_projection_setminus_i} becomes
\begin{equation}
P_{\setminus i}=P_{\setminus i,j} + P_{(I-P_{\setminus i,j})Y_j}~\text{~\citep{antti1983connection}}
\end{equation}
then
\begin{align}
\hat{\epsilon}_{j|j}&=(I-P_{\setminus j})Y_j\\
                    &=(I-P_{\setminus i,j} - P_{(I-P_{\setminus i,j})Y_i})Y_j 
\end{align}
\begin{align}
\hat{\epsilon}_{i|i}&=(I-P_{\setminus i})Y_i\\
                    &=(I-P_{\setminus i,j} - P_{(I-P_{\setminus i,j})Y_j})Y_i
\end{align}

\begin{align}
&\hat{\epsilon}^T_{i|i}\hat{\epsilon}_{j|j}\\
=&Y_i^T(I-P_{\setminus i,j} - P_{(I-P_{\setminus i,j})Y_j})(I-P_{\setminus i,j} - P_{(I-P_{\setminus i,j})Y_i})Y_j\\
=&Y_i^T(I-P_{\setminus i,j})Y_j + Y_i^TP_{(I-P_{\setminus i,j})Y_j}P_{(I-P_{\setminus i,j})Y_i}Y_j - \\
 &Y_i^T(I-P_{\setminus i,j})P_{(I-P_{\setminus i,j})Y_i}Y_j-Y_i^TP_{(I-P_{\setminus i,j})Y_j}(I-P_{\setminus i,j})Y_j
\end{align}

Now the additive factors of $Y_i^T(I-P_{\setminus i,j} - P_{(I-P_{\setminus i,j})Y_j})(I-P_{\setminus i,j} - P_{(I-P_{\setminus i,j})Y_i})Y_j$:

\begin{align}
&Y_i^TP_{(I-P_{\setminus i,j})Y_j}P_{(I-P_{\setminus i,j})Y_i}Y_j\\
=&Y_i^T\bigl((I-P_{\setminus i,j})Y_j\bigr){\Bigl(Y_j^T(I-P_{\setminus i,j})(I-P_{\setminus i,j})Y_j\Bigr)}^{-1}Y_j^T(I-P_{\setminus i,j})\nonumber\\
 &\bigl((I-P_{\setminus i,j})Y_i\bigr){\Bigl(Y_i^T(I-P_{\setminus i,j})(I-P_{\setminus i,j})Y_i\Bigr)}^{-1}Y_i^T(I-P_{\setminus i,j})Y_j \nonumber\\
=&(\hat{\epsilon}_{i|i,j}^T\hat{\epsilon}_{j|i,j})\hat{\epsilon}_{j|i,j}^T\hat{\epsilon}_{i|i,j}(\hat{\epsilon}_{i|i,j}^T\hat{\epsilon}_{j|i,j})/(\hat{\epsilon}_{i|i,j}^T\hat{\epsilon}_{i|i,j} \hat{\epsilon}_{j|i,j}^T\hat{\epsilon}_{j|i,j} )
\end{align}

\begin{align}
&Y_i^T(I-P_{\setminus i,j})P_{(I-P_{\setminus i,j})Y_i}Y_j\\
=&Y_i^T(I-P_{\setminus i,j})\bigl((I-P_{\setminus i,j})Y_i\bigr){\Bigl(Y_i^T(I-P_{\setminus i,j})(I-P_{\setminus i,j})Y_i\Bigr)}^{-1}Y_i^T(I-P_{\setminus i,j})\nonumber\\
 &Y_j\nonumber\\
=&Y_i^T(I-P_{\setminus i,j})Y_j~\text{(last 4 factors, rest cancels to identity)}\\
=&\hat{\epsilon}_{i|i,j}^T\hat{\epsilon}_{j|i,j} 
\end{align}

\begin{align}
&Y_i^TP_{(I-P_{\setminus i,j})Y_j}(I-P_{\setminus i,j})Y_j\\
=&Y_i^T\bigl((I-P_{\setminus i,j})Y_j\bigr){\Bigl(Y_j^T(I-P_{\setminus i,j})(I-P_{\setminus i,j})Y_j\Bigr)}^{-1}Y_j^T(I-P_{\setminus i,j})\nonumber\\
 &(I-P_{\setminus i,j})Y_j\nonumber\\
=&Y_i^T(I-P_{\setminus i,j})Y_j~\text{(first 4 factors, rest cancels to identity)}\\
=&\hat{\epsilon}_{i|i,j}^T\hat{\epsilon}_{j|i,j} 
\end{align}

so
\begin{align}
&\hat{\epsilon}^T_{i|i}\hat{\epsilon}_{j|j}\\
=&\hat{\epsilon}_{i|i,j}^T\hat{\epsilon}_{j|i,j} +(\hat{\epsilon}_{i|i,j}^T\hat{\epsilon}_{j|i,j})\hat{\epsilon}_{j|i,j}^T\hat{\epsilon}_{i|i,j}(\hat{\epsilon}_{i|i,j}^T\hat{\epsilon}_{j|i,j})/(\hat{\epsilon}_{i|i,j}^T\hat{\epsilon}_{i|i,j} \hat{\epsilon}_{j|i,j}^T\hat{\epsilon}_{j|i,j} )
 - \\
 &\hat{\epsilon}_{i|i,j}^T\hat{\epsilon}_{j|i,j} - \hat{\epsilon}_{i|i,j}^T\hat{\epsilon}_{j|i,j} 
\end{align}
This lead to the conclusion stated in~\Cref{eq:def_partial_corr_alt}, see also~\cite{antti1983connection}.
\end{proof}

\LabelRestatableOmegaInvSpectrum*
\begin{proof}\label{proof:inv_omega_diag_dom_spectral_radius}
	According to~\citet{varah1975lower}, if $\Delta_i(\Omega) > 0$ as in~\Cref{def:diag_dom}, then
	\begin{equation}
		\|\Omega^{-1}\|_{\infty} \le \max_{1\le i\le |V|}\frac{1}{\Delta_i(\Omega)},
	\end{equation}
	where the matrix infinity norm for the precision matrix $\Omega^{-1}$ of size $|V|$ is defined as
	\begin{equation}
		\|\Omega^{-1}\|_{\infty} = \max_{x \neq 0} \frac{\|\Omega^{-1}x\|_{\infty}}{\|x\|_{\infty}} = \max_{1\le i\le |V|}\sum_{j=1}^{|V|}\bigl|[\Omega^{-1}]_{i,j}\bigr|.
	\end{equation}

	The Gershgorin disks for $\Omega^{-1}$ are given by
	\[
		D\Bigl([\Omega^{-1}]_{ii}, \sum_{j\neq i} \bigl|[\Omega^{-1}]_{ij}\bigr|\Bigr),
	\]
	and since
	\[
		\sum_{j\neq i} \bigl|[\Omega^{-1}]_{ij}\bigr| \le \max_{1\le i\le |V|}\frac{1}{\Delta_i(\Omega)} - \bigl|[\Omega^{-1}]_{ii}\bigr|,
	\]
	we conclude that the spectral radius of $\Omega^{-1}$ is bounded by $\max_i \frac{1}{\Delta_i(\Omega)}$.
	In the special case of~\Cref{coro:omega_psd} where $\Delta_i(\Omega) > 1$, we have that the right-hand bound is less than $1$, implying
	\[
		\lambda(\Omega^{-1}) < 1.
	\]
\end{proof}

\LabelLemmaPartialCorrPrecision*
\begin{proof}\label{proof:par_corr_precision}
\newcommand{\myscalemat}{S}
Let the design matrix be $X=[X_1, \cdots, X_p]$, with observed covariates $\frac{1}{n(n-1)}X^TX$ following Wishart distribution with scale matrix $\myscalemat$.

Treat $X_i$ as a basis vector for the vector space of dimension $n$ where $n$ is the number of observations, following~\citet{partial_corr_precision_dual}, define inner product
\begin{equation}
<X_i,X_j>:=\myscalemat_{i,j} 
\end{equation}
Define
\begin{equation}
  \hat{\beta}_{i,\cdot}=\arg\min_{\beta_{i,j}}||X_i-\sum_{j\neq i}\beta_{i,j}X_{j}||
\end{equation}

Corresponding to 
\begin{equation}
\hat{\beta}_{i,\cdot}=(X_{-i}^TX_{-i})^{-1}X_{-i}^TX_i
\end{equation}
We have
\begin{equation}
||{X}_i-\sum_{j\neq i}\beta_{i,j}{X}_{j}||=\myscalemat_{i,i}+{(\sum_{j\neq i}\beta_{i,j}X_j)}^2-2\sum_{j\neq i}\beta_{i,j}\myscalemat_{i,j}
\end{equation}
with minimum at 
\begin{align}
  <2(\sum_{j\neq i}\hat{\beta}_{i,j}X_j),X_j>-2S_{i,j}&=0\\
  \sum_{k\neq i}\hat{\beta}_{i,k}S_{k,j}-S_{i,j}&=0
\end{align}

Define $X_{i,\circ}$ and $X_i^{*}$ via
\begin{equation}
  X_i-\sum_{j\neq i}\hat{\beta}_{i,j}X_{j}:=X_{i,\circ}:=\lambda_iX_i^{*}, \forall i\label{eq:residual}
\end{equation}
we have
\begin{align}
  <X_i^{*}, X_j>&:=\frac{1}{\lambda_i}<X_i-\sum_{k\neq i}\hat{\beta}_{i,k}X_{k}, X_j
>\\
                &=\frac{1}{\lambda_i}(\myscalemat_{i,j}-\sum_{k\neq i}\hat{\beta}_{i,k}\myscalemat_{k,j})\\
                &=\frac{1}{\lambda_i}(\myscalemat_{i,j}-S_{i,j})\\
                &=0~\forall j\neq i
                \end{align}
via defining
\begin{equation}
<X_i^{*}, X_i>=1
\end{equation}
We have
\begin{equation}
<X_i^{*}, X_j>=\delta_{i,j} \label{eq:bracket_delta}
\end{equation}

According to~\Cref{lemma:par_corr_alter_residual}, define the partial correlation between variable indexed $i$ and $j$ among other variables as
\begin{align}\label{eq:partial_corr}
\rho_{i,j|\cdot}&:=-\frac{<X_{i,\circ},X_{j,\circ}>}{\sqrt{<X_{i,\circ},X_{i,\circ}><X_{j,\circ},X_{j,\circ}>}} \\
                &=-\frac{<X_{i}^{*},X_{j}^{*}>}{\sqrt{<X_{i}^{*},X_{i}^{*}><X_{j}^{*},X_{j}^{*}>}} \end{align}

                Rewrite~\Cref{eq:residual} as
\begin{align}
X_i^{*}&= \sum_{j=1}^{|V|}\beta^{'}_{i,j}X_j\\
\beta^{'}_{i,j}&=-\frac{\hat{\beta}_{i,j}}{\lambda_i}~\forall j\neq i\\
\beta^{'}_{i,i}&=\frac{1}{\lambda_i}
\end{align}

Rewrite~\Cref{eq:bracket_delta} as
\begin{align}
\delta_{i,k}=<X_i^{*},X_k>&=\sum_{j=1}^{|V|}\beta^{'}_{i,j}<X_j,X_k>\\
                          &=\sum_{j=1}^{|V|}\beta^{'}_{i,j}\myscalemat_{j,k}\label{eq:delta_i_k_s_j_k}
\end{align} 

then in matrix form,
\begin{align}\label{eq:ieqbc}
\begin{bmatrix}  
  \delta_{1,1} & \cdots & \delta_{1,|V|} \\
  \vdots       & \ddots & \vdots \\
  \delta_{|V|,1} & \cdots & \delta_{|V|,|V|}
\end{bmatrix}=I={\myscalemat}^{-1}\myscalemat 
\end{align}
thus
\begin{equation}\myscalemat^{-1}=\begin{bmatrix}&\beta^{'}_{1,1}, &\cdots, &\beta^{'}_{1, |V|}\\ 
&\vdots, &\ddots, &\vdots\\ 
&\beta^{'}_{|V|,1}, &\cdots, &\beta^{'}_{|V|, |V|}\end{bmatrix}
\end{equation} 
is the scaled precision matrix.

We also have
\begin{align}
  <X_i^{*}, X_j^{*}>&=<\sum_{j=1}^{|V|}\beta^{'}_{i,j}X_j, \sum_{k=1}^{|V|}\beta^{'}_{i,k}X_k>\\
                    &=\sum_{j=1}^{|V|}\sum_{k=1}^{|V|}\beta^{'}_{i,j}\myscalemat_{j,k}\beta^{'}_{i,k}\\
                    &=\sum_{k=1}^{|V|}\delta_{i,k}\beta^{'}_{i,k}~\Cref{eq:delta_i_k_s_j_k}\\
                    &=\beta^{'}_{i,i}
\end{align}

So~\Cref{eq:partial_corr} can be rewritten as
\begin{align}
\rho_{i,j|\cdot}&=-\frac{<X_{i}^{*},X_{j}^{*}>}{\sqrt{<X_{i}^{*},X_{i}^{*}><X_{j}^{*},X_{j}^{*}>}} \\
                &=-\frac{{[\beta^{'}_{i,j}]}}{\sqrt{{\beta^{'}}_{i,i}{\beta}^{'}_{j,j}}}\\
                &=-\frac{{[\myscalemat^{-1}]}_{i,j}}{\sqrt{{[\myscalemat^{-1}]}_{i,i}{[\myscalemat^{-1}]}_{j,j}}}
\end{align}
\end{proof}

\LabelLemmaInvSigmaSpectrum*
\begin{proof}\label{proof:inv_Sigma_spectrum}
The inverse of an invertible congruent transform is still in congruent transform form. 
\begin{align}\label{eq:congruent_inv}
\Sigma^{-1}={[{(I-W)}^{-1}\Omega {(I-W)}^{-T}]}^{-1}={(I-W)}^T\Omega^{-1}(I-W)
\end{align}
The l.h.s.~is with operator ${(I-W)}^{-T}$, the r.h.s.~is with operator $I-W$.

According to Ostrowski's Theorem~\citep{ostrowski1959quantitative} which generalized the Sylvester's law of inertia (congruent transform perserves the number of positive, negative and vanishing eigenvalues), the eigenvalues of the congruent transform is bounded by the congruent transform operator.
\begin{align}
\lambda_k(\Sigma^{-1})=\lambda_k({(I-W)}^T\Omega^{-1}(I-W))&=\theta_k \lambda_k(\Omega^{-1})\\
\lambda_{\min}({(I-W)}^T(I-W)) \le \theta_k &\le \lambda_{\max}({(I-W)}^T(I-W))
\end{align}
The eigenvalues of ${(I-W)}^T(I-W)$ are the squares of the singular values of ${(I-W)}^T$.

Let $v(W, \lambda(W))$ be eigenvector of $W$ corresponding to eigenvalue $\lambda(W)$, then
\begin{align}\label{eq:}
(I-W)v(W,\lambda(W))&=(1-\lambda(W))v(W, \lambda(W)) \\   
{(I-W)}^{-1}v(W, \lambda(W))&=\frac{1}{1-\lambda(W)}v(W, \lambda(W))\\
                            &\text{series decomposintion of}~{(I-W)}^{-1}\nonumber
\end{align}

Since 
\begin{equation}
\det\left(W^{T} - \lambda I\right) = \det\left({(W - \lambda I)}^{T}\right)  = \det (W - \lambda I)\end{equation}

and 
\begin{equation}
\lambda(L)=0
\end{equation} (eigenvalues of a triangular matrix are on the diagonal)

we have
\begin{align}
  \lambda(W)&=\lambda(W^T)=\lambda(PLP^T)=0\\
  Wv(W, \lambda(W))&=PLP^Tv(W, \lambda(W))=0\\
  L\left(P^Tv(W, \lambda(W)\right)&=Lv^{'}(W, \lambda(W))=0\\
  0^T=[Lv^{'}(W, \lambda(W))]^T&=[v^{'}(W, \lambda(W))]^TL^T=0\\ W^Tv(W, \lambda(W))&=PL^TP^Tv(W, \lambda(W))\nonumber\\
                     &=PL^Tv^{'}(W, \lambda(W))\nonumber\\
  &=P([v^{'}(W, \lambda(W))]^TL)^T=0^T=0\label{eq:tranpose_dag_adj_anilate_eigv}
\end{align}

Since
\begin{align}
  {(I-W)}^T(I-W)&=(I-W^T)(I-W)=I-W-W^T+W^TW\\
                &=I-W-W^T+(PL^TP^T)PLP^T\\
                &=I-W-W^T+PL^2P^T\\
                &=I-W-W^T
\end{align}
then
\begin{align}
  {[{(I-W)}^T (I-W)]} v(W, \lambda(W))&=(I-W-W^T)v(W, \lambda(W))\\
                                      &=v(W, \lambda(W))
\end{align}

Are there other eigenvalues of ${(I-W)}^T(I-W)$ other than $1$?

\begin{equation}
\det({(I-W)}^T(I-W)-\lambda I)=\det\left((1-\lambda)I-W-W^T\right)=0
\end{equation}

Since at $\lambda=1$, $\det\left((1-\lambda)I-W-W^T\right)=0=det(W+W^T)$

Let 
\begin{equation}
\lambda^{'}=\lambda-1
\end{equation}

$W+W^T+\lambda^{'}I=\bar{W}+\lambda^{'} I$ is a perturbation to $\bar{W}=W+W^T$ at the diagonal.

$\bar{W}+\lambda^{'} I$ has characteristic polynomial $\det(\bar{W}+\lambda^{'} I-xI)=\det(\bar{W}-(x-\lambda^{'})I)$

$x=0$ is a solution to $\det(\bar{W}-(x-\lambda^{'})I)=0$ where $x$ represent the eigenvalues of $\bar{W}+\lambda^{'} I$.

since 
\begin{equation}
{\det(\bar{W}-(x-\lambda^{'})I)}|_{x=0}= 0 = \det(\bar{W}+\lambda^{'}I)
\end{equation}

$\lambda^{'}$ is an eigenvalue of $\bar{W}$.

so 

\begin{equation}
\lambda^{'}=\lambda(W+W^T)=0=\lambda - 1
\end{equation}

so $\lambda=\lambda(W+W^T)+1$ is also eigenvalue of ${(I-W)}^T(I-W)$.

so 
\begin{align}
  0=\lambda_n({(I-W)}^T(I-W)) \le \theta_k &\le \lambda_1({(I-W)}^T(I-W))\nonumber\\
  &=\lambda_{\max}(W+W^T)+1
\end{align}

i.e.
\begin{equation}
\theta_k \le \rho\left(W+W^T\right) + 1
\end{equation}
\end{proof}

\LabelRestatebleLemmaPrecisionDiagLb*
%
\begin{proof}\label{lemma:lower_bound_precision_mat_diag_vif_proof}
\newcommand{\vifSigma}{\Sigma}
Partition the correlation matrix as 
\begin{equation}
\Sigma = \left[\begin{matrix}\Sigma_{1..(p-1)} & \Sigma_{:, p}\\{\Sigma_{:, p}}^T & \Sigma_{pp}
\end{matrix}\right]
\end{equation}
Using block matrix inversion formula, 
\begin{equation}
  {[\Sigma^{-1}]}_{p,p}={(\Sigma_{p,p}-{\Sigma_{:, p}}^T {\Sigma_{1..(p-1)}}^{-1}\Sigma_{:, p})}^{-1}\label{eq:precision_pp}
\end{equation}
Note that~\Cref{eq:precision_pp} is scalar equation, which results in
\begin{align}
  \Sigma_{p,p}-\frac{1}{\Sigma^{-1}_{p,p}}&={\Sigma_{:, p}}^T {\Sigma_{1..(p-1)}}^{-1}\Sigma_{:, p}\label{eq:sigma_pp_minus_vif}
\end{align}

Let the first $p-1$ variables regress on $X_p$, then
\begin{equation}
X_p=[X_1, \ldots, X_{p-1}]\hat{\beta}
\end{equation}

The least-squares estimate of $\beta$ is given by 
\begin{align}
\hat{\beta}&={({[X_1, \ldots, X_{p-1}]}^T{[X_1, \ldots, X_{p-1}]})}^{-1}{[X_1, \ldots, X_{p-1}]}^TX_p\\
           &={[\Sigma_{1..(p-1)}]}^{-1}{[X_1, \ldots, X_{p-1}]}^TX_p\\
           &={[\Sigma_{1..(p-1)}]}^{-1}{\Sigma_{:,p}}
\end{align}
corresponding to
\begin{align}\label{eq:}
\hat{X_p}&=[X_1, \ldots, X_{p-1}]{[\Sigma_{1..(p-1)}]}^{-1}{\Sigma_{:,p}}
\end{align}

Let $S$ be any symmetric matrix, $v$ be a matrix, then
\begin{equation}
v^TS^{-1}v = v^T(S^{-1}S)S^{-1}v=v^T(S^{-T}S)S^{-1}v={(S^{-1}v)}^TS(S^{-1}v)\label{eq:vif_plus_minus_trick}
\end{equation},see also~\citep{proof_diag_inv_corr}.
Then take 
\begin{equation}
S=\Sigma_{1..(p-1)}, v=\Sigma_{:,p}
\end{equation}
in~\Cref{eq:vif_plus_minus_trick},~\Cref{eq:sigma_pp_minus_vif} can be rewritten as
\begin{align}
\Sigma_{p,p}-\frac{1}{\Sigma^{-1}_{p,p}}&={\Sigma_{:, p}}^T {\Sigma_{1..(p-1)}}^{-1}\Sigma_{:, p}\label{eq:sigma_pp_minus_vif_2}\nonumber\\
                                      &=
                                      {({\Sigma_{1..(p-1)}}^{-1}{\Sigma_{:,p}})}^T\Sigma_{1..(p-1)}({\Sigma_{1..(p-1)}}^{-1}{\Sigma_{:,p}})\\
                                      &={\hat{\beta}}^T\Sigma_{1..(p-1)}\hat{\beta}\\
                                      &={\hat{\beta}}^T({X_{1..(p-1)}}^TX_{1..(p-1)})\hat{\beta}\\
                                      &=\hat{X_p}^T\hat{X_p}\\
                                      &=Var(\hat{X_p})\\
                                      &=Cov(\hat{\beta},X_p)\\
                                      &\ge 0
\end{align}
so
\begin{equation}
\Sigma_{p,p}\ge \frac{1}{\Sigma^{-1}_{p,p}}
\end{equation}
or 
\begin{equation}
\Sigma_{p,p}^{-1}\ge \frac{1}{\Sigma_{p,p}}
\end{equation}

\end{proof}

\LabelRestatableLemmaUbDiagSigmaOmega*
\begin{proof}\label{proof_lemma:ub_diag_Sigma_omega}
Rewrite~\Cref{eq:corr_y} as
\begin{align*}
\Sigma & :=\mathbb{E}(YY^T) = {(I-W)}^{-1}\Omega {(I-W)}^{-T}
\end{align*}
which is a congruent transformation of $\Omega$ with operator ${(I-W)}^{-T}$.
since 
\begin{align}
{(I-W)}^{-1} = I+W+W^2+\ldots
\end{align}
thus
\begin{align}
  {[{(I-W)}^{-1}]}_{i,i}&=1+W_{i,i} + \sum_k W_{i,k}W_{k,i}+\sum_{k,q}W_{i,k}W_{k,q}W_{q,i}+\cdots\nonumber\\
                        &=1
\end{align}
According to~\Cref{lemma:spectrum_radius_omega_diag_dom}, we have
\begin{align*}
\max_i |\Omega_{i,i}| \le \rho(\Omega)\le 2\max_i |\Omega_{i,i}|
\end{align*}
using Ostrowski's theorem, we have
\begin{align}
\max_i |\Omega_{i,i}| \le \rho(\Sigma)\le 2\max_i |\Omega_{i,i}|
\end{align}
With Gersgorin circle theorem, we have
\begin{align}
\Sigma_{i,i} \le 2\max_i |\Omega_{i,i}|
\end{align}
\end{proof}

\LabelRestatableTheoremParCorr*
\begin{proof}\label{proof_thm:spectrum_par_corr}
According to~\Cref{eq:expression_partial_corr_matrix}, the partial correlation matrix could be written as
\[\tilde{R}=-diag(\sqrt{{[\Sigma^{-1}]}^{-1}_{i,i}})\Sigma^{-1}diag(\sqrt{{[\Sigma^{-1}]}^{-1}_{i,i}})\]
In~\Cref{coro:spectrum_radius_sigma_inv}, we have the spectrum radius of $\Sigma^{-1}$ is $(d_{\max} \max({W}_{i,j})+1)\max_i\frac{1}{\Delta_i(\Omega)}$. 

According to Ostrowski's theorem, the spectrum range of $\tilde{R}$ is restricted by the spectrum range of $\Sigma^{-1}$ with a factor $\theta$ bounded by the spectrum of $diag(\sqrt{{[\Sigma^{-1}]}^{-1}_{i,i}})$:
\begin{equation}
  \min \sqrt{{[\Sigma^{-1}]}^{-1}_{ii}}\le \theta \le \max \sqrt{{[\Sigma^{-1}]}^{-1}_{ii}}\label{eq:theta_bound_ostrowski_precision_mat}
\end{equation}
note 
\begin{equation}
\max \sqrt{{[\Sigma^{-1}]}^{-1}_{ii}} = \frac{1}{\min \sqrt{{[\Sigma^{-1}]}_{ii}}}
\end{equation}
According to~\Cref{lemma:lb_precision_diag_bt_1_over_sigma_diag_vif}
\begin{equation}
{[\Sigma^{-1}]}_{ii}>\frac{1}{\Sigma_{ii}}
\end{equation}
we have
\begin{equation}
\min \sqrt{{[\Sigma^{-1}]}_{ii}}>\min\frac{1}{\sqrt{{[\Sigma]}_{ii}}}
\end{equation}
which implies
\begin{equation}
\frac{1}{\min \sqrt{{[\Sigma^{-1}]}_{ii}}}<\min \sqrt{{[\Sigma]}_{ii}}
\end{equation}
Thus~\Cref{eq:theta_bound_ostrowski_precision_mat} could be rewritten as
\begin{equation}
\min \sqrt{{[\Sigma^{-1}]}^{-1}_{ii}} \le \theta \le \min \sqrt{{[\Sigma]}_{ii}}
\end{equation}
with~\Cref{lemma:ub_diag_Sigma_omega}, we have
\begin{equation}
  \min \sqrt{{[\Sigma^{-1}]}^{-1}_{ii}} \le \theta \le \min \sqrt{{[\Sigma]}_{ii}} \le 2\max_i |\Omega_{i,i}|
\end{equation}
\end{proof}

\end{document}